\theoremstyle{plain}
\newtheorem{theorem}{Theorem}[section]
\theoremstyle{definition}
\newtheorem{definition}[theorem]{Definition}
\newtheorem{assumption}[theorem]{Assumption}
\theoremstyle{remark}
\newcommand{\E}{\mathbb{E}}
\newcommand{\R}{\mathbb{R}}
\newcommand{\N}{\mathbb{N}}
\newcommand{\prob}{\mathbb{P}}
\newcommand*\circledred[1]{%
\tikz[baseline=(char.base)]{
  \node[shape=circle, draw=BrickRed!60, fill=BrickRed!10, thick, inner sep=1pt] (char) {\scriptsize\textsf{#1}};
}}
\def\maketag@@@#1{\hbox{\m@th\normalfont\normalsize#1}}
\title{Treatment effect estimation for optimal decision-making}
\newcommand{\footsep}{\,}
\author{%
  Dennis Frauen\thanks{LMU Munich} \footsep
               \thanks{Munich Center for Machine Learning (MCML)} \footsep
               \thanks{Corresponding author: \texttt{frauen@lmu.de}}%
  \And
  Valentyn Melnychuk\footnotemark[1] \footsep \footnotemark[2]%
  \And
  Jonas Schweisthal\footnotemark[1] \footsep \footnotemark[2]%
  \AND                      
  Mihaela van der Schaar\thanks{University of Cambridge} \footsep
                        \thanks{UCLA}%
  \And
  Stefan Feuerriegel\footnotemark[1] \footsep \footnotemark[2]%
}
\begin{document}

\maketitle

\begin{abstract}
Decision-making across various fields, such as medicine, heavily relies on conditional average treatment effects (CATEs). Practitioners commonly make decisions by checking whether the estimated CATE is positive, even though the decision-making performance of modern CATE estimators is poorly understood from a theoretical perspective. In this paper, we study \emph{optimal} decision-making based on {two-stage} CATE estimators (e.g., DR-learner), which are considered state-of-the-art and widely used in practice. We prove that, while such estimators may be optimal for estimating CATE, they can be \emph{suboptimal when used for decision-making}. Intuitively, this occurs because such estimators prioritize CATE accuracy in regions far away from the decision boundary, which is ultimately irrelevant to decision-making. As a remedy, we propose a novel two-stage learning objective that retargets the CATE to balance CATE estimation error and decision performance. We then propose a neural method that optimizes an adaptively-smoothed approximation of our learning objective. Finally, we confirm the effectiveness of our method both empirically and theoretically. In sum, our work is the first to show two-stage CATE estimators can be adapted for optimal decision-making.
\end{abstract}

\section{Introduction}\label{sec:intro}

Data-driven decision-making across various fields, such as medicine \cite{Glass.2013}, public policy \cite{Angrist.1990, Kuzmanovic.2024}, and marketing \cite{Varian.2016}, relies on understanding how treatments affect different individuals and groups. This heterogeneity in the treatment effect across individuals is typically quantified through the \emph{conditional average treatment effect (CATE)}. Then, a common approach from practice to obtain decisions from a CATE is \textbf{thresholding}: \emph{individuals with positive CATE receive treatment, while those with negative CATE do not} \cite{FernandezLoria.2022c}. For example, in medicine, clinicians typically administer treatments to the subset of patients who are expected to benefit from the intervention \cite{Kraus.2024}.



However, despite being widely used in practice, the optimality properties of such a thresholding approach for decision-making are unclear. We argue that minimizing estimation error and optimizing for decision-making performance are inherently different objectives. Existing literature acknowledges the distinction between CATE estimation error and decision-making performance \cite{FernandezLoria.2022b} and draws connections between the two in specific situations \cite{Bonvini.2023, FernandezLoria.2022c, FernandezLoria.2024}. Nevertheless, the optimality of decision-making based on modern CATE estimators remains unclear, and approaches are missing for how to improve thresholding-based decision rules. 



In this paper, \emph{we study optimal decision-making based on two-stage CATE estimators}. Two-stage estimators are state-of-the-art methods for CATE estimation and include orthogonal estimators such as the DR-learner \cite{vanderLaan.2006b, Kennedy.2023b}. We show theoretically that, while these methods may be optimal for CATE estimation, \emph{they can lead to suboptimal decisions when combined with a thresholding approach}, particularly when the class of possible CATE estimators is restricted.

\begin{wrapfigure}{r}{0.4\textwidth}
  \centering
  \vspace{-0.4cm}
  \includegraphics[width=1\linewidth]{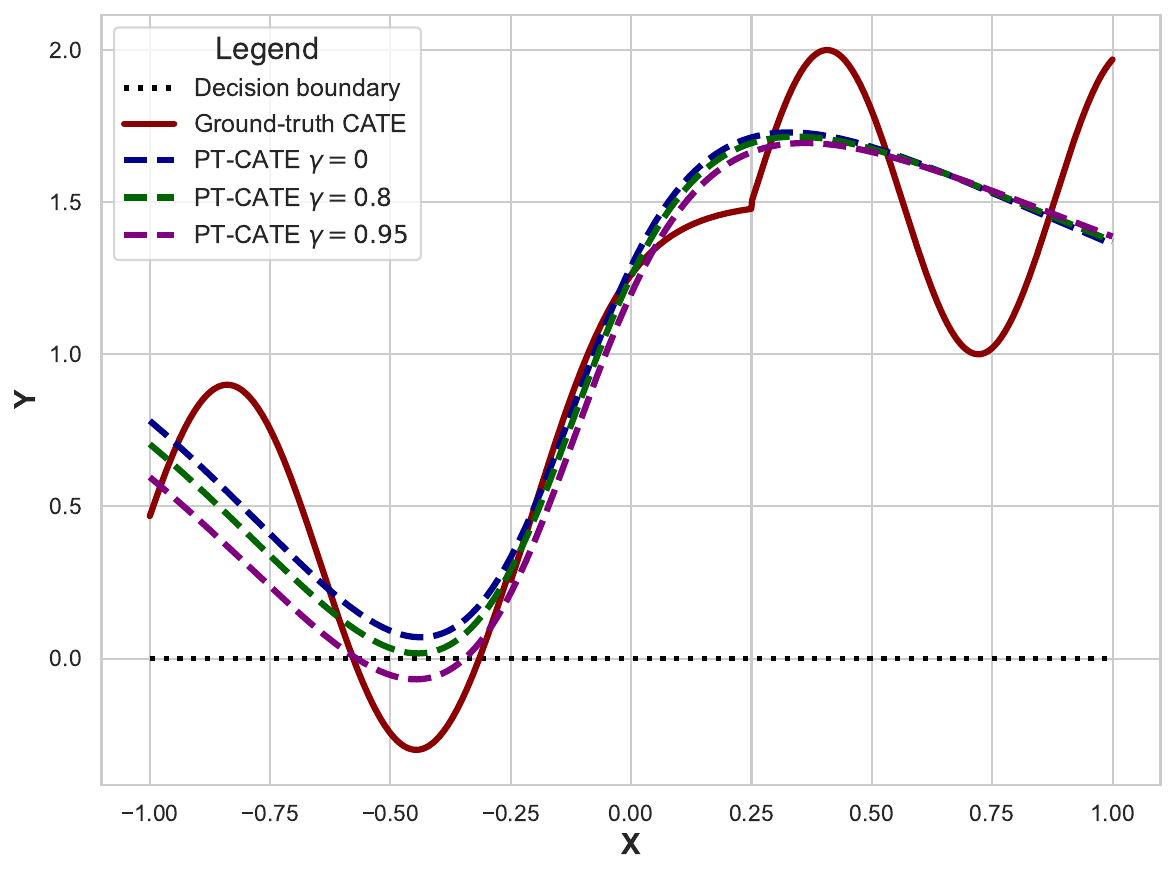}
  \caption{\textbf{Illustrative example showing the suboptimality of CATE estimation for decision-making.} The dotted lines show regularized two-stage CATE estimators. The \textcolor{NavyBlue}{blue} line corresponds to standard two-stage CATE estimation, while the \textcolor{ForestGreen}{green} and \textcolor{violet}{violet} lines are generated by our method. The parameter $\gamma$ quantifies the trade-off between CATE estimation error and decision-making performance. Details are in Sec.~\ref{sec:experiments}.}
  \label{fig:toy_example_2}
  \vspace{-0.4cm}
\end{wrapfigure}

\textbf{Intuition:} \emph{Why are CATE estimators suboptimal for decision-making?} Fig.~\ref{fig:toy_example_2} shows the results of one of our experimental setups (details in Sec.~\ref{sec:experiments}), where the ground-truth CATE (\textcolor{BrickRed}{red line}) is positive everywhere, except for a small region of the covariate space. An optimal policy is one that administers the treatment everywhere, except for that region (because the treatment is harmful in that region). Further, we show three different CATE estimators: the estimator in \textcolor{NavyBlue}{blue} ($\gamma = 0$) achieves the lowest CATE estimation error, but yields the wrong policy in the region of negative CATE. In contrast, the estimators shown in \textcolor{violet}{violet} (generated by our method that we propose later) is preferred for decision-making: it sacrifice a small amount of CATE accuracy to yield a better downstream decision performance, so that the decisions coincide with thresholding the ground-truth CATE.


To address the shortcomings of the thresholding approach from above, we propose \emph{a novel second-stage learning objective that retargets CATE to balance CATE estimation error and decision performance}. By doing so, we retarget the CATE to a new estimand which we call \emph{policy-targeted CATE (PT-CATE)}. Our PT-CATE can still be approximately interpreted as a CATE, while leading to superior policies. We further propose a neural method to optimize our objective and estimate the corresponding PT-CATE. Our method follows a three-stage procedure: first, we learn a neural network to estimate CATE; second, we learn a separate neural network to identify regions where incorrect decisions are made; and finally, we adjust the first neural network to improve decision-making in these problematic regions.

Our \textbf{contributions}\footnote{Code is available at \url{https://anonymous.4open.science/r/CATEForPolicy-C67E}.} are: (i)~We show when state-of-the-art, two-stage CATE estimators can be suboptimal for decision-making. (ii)~We develop a novel two-stage learning objective that effectively balances CATE accuracy and decision performance. For this, introduce a new estimand for policy-targeted CATE estimation (PT-CATE). (iii)~~We propose a neural method for our learning objective with theoretical guarantees and empirically demonstrate its ability to effectively trade-off CATE estimation error and decision-making performance.

\section{Related work\protect\footnote{An extended literature review is in Appendix~\ref{app:rw}.}}\label{sec:rw}


\textbf{CATE estimation.} Methods for CATE estimation can be broadly categorized into (i)~\emph{model-based} and (ii)~\emph{model-agnostic} approaches. (i)~Model-based methods propose specific models, such as regression forests \cite{Hill.2011, Wager.2018} or tailored neural networks \cite{Johansson.2016, Shalit.2017} for CATE estimation. (ii)~Model-agnostic methods (also called \emph{meta-learners}) are ``recipes'' for constructing CATE estimators that can be combined with arbitrary machine learning models (e.g., neural networks) \cite{Kunzel.2019,Curth.2021}. 

Meta-learners often follow a two-stage approach to account for the fact that the CATE is often structurally simpler than its components (response functions) \cite{Kennedy.2023b}. Prominent examples of two-stage meta-learners include the RA/X-learner~\cite{Kunzel.2019}, IPW-learner~\cite{Curth.2021}, and the DR-learner~\cite{vanderLaan.2006b, Kennedy.2023b}. The DR-learner has the additional advantage of being Neyman-orthogonal and thus being provably robust against estimation errors from the first-stage regression \cite{Chernozhukov.2018, Foster.2023}. In this paper, we show that such two-stage learners can be suboptimal for decision-making and propose a method to improve them.

\textbf{(Direct) Off-policy learning (OPL).} The goal in OPL is to directly learn an optimal policy by maximizing the so-called \emph{policy value}. Approaches for estimating the policy value from data follow three primary approaches: (i)~the direct method (DM)~\cite{Qian.2011} leverages estimates of the response functions; (ii)~inverse propensity weighting (IPW)~\cite{Swaminathan.2015} re-weights the data such that they resemble samples under the evaluation policy; and (iii)~the doubly robust method (DR)~\cite{Dudik.2011, Athey.2021} combines both.


Recent work has focused on enhancing finite-sample performance through techniques such as reweighting \cite{Kallus.2018, Kallus.2021b} and targeted maximum likelihood estimation \cite{Bibaut.2019}. Further, extended versions have been developed for specific scenarios, including distributional robustness \cite{Kallus.2022}, fairness considerations \cite{Frauen.2024}, and continuous treatments \cite{Kallus.2018d, Schweisthal.2023}. 

However, OPL is \emph{different} from our work: OPL aims to \emph{directly learn} a policy, thus bypassing the need to estimate a CATE for decision-making. While this approach can optimize decision-making performance, it often does so at the expense of making black-box decisions that are not based on treatment effects. In contrast, our work prioritizes the interpretability inherent in CATE-based methods while leveraging insights from OPL to improve decision-making performance. This distinction is particularly relevant in fields like medicine, where the effectiveness of treatments is often evaluated using CATEs, and the CATEs are then used to guide interpretable clinical decisions \cite{Feuerriegel.2024}.


\textbf{CATE estimation vs. decision-making.} In practice, it is common to use CATE estimators for decision-making through thresholding \cite[e.g.,][]{FernandezLoria.2022c,Kraus.2024}. Yet, few works have formally studied the effectiveness of this approach. One literature stream discusses the suboptimality of estimating CATE for decision-making as compared with outcome/ response function modeling \cite{FernandezLoria.2022b, FernandezLoria.2022c, FernandezLoria.2024}. In this context, Zou et al.~\cite{Zou.2022} study a continuous treatment setting for counterfactual prediction and propose to re-weight the loss with the inverse magnitude of the treatment effect. However, these works do \underline{not} address two-stage meta-learners and further do \underline{not} offer practical methods for improving CATE estimators in decision-making.

Relatedly, Bonvini et al.~\cite{Bonvini.2023} establish minimax-optimality results on the OPL performance of thresholded two-stage meta-learners, but only under certain assumptions (e.g., assuming that the second-stage model is well-specified).  In contrast, we allow for the misspecification of second-stage models (e.g., by overregluarizing) and instead show in such cases that two-stage learners may be suboptimal. Finally, Kamran et al.~\cite{Kamran.2024} considers treatment effect estimation for optimal ranking under resource constraints while we consider thresholding, i.e., treating everyone that benefits from treatment. Hence, \cite{Kamran.2024} considers a different metric and does not make statements about thresholded CATE optimality.

\textbf{Research gap:} To the best of our knowledge, we are the first to show the suboptimality of two-stage meta-learners for decision-making along with proposing novel methods for improving decision performance. Our work thus bridges a critical gap between the theoretical understanding and the practical application of CATE estimation for decision-making.

\section{Problem setup}\label{sec:prob_setup}

\subsection{Setting}

\textbf{Data:} We consider a standard causal inference setting with a population $Z = (X, A, Y)\sim \mathbb{P}$, where $X \in \mathcal{X} \subseteq \R^d$ are observed pre-treatment covariates, $A \in \{0, 1\}$ is a binary treatment (or action), and $Y \in \R$ is an outcome (or reward) of interest that is observed after the treatment $A$. We assume that we have access to a dataset (either randomized or observational) $\mathcal{D} = \{(x_i, a_i, y_i)\}_{i=1}^n$ of size $n \in \N$ sampled i.i.d. from $\mathbb{P}$. For example, in a medical setting, $X$ are patient covariates, $A$ is a medical drug, and $Y$ is a health outcome (e.g., blood pressure). Another example is logged data of A/B tests in marketing, where $X$ are user demographics, $A$ is a binary decision of whether a coupon was given, and $Y$ is some reward such as user engagement.

\textbf{Notation.} We define the \emph{response functions} as $\mu_a(x) = \E[Y \mid X = x, A = a]$ for $a \in \{0, 1\}$ and the \emph{propensity score} (behavioural policy) as $\pi_b(x) = \prob(A = 1 \mid X = x)$. We refer to these functions as \emph{nuisance functions}, denoted by $\eta = (\mu_1, \mu_0, \pi_b)$. A \emph{policy} is any function $\pi \colon \mathcal{X} \to [0, 1]$ that maps an individual with covariates $x \in \mathcal{X}$ to a probability $\pi(X)$ of receiving treatment. 

\textbf{Identifiability:} We use the potential outcomes framework \cite{Rubin.1974} and denote $Y(a)$ as the potential outcome corresponding to a treatment intervention $A = a$. The potential outcomes are not directly observed, which means that we have to impose assumptions to identify any estimands from data.

\begin{assumption}[Standard causal inference assumptions]\label{ass:causal} For all $a \in \{0,1\}$ and $x \in \mathcal{X}$ it holds: (i)~\emph{consistency}: $Y(a) = Y$ whenever $A = a$; (ii)~\emph{overlap}: $0 < \pi_b(x) < 1$ whenever $\prob(X=x)>0$; and (iii)~\emph{ignorability}: $A\perp Y(1), Y(0)\mid X=x$.
\end{assumption}

Assumption~\ref{ass:causal} is standard in the causal inference literature \cite{vanderLaan.2006, Curth.2021}. (i)~Consistency prohibits interference between individuals;  (ii)~overlap ensures that both treatments are observed for each covariate value; and (iii)~ignorability excludes unobserved confounders that affect both the treatment $A$ and the outcome $Y$. Note that (ii) and (iii) are usually fulfilled in randomized experiments, which fall within our setting.
 
\subsection{Mathematical preliminaries}

\textbf{Policy value.} The decision-making performance of a policy $\pi$ is usually quantified via its \emph{policy value}. Formally, the policy value is defined via
\begin{equation}
    V(\pi) = \E[Y(\pi(X))] = \E[\pi(X) Y(1) + (1 - \pi(X)) Y(0)]  = \E[\pi(X) \mu_1(X) + (1 - \pi(X)) \mu_0(X)].
\end{equation}
Under Assumption~\ref{ass:causal}, it is identified via
$V(\pi) = \E[\pi(X) \mu_1(X) + (1 - \pi(X)) \mu_0(X)]$,
and thus can be estimated from the available data. 

\textbf{CATE.} We define the \emph{conditional average treatment effect} (CATE) as
\begin{equation}
    \tau(x) = \E(Y(1) - Y(0) \mid X = x] = \mu_1(x) - \mu_0(x),
\end{equation}
where identifiability in terms of response functions $\mu_a(x)$ follows again from Assumption~\ref{ass:causal}. The CATE captures heterogeneity in the treatment effect across individuals characterized by $X$. 

\textbf{CATE estimation.} A straightforward approach to estimating the CATE is the so-called plug-in approach. Here, we first obtain estimators for the response functions $\hat{\mu}_1$ and $\hat{\mu}_0$ (which are standard regression tasks) and then obtain a CATE estimator via $\hat{\tau}_\mathrm{PI}(x) = \hat{\mu}_1(x) - \hat{\mu}_0(x)$.

However, it is well known that the plug-in approach is suboptimal as it suffers from so-called plug-in bias \cite{Kennedy.2023}. In contrast, state-of-the-art approaches for CATE estimation are based on the following two-stage principle: in stage~1, one obtains estimators $\hat{\eta}$ of the nuisance functions $\eta$, and, in stage~2, one estimates the CATE directly via a second-stage regression
\begin{equation}\label{eq:cate_est_2stage}
 \hat{\tau}_\mathcal{G} = \arg\min_{g \in \mathcal{G}}  \mathcal{L}_{\hat{\eta}}(g),
\end{equation}
where $\mathcal{G}$ is some function class and $\mathcal{L}_\eta(g)$ is a (population) second-stage loss. 


Two-stage CATE meta-learners offers two key advantages: First, they enable to directly incorporate constraints on the CATE estimate, such as fairness requirements \cite{Nabi.2018, Kim.2023} or interpretability conditions \cite{Tschernutter.2022}. Second, it leverages the inductive bias that the CATE structure is typically simpler than its constituent response functions, making direct estimation more effective \cite{Kennedy.2023b}.



\textbf{Direct OPL}: One approach for obtaining an optimal policy $\pi^\ast$ is direct off-policy learning (OPL): here, we directly maximize the estimated policy value and solve
$\pi^\ast_\textrm{OPL} = \arg\max_{\pi \in \Pi}  \hat{V}(\pi)$
for some estimator $\hat{V}(\pi)$ of $V(\pi)$ over a prespecified class of policies $\Pi$. An advantage of the OPL approach is that it directly optimizes for decision-making performance. However, the obtained $\pi^\ast$ is a black-box policy and may be hard to provide with meaningful interpretation.

\textbf{CATE-based OPL.} Another common approach, which we focus on in this paper, is to use the CATE $\tau(x)$ can for decision-making by \emph{thresholding} \cite{FernandezLoria.2022c}. The approach has two steps. First, the CATE $\hat{\tau}$ is estimated via e.g., a two-stage meta-learner. Second, the CATE-based policy is obtained via $\pi_{\hat{\tau}}(x) = \mathbf{1}(\hat{\tau}(x) > 0)$. The treatment is thus only applied to individuals with a positive CATE (=~individuals for which the treatment helps on average). To see why this is a valid approach note that we can write the policy value as
\begin{align}\label{eq:pv_cate}
    V(\pi) = \E[\pi(X) (\mu_1(X) - \mu_0(X)) + \mu_0(X)] \propto \E[\pi(X) \tau(X)], 
\end{align}
where $\propto$ denotes equivalence up to a constant (irrelevant to maximization). Hence, the (ground-truth) thresholded CATE policy $\pi_{{\tau}}(x)$ maximizes the policy value if $\pi_{{\tau}} \in \Pi$. 

A benefit of CATE-based policy learning is that the CATE provides individualized estimates of the incremental benefits from treatment. Unlike direct OPL methods, which yield black-box policies optimized solely for overall performance, the CATE $\tau(X)$ explicitly allows to quantify the net gain from treatment. As a result, CATE-based policy allows to compare the estimated treatment effects against domain knowledge. Further, CATE-based policy learning enables practitioners to weigh the benefits against potential side effects when making treatment decisions, a critical consideration in domains like personalized medicine. 

\subsection{Research questions}\label{sec:rqs}

In this paper, we study the optimality of CATE-based policy learning when the CATE estimator $\hat{\tau}_\mathcal{G}$ is obtained via a second-stage regression over a function class $\mathcal{G}$ (as in Eq.~\eqref{eq:cate_est_2stage}). More formally: 
\begin{tcolorbox}[colback=red!5!white,colframe=red!75!black]
\circledred{1} \emph{Do two-stage estimators $\hat{\tau}_\mathcal{G}$ yield policies $\pi_{\hat{\tau}}$ that maximize the policy value $V(\pi)$ among thresholded policies $\pi \in \Pi_\mathcal{G} = \{\mathbf{1}(g > 0) \mid g \in \mathcal{G}\}$?}
\end{tcolorbox}

If $\tau \in \mathcal{G}$ (i.e., $\mathcal{G}$ contains the ground-truth CATE), optimality (in population) of $\pi_{\hat{\tau}}$ is guaranteed by Eq.~\eqref{eq:pv_cate}. However, in two-stage CATE estimation, $\mathcal{G}$ is often restricted such that $\tau \notin \mathcal{G}$. This occurs, for instance, when fairness or interpretability constraints are imposed \cite{Tschernutter.2022, Kim.2023}, or when regularization is applied to smooth the second stage model \cite{Kennedy.2023b}. In this setting, we later show in Sec.~\ref{sec:motivation} that there can exist policies $\pi \in \Pi_\mathcal{G}$ with $V(\pi) > V(\pi_{\hat{\tau}})$. In other words, thresholding a two-stage CATE estimator may \emph{not} yield an optimal policy, \emph{even} when the policy class is restricted in an analogous manner. This leads to our second research question, where we seek a policy that achieves (i)~a low CATE estimation error and (ii)~a good decision performance:

\begin{tcolorbox}[colback=red!5!white,colframe=red!75!black]
\circledred{2} \emph{How can we learn a function $g \in \mathcal{G}$ that satisfies two key properties}: (i)~$g \approx \tau$ ($g$ is a good approximation of the CATE), and (ii)~$\pi_g(x) = \mathbf{1}(g(x) > 0)$ is approximately optimal, that is, 
$V(\pi_g) \approx V(\pi^\ast_\mathcal{G})$, where $\pi^\ast_\mathcal{G}$ is an optimal policy among the class $\Pi_\mathcal{G}$?
\end{tcolorbox}

\section{Retargeting CATE for decision-making}\label{sec:method}

We now answer both research questions from Sec.~\ref{sec:rqs}. First, in Sec.~\ref{sec:motivation}, we show the suboptimality of two-stage CATE estimators for decision-making when $\tau \notin \mathcal{G}$. Then, in Sec.~\ref{sec:learning_objective}, we propose a new learning objective that balances CATE estimation error and policy value. Finally, in Sec.~\ref{sec:learning_objective_nuisances} and Sec.~\ref{sec:learning_alg}, we propose a two-stage learning algorithm and provide theoretical guarantees.

\subsection{Suboptimality of CATE for decision-making}\label{sec:motivation}

\begin{figure*}[ht]
 \centering
\includegraphics[width=0.29\linewidth]{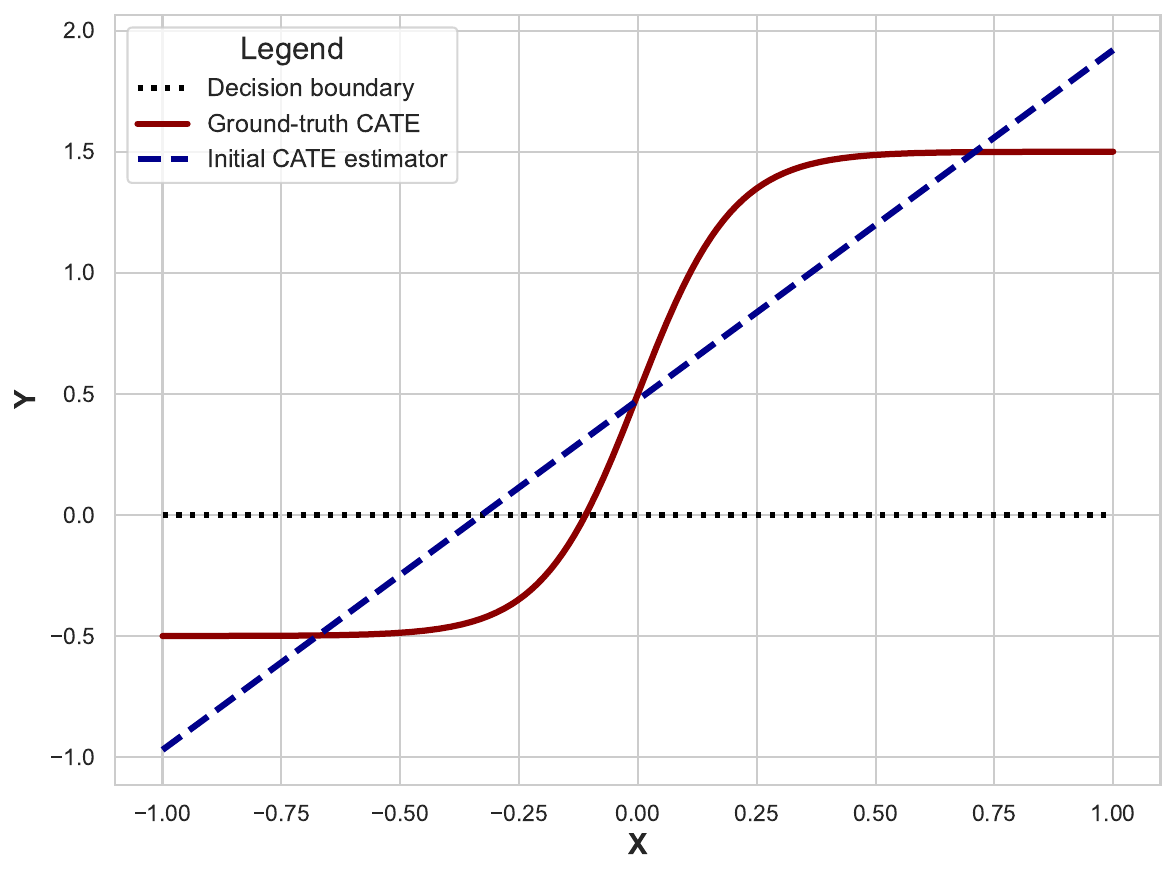}
\includegraphics[width=0.29\linewidth]{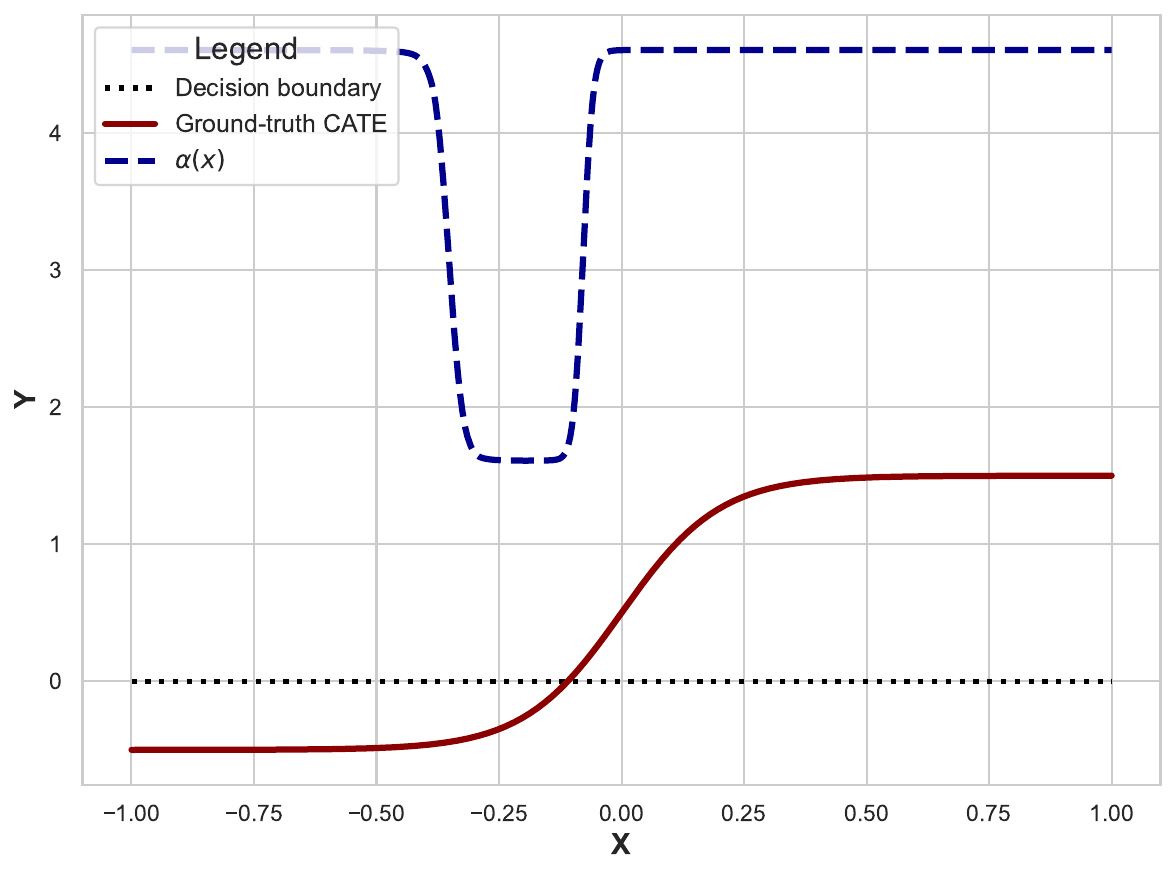}
\includegraphics[width=0.29\linewidth]{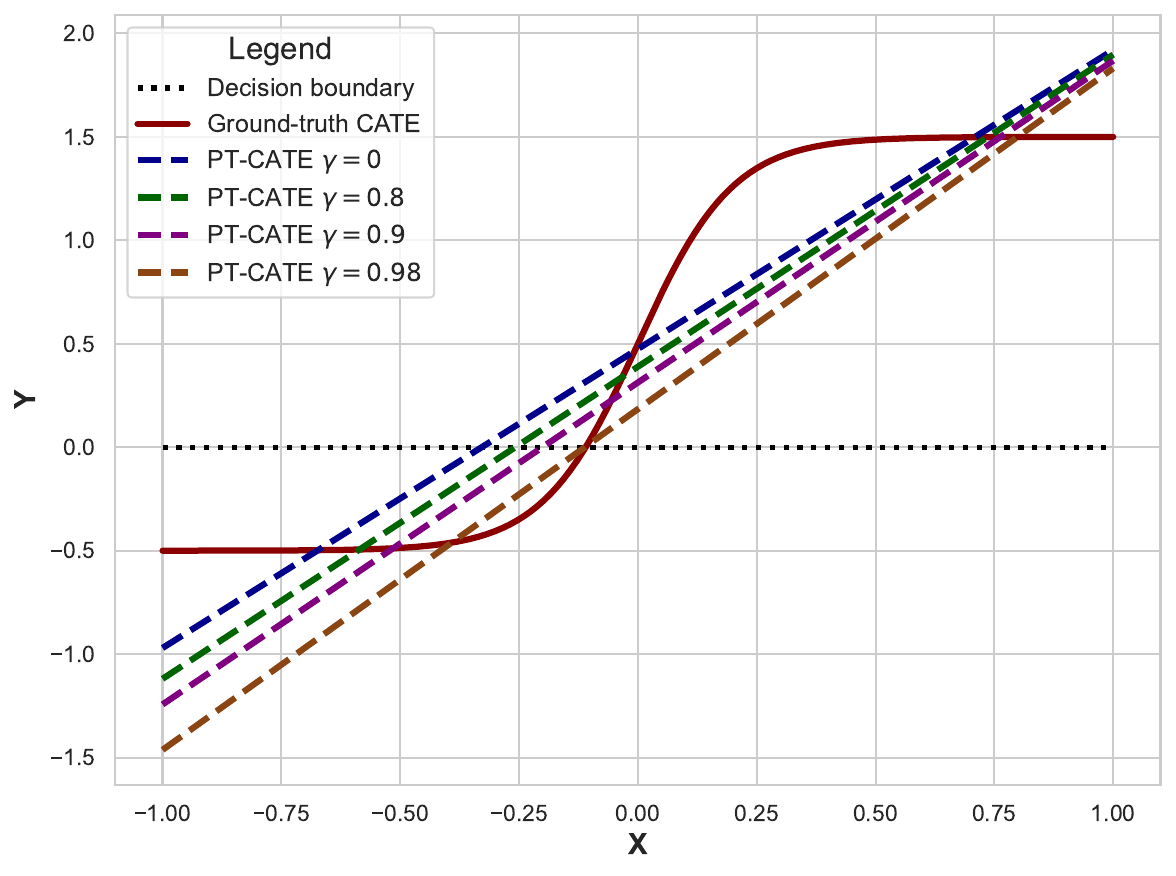}
\caption{\textbf{Experimental results for our proposed method with $\mathcal{G}$ being the class of linear models.} \emph{Left:} CATE estimator (\textcolor{NavyBlue}{blue}) is the best linear approximation of the (nonlinear) ground-truth CATE (\textcolor{NavyBlue}{red}). \emph{Center:} the trained $\alpha(X)$ detects the region in which the estimated CATE has the wrong sign. \emph{Right:} retargeted CATE estimators using our proposed loss with trained $\alpha(X)$ and different $\gamma$ values.}
\label{fig:toy_example_1}
\vspace{-0.3cm}
\end{figure*}

To provide an intuition on why two-stage CATE estimators can be suboptimal for decision-making, we first consider a toy example illustrated in Fig.~\ref{fig:toy_example_1} (left). Here, we examine a two-stage CATE estimator with one-dimensional covariates $X$ and $\mathcal{G} = \{g(x) = ax + b \mid a, b \in \R\}$ being the class of linear functions. Hence, the policy class we consider is $\Pi_\mathcal{G} = \{\mathbf{1}(ax + b > 0)\}$, which represents the class of thresholded linear policies. The ground-truth CATE is nonlinear so that $\tau \notin \mathcal{G}$.

We make two key observations: (i)~The optimal policy $\pi^\ast = \arg\max_{\pi \in \Pi_\mathcal{G}} V(\pi)$ assigns a treatment in the region of the covariate space where ground-truth CATE is positive, but no treatment in the region where it is negative. This is equivalent to thresholding the ground-truth CATE (represented by the \textcolor{BrickRed}{red line}). (ii) The optimal linear approximation to the CATE is $g^\ast \in \arg\min_{g \in \mathcal{G}} \E[(\tau(X) - g(X))^2]$ (represented by the \textcolor{NavyBlue}{blue line}). Note that the blue line does not intersect the $x$-axis at the same point as the true CATE. As a result, there exists a region where the policy $\pi_{g^\ast}(x) = \mathbf{1}(g^\ast(x) > 0)$ makes the wrong treatment decision. Thus, the policy $\pi_{g^\ast}$ is suboptimal, i.e., $V(\pi^\ast) > V(\pi_{g^\ast})$. The optimal policy is instead obtained by thresholding a linear function that intersects the $x$-axis at the same point as the ground-truth CATE.

To generalize the above example, we derive the following theorem. We denote $V_{\tau}(\pi) = \E[\tau(X) \pi(X)]$ to note the dependency of the policy value on the underlying ground-truth $\tau$.

\begin{theorem}[Suboptimality of CATE-based decision-making]\label{thrm:suboptimality}
Let $\mathcal{G}$ be a class of neural networks
with fixed architecture. Then, there exists a CATE $\tau^\ast \notin \mathcal{G}$, so that, for any optimal CATE approximation $g^\ast_{\tau^\ast} \in \arg\min_{g \in \mathcal{G}} \E[(\tau^\ast(X) - g(X))^2]$, it holds that
$V_{\tau^\ast}(\pi_{g^\ast_{\tau^\ast}}) < V_{\tau^\ast}(\pi^\ast_{\tau^\ast})$,
for any optimal policy $\pi^\ast_{\tau^\ast} \in \arg\max_{\pi \in \Pi_\mathcal{G}} V_{\tau^\ast}(\pi)$.
\end{theorem}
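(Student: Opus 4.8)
The goal is to exhibit a single CATE function $\tau^\ast \notin \mathcal{G}$ for which every best-$L^2$ approximation $g^\ast$ in $\mathcal{G}$ induces a strictly suboptimal thresholded policy. The strategy is a construction-plus-separation argument. First, I would fix a one-dimensional (or low-dimensional) sub-cube of $\mathcal{X}$ and design $\tau^\ast$ to be a function that the fixed-architecture network class $\mathcal{G}$ cannot represent but can only approximate --- concretely, a function with a sharp ``kink'' or steep transition near a chosen point $x_0$, together with a small ``dip'' region where $\tau^\ast$ is negative while being positive elsewhere, mirroring Fig.~\ref{fig:toy_example_1}. The key structural fact I need is that for a class of neural networks with \emph{fixed} architecture, there exist target functions $\tau^\ast$ that lie strictly outside $\mathcal{G}$ (the class is not dense and not closed under the relevant operations), so the approximation error $\inf_{g\in\mathcal{G}}\E[(\tau^\ast - g)^2]$ is bounded away from zero; moreover the minimizer (or any near-minimizer) must ``spend'' its limited capacity matching the bulk of the mass and therefore gets the sign wrong on a set of positive probability near $x_0$.

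\textbf{Key steps, in order.} (1) Choose the covariate distribution $\prob_X$ (say uniform on $[0,1]^d$, or supported on a finite grid) so that all expectations are well-defined and we have freedom to place probability mass where convenient. (2) Construct $\tau^\ast$ explicitly: let it be positive on most of the space with a narrow interval $I = (x_0 - \delta, x_0 + \delta)$ on which $\tau^\ast$ dips below zero, and make the transition into and out of $I$ sufficiently steep (higher-frequency / sharper than anything $\mathcal{G}$ can fit) while keeping $\|\tau^\ast\|_\infty$ modest. (3) Show $\tau^\ast \notin \mathcal{G}$: since $\mathcal{G}$ has fixed architecture (hence bounded complexity, e.g.\ a bounded number of linear pieces if ReLU, or bounded Lipschitz-type control relative to the width/depth), a sufficiently sharp $\tau^\ast$ cannot be matched exactly, so $\varepsilon^\ast := \inf_{g\in\mathcal{G}}\E[(\tau^\ast-g)^2] > 0$. (4) Show that \emph{any} $g^\ast$ attaining (or within $\varepsilon^\ast$ of) this infimum must have $g^\ast < 0$ on a set $S$ of positive $\prob_X$-measure where $\tau^\ast > 0$, \emph{or} $g^\ast > 0$ on part of $I$: the point is that to keep the $L^2$ error small, $g^\ast$ is forced to interpolate smoothly through the kink, which pushes its zero-crossing away from $\tau^\ast$'s zero-crossing, exactly as in the linear toy example. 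This can be made rigorous by a compactness/continuity argument: the map $g \mapsto \E[(\tau^\ast-g)^2]$ attains its min on $\mathcal{G}$ (or its closure), and at the minimizer the sign pattern differs from $\operatorname{sign}(\tau^\ast)$ on a positive-measure set by construction. (5) Conclude: $V_{\tau^\ast}(\pi_{g^\ast}) = \E[\tau^\ast(X)\,\mathbf{1}(g^\ast(X)>0)] < \E[\tau^\ast(X)\,\mathbf{1}(\tau^\ast(X)>0)]$, because flipping the decision on any positive-measure set where $\operatorname{sign}(g^\ast)\neq\operatorname{sign}(\tau^\ast)$ strictly loses value; and since $\mathbf{1}(\tau^\ast>0)$ need not itself be in $\Pi_\mathcal{G}$, I instead compare against $\pi^\ast_{\tau^\ast} = \arg\max_{\pi\in\Pi_\mathcal{G}} V_{\tau^\ast}(\pi)$, which by definition has value at least that of the policy induced by the ``corrected'' linear/network function that shares $\tau^\ast$'s zero-crossing, and this corrected function lies in $\mathcal{G}$ (it is a simple affine-type reshaping that the architecture can express). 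Hence $V_{\tau^\ast}(\pi^\ast_{\tau^\ast}) > V_{\tau^\ast}(\pi_{g^\ast})$.

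\textbf{Main obstacle.} The delicate point is step (4) combined with the last half of step (5): I must ensure simultaneously that (a) the $L^2$-optimal $g^\ast$ provably gets the sign wrong on a positive-measure set --- not merely that it \emph{could}, but that \emph{every} minimizer does --- and (b) there genuinely exists a competitor policy in $\Pi_\mathcal{G}$ that does strictly better, i.e.\ the fixed architecture is rich enough to express \emph{some} thresholded function whose zero set aligns with $\tau^\ast$'s. Point (a) requires quantifying the trade-off curve of $\mathcal{G}$: I would handle it by choosing $\tau^\ast$ so that the ``capacity cost'' of bending $g$ to hit the true zero-crossing forces a large $L^2$ penalty elsewhere, making such $g$ non-optimal --- essentially arranging that the $L^2$-projection and the value-maximizer are distinct elements of $\mathcal{G}$. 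A clean way to avoid fighting the general geometry of neural nets is to first prove the statement for a concrete simple architecture (e.g.\ a single ReLU unit, or a two-piece linear network, generalizing the linear toy example verbatim), verify $\tau^\ast \notin \mathcal{G}$ there, and then note the construction is robust: one can always choose $\tau^\ast$ adversarially relative to \emph{whatever} fixed architecture is given, since a fixed architecture has a fixed, finite description length and thus a target function of strictly higher complexity always exists. The risk is that for some pathological architecture the $L^2$-projection happens to coincide with the value-optimal element; ruling this out uniformly is where most of the careful work lies, and I would address it by perturbing $\tau^\ast$ within an open set of targets and using a measure-theoretic genericity argument to show the coincidence is non-generic.
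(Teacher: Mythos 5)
Your plan follows the same broad strategy as the paper---construct an adversarial $\tau^\ast$ whose best $L^2$ approximation misclassifies the sign on a positive-measure set---but the two claims you yourself flag as the ``main obstacle'' are exactly the ones your proposal does not close, and they are the entire content of the theorem. First, your step (4): you need that \emph{every} minimizer $g^\ast \in \arg\min_{g\in\mathcal{G}}\E[(\tau^\ast(X)-g(X))^2]$ gets the sign wrong on a set of positive measure. Your justification (``the capacity cost of bending $g$ to hit the true zero-crossing forces a large $L^2$ penalty elsewhere'') is a quantitative statement about the geometry of an \emph{arbitrary} fixed neural architecture, and neither the single-ReLU special case nor the proposed ``measure-theoretic genericity'' perturbation delivers it: the theorem demands one bad $\tau^\ast$ for \emph{whatever} architecture is given, and a generic-position argument over targets does not control where the $L^2$ projection onto a nonconvex, non-closed class places its zero set. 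Second, your step (5): you need a competitor $\pi\in\Pi_\mathcal{G}$ with strictly larger value, and you assert that the ``corrected'' function ``lies in $\mathcal{G}$ (it is a simple affine-type reshaping that the architecture can express)''---but nothing about a fixed architecture guarantees that a sign-correct element of $\mathcal{G}$ exists for your hand-built $\tau^\ast$; if none does, $\pi_{g^\ast}$ could already be value-optimal within $\Pi_\mathcal{G}$ and the strict inequality fails.

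The paper resolves both points simultaneously with an extremal argument rather than an explicit construction. It considers the set $\mathcal{S}_\mathcal{G}$ of step functions whose best $L^2$ approximation in $\mathcal{G}$ is sign-correct everywhere, sets $M=\sup_{\tau\in\mathcal{S}_\mathcal{G}}\|\tau\|_\infty$, picks a near-extremal $\tau_{n_\epsilon}\in\mathcal{S}_\mathcal{G}$, and inflates it on the interval where it is near its sup-norm so that the resulting $\tau^\ast$ satisfies $\|\tau^\ast\|_\infty>M$. Then $\tau^\ast\notin\mathcal{S}_\mathcal{G}$ by definition of the supremum, which is precisely your missing step (4); and the sign-correct approximation of the unperturbed $\tau_{n_\epsilon}$---which shares the sign of $\tau^\ast$ everywhere---supplies the strictly better thresholded policy in $\Pi_\mathcal{G}$, which is precisely your missing step (5). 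To salvage your direct construction you would need either to restrict to a concrete architecture where the $L^2$ projection is computable (as in the paper's linear toy example) or to adopt some version of this extremal device; as written, the proposal reduces the theorem to two unproven claims of comparable difficulty to the theorem itself.
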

\begin{proof}
    See Appendix~\ref{app:proofs}.
\end{proof}

\textbf{Interpretation.} Theorem~\ref{thrm:suboptimality} demonstrates that, regardless of how we choose our class $\mathcal{G}$ in the second-stage CATE regression, there always exists a ground-truth CATE for which the estimated CATE is suboptimal in terms of thresholded decision-making ($\rightarrow$\,thus answering \circledred{1}). Specifically, there always exists a more optimal function $g \in \mathcal{G}$ which, while not necessarily the best CATE approximation, yields improved policy value. In the following, we present a method to obtain an improved function for decision-making.

\subsection{A novel learning objective for retargeting CATE}\label{sec:learning_objective}

\textbf{Basic idea.} Motivated by our previous analysis, we propose a learning objective that learns a retargeted CATE, leading to an improved policy value while maintaining interpretability as an approximate CATE estimator ($\rightarrow$~thus answering \circledred{2}). Our motivation comes from Fig.~\ref{fig:toy_example_1} (right), in which we observe that there exists a continuous set of solutions between the optimal CATE approximation and the optimal linear function that maximizes policy value (after thresholding). The basic idea is to minimize a convex combination of the CATE estimation error and the negative policy value of the thresholded policy.

\begin{definition}
\emph{We define the \textbf{$\gamma$-policy-targeted CATE ($\gamma$-PT-CATE)} as any solution $g^\ast$ minimizing}
\vspace{-0.1cm}
\begin{equation}\label{eq:loss_nondiff}
    \mathcal{L}_{\gamma}(g) = (1 - \gamma) \E[(\tau(X) - g(X))^2] - \gamma \E[\mathbf{1}(g(X) > 0) \tau(X)].
\end{equation}
\vspace{-0.2cm} \emph{over a class of function $g \in \mathcal{G}$.}
\end{definition}

The hyperparameter $\gamma$ controls the trade-off between CATE accuracy and policy value optimization. For $\gamma = 0$, the objective reduces to standard CATE estimation, while, for  $\gamma = 1$, corresponds to pure policy value maximization (OPL) and thus disregards the CATE estimation error. We discuss principled methods for selecting $\gamma$ in Section~\ref{sec:learning_alg}.

\textbf{Optimization challenges.} The loss in Eq.~\eqref{eq:loss_nondiff} does not allow for gradient-based optimization due to the non-differentiability of the indicator function $\mathbf{1}(g(X) > 0)$. A na{\"i}ve approach would be to use a smooth approximation of the indicator via the sigmoid function $\sigma(\alpha g(X))$ for a sufficiently large $\alpha$. However, this introduces a challenging trade-off: large values for $\alpha$ provide a better approximation to the indicator function but suffer from vanishing gradients, while small values for $\alpha$ maintain useful gradients but poorly approximate the indicator function. Furthermore, small values for $\alpha$ may incentivize the model to compensate by increasing $g(X)$, thereby degrading CATE quality.

\textbf{Adaptive indicator approximation.} We address this optimization challenge by introducing $\alpha(X) > 0$ as a function of the covariates $X$. That is, we approximate $\mathcal{L}_{\gamma}(g)$ from Eq.~\eqref{eq:loss_nondiff} via 
\begin{align}\label{eq:loss_g_gt}
    \mathcal{L}_{\gamma, \alpha}(g)  = (1 - \gamma) \, \E[(\tau(X) - g(X))^2] - \gamma\,\mathbb{E}\Bigl[\tau(X) \sigma\left(\alpha(X) g(X)\right)\Bigr], 
\end{align}    
for some fixed adaptive approximation $\alpha(X) > 0$.
Such an adaptive approach allows $\alpha$ to be large when the sign of $g$ is correct, thereby providing an improved indicator approximation in regions where no signal from the gradient from the policy value term is needed. Fig.~\ref{fig:toy_example_1} illustrates this concept, where we show an estimated CATE that is suboptimal for decision-making (left plot). The $\alpha(X)$ in Fig.~\ref{fig:toy_example_1} (center) is effective in identifying the region in the covariate space where the sign of $g$ is incorrect and provides gradients for these regions. Once we obtain a suitable $\alpha(X)$, we can minimize $\mathcal{L}_{\gamma}(g)$ to retarget the CATE estimate in regions of suboptimal decision-making (right plot).

\textbf{Learning $\alpha(X)$.} We obtain a loss for learning $\alpha(X)$ for fixed $g$ by transforming the OPL component in Eq.~\eqref{eq:loss_nondiff} into a classification problem (following an approach similar to, e.g., \cite{Zhang.2012, Bennett.2020}). We can write 
\begin{equation}
 V(\pi_g) \propto \E[\tau(X) \pi_g(X)] = \E[|\tau(X)| \, \pi_g(X) \, \mathrm{sgn}(\tau(X))]
\end{equation}
By noting that maximizing $\mathbf{1}(g(X) > 0) \, \mathrm{sgn}(\tau(X))$ is equivalent to minimizing the binary cross-entropy loss over $g$ with label $\mathbf{1}(\tau(X)>0)$, we can obtain $\alpha$ for fixed $g$ by minimizing 
\begin{equation}\label{eq:loss_alpha_gt}
    \mathcal{L}_{\gamma, g}(\alpha)  = \mathbb{E}\Bigl[\,\bigl|\tau(X)\bigr|\,
  \ell\bigl(\alpha(X)\,g(X);\,\tau(X)\bigr)\Bigr],
\end{equation}    
where
$\ell(u;y)=\mathbf{1}(y>0)\log(\sigma(u))+\mathbf{1}(y<0)\log(1-\sigma(u))$,
subject to $\alpha(x) \in [a, \infty)$ for all $x \in \mathcal{X}$ and $0 < a$. The scalar $a$ can be tuned by minimizing the loss from Eq.~\eqref{eq:loss_nondiff} on a validation set.

\textbf{Interpretation as stochastic policy.} The policy $\pi_{\alpha,g}(x) = \sigma(\alpha(x) g(x))$ can be interpreted as the best stochastic policy that is achievable for a fixed $g \in \mathcal{G}$. Here, the CATE approximation $g$ determines the sign (i.e., whether to give treatment or not), while the approximation $\alpha$ determines the uncertainty about this decision. As shown in Fig.~\ref{fig:toy_example_1}, $\alpha(x)$ will be large whenever $g(x)$ has the correct sign, therefore providing a policy $\pi_{\alpha,g}(x)$ that is closer to being deterministic.

\subsection{Estimated nuisance functions}\label{sec:learning_objective_nuisances}

So far, we have assumed that the true CATE $\tau(X)$ is known, which is not the case in practice. To address this, we employ a \emph{two-stage} estimation procedure similar to established CATE estimators \cite{Curth.2021, Kennedy.2023b}. 
In the \emph{first stage}, we obtain estimators $\hat{\eta}$ of the nuisance functions, $\eta = (\mu_1, \mu_0, \pi)$. These are standard regression or classification tasks that can be solved using various model-based methods from the literature \cite{Shalit.2017, Wager.2018}. In the \emph{second stage}, we substitute these first-stage estimates into a second-stage loss that coincides with Eq.~\eqref{eq:loss_alpha_gt} and Eq.~\eqref{eq:loss_g_gt} in expectation.

To start with, we define
\tiny
\begin{equation}\label{eq:loss_g_nuisance}
    \mathcal{L}^m_{\gamma, \alpha, \eta}(g)  = (1 - \gamma) \, \E[(Y_\eta^{m} - g(X))^2] - \gamma\,\mathbb{E}\Bigl[Y_\eta^{m} \sigma\left(\alpha(X) g(X)\right)\Bigr] \text{ and } \mathcal{L}^m_{\gamma, g, \eta}(\alpha)  = \mathbb{E}\Bigl[\,\bigl|Y_\eta^{m}\bigr|\,
  \ell\bigl(\alpha(X)\,g(X);\,Y_\eta^{m}\bigr)\Bigr],
\end{equation}  
\normalsize

where $Y_{\eta}^{m}$ is one of the following pseudo-outcomes:
\tiny
\begin{align}\label{eq:pseudo_outcomes}
  Y_\eta^{\mathrm{PI}}  &= \;\mu_1(X) - \mu_0(X), 
  &\quad Y_\eta^{\mathrm{RA}}  &= \;A\,(Y - \mu_0(X)) + (1 - A)\,(\mu_1(X) - Y),\\
  Y_\eta^{\mathrm{IPW}} &= \;\frac{(A - \pi_b(X))\,Y}{\pi_b(X)\,(1 - \pi_b(X))},
  &\quad Y_\eta^{\mathrm{DR}}  &= \;\mu_1(X) - \mu_0(X) 
    + \frac{(A - \pi_b(X))\,(Y - \mu_A(X))}{\pi_b(X)\,(1 - \pi_b(X))}\nonumber.
\end{align}
\normalsize

\textbf{Theoretical analysis.} We now justify our pseudo-outcome-based loss from Eq.~\eqref{eq:loss_g_nuisance} theoretically. The first result shows that minimizing $\mathcal{L}^m_{\gamma, \alpha, \eta}(g)$ provides a meaningful minimizer.

\begin{theorem}[Consistency]\label{thrm:consistency}
If the nuisance functions are perfectly estimated (i.e., $\hat{\eta} = \eta$), the pseudo-outcome loss $\mathcal{L}^m_{\gamma, \alpha, \eta}(g)$ has the same minimizer as $\mathcal{L}^m_{\gamma, \alpha}(g)$ w.r.t. $g \in \mathcal{G}$ for all $\alpha$ and $m \in \{\mathrm{PI}, \mathrm{RA}, \mathrm{IPW}, \mathrm{DR}\}$.
\end{theorem}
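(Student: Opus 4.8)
The plan is to show that for each pseudo-outcome $Y_\eta^m$ with $m \in \{\mathrm{PI}, \mathrm{RA}, \mathrm{IPW}, \mathrm{DR}\}$, we have the conditional-mean identity $\E[Y_\eta^m \mid X = x] = \tau(x)$ under Assumption~\ref{ass:causal} with $\hat\eta = \eta$. This is the classical unbiasedness property of these pseudo-outcomes and is established by a direct computation using consistency, ignorability, and overlap: for the RA pseudo-outcome one conditions additionally on $A$ and uses $\E[Y \mid X=x, A=a] = \mu_a(x)$; for IPW one uses the fact that $\E[A \mid X=x] = \pi_b(x)$ together with ignorability; for DR one combines both, noting the correction term has conditional mean zero when $\mu_A$ and $\pi_b$ are correct; and PI is immediate. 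I would state this as a short lemma (or recall it as a known fact) and then reduce the theorem to it.

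Once the identity $\E[Y_\eta^m \mid X] = \tau(X)$ is in hand, the key observation is that both terms of $\mathcal{L}^m_{\gamma,\alpha,\eta}(g)$ are \emph{affine in the pseudo-outcome} after conditioning on $X$. Concretely, for the policy-value term, $\E[Y_\eta^m \, \sigma(\alpha(X) g(X))] = \E\big[\E[Y_\eta^m \mid X]\, \sigma(\alpha(X) g(X))\big] = \E[\tau(X)\sigma(\alpha(X) g(X))]$ by the tower rule, since $\sigma(\alpha(X)g(X))$ is $X$-measurable. For the regression term, expand $\E[(Y_\eta^m - g(X))^2] = \E[(Y_\eta^m)^2] - 2\E[Y_\eta^m g(X)] + \E[g(X)^2]$; the cross term becomes $\E[\tau(X) g(X)]$ by the same tower-rule argument, and the first term $\E[(Y_\eta^m)^2]$ does \emph{not} depend on $g$. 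Hence $\E[(Y_\eta^m - g(X))^2] = \E[(\tau(X) - g(X))^2] + \big(\E[(Y_\eta^m)^2] - \E[\tau(X)^2]\big)$, where the bracketed quantity is a $g$-independent constant. Adding the two pieces gives $\mathcal{L}^m_{\gamma,\alpha,\eta}(g) = \mathcal{L}^m_{\gamma,\alpha}(g) + (1-\gamma)C_m$ for a constant $C_m$ depending only on $m$, $\eta$, and $\gamma$ but not on $g$; a shift by a constant does not change the $\arg\min$ over $g \in \mathcal{G}$, which is exactly the claim. (One should note $\mathcal{L}^m_{\gamma,\alpha}(g)$ here coincides with $\mathcal{L}_{\gamma,\alpha}(g)$ of Eq.~\eqref{eq:loss_g_gt}, since the latter only involves $\tau$; the superscript $m$ is vacuous for the oracle loss.)

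I would carry out the steps in this order: (1) establish the pseudo-outcome unbiasedness identity for all four $m$; (2) apply the tower rule to the policy-value term; (3) expand the squared-error term and isolate the $g$-independent constant; (4) combine and conclude the $\arg\min$ is unchanged. The only mild subtlety — and the main thing to be careful about rather than a genuine obstacle — is the IPW pseudo-outcome, where one must verify $\E[(Y_\eta^{\mathrm{IPW}})^2] < \infty$ so that the constant $C_{\mathrm{IPW}}$ is well-defined; this follows from overlap (Assumption~\ref{ass:causal}(ii)) bounding $\pi_b$ away from $0$ and $1$, together with a finite-second-moment assumption on $Y$, which is implicit in the setup. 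Everything else is routine conditional-expectation manipulation; there is no delicate estimation-error argument here because we are in the oracle regime $\hat\eta = \eta$.
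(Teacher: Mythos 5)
Your proposal is correct and follows essentially the same route as the paper's proof: both hinge on the pseudo-outcome unbiasedness identity $\E[Y_\eta^m \mid X] = \tau(X)$ (which the paper also cites rather than re-derives) and the tower property, differing only in whether the squared-error term is expanded by adding and subtracting $\tau(X)$ inside the square (the paper) or by multiplying out and isolating $\E[(Y_\eta^m)^2]$ as the $g$-independent constant (you) --- an algebraically equivalent step. Your remark about verifying $\E[(Y_\eta^{\mathrm{IPW}})^2]<\infty$ under overlap is a small point of extra care the paper leaves implicit.
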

\begin{proof}
    See Appendix~\ref{app:proofs}.
\end{proof}
In practice, we use estimated nuisance functions $\hat{\eta}$, which means that Theorem~\ref{thrm:consistency} may not hold for $\mathcal{L}^m_{\gamma, \alpha, \hat{\eta}}(g)$ due to possible nuisance estimation errors. However, the following results provides an upper bound on how much the minimizer can deviate in the presence of estimation errors.

\begin{theorem}[Error rates]\label{thrm:nuisance_errors}
Let $g^\ast = \arg\min_{g \in \mathcal{G}}\mathcal{L}^m_{\gamma, \alpha, \eta}(g)$ and $\hat{g} = \arg\min_{g \in \mathcal{G}}\mathcal{L}^m_{\gamma, \alpha, \hat{\eta} }(g)$ be the minimizers of the PT-CATE loss with ground-truth and estimated nuisances. Then, under the additional assumptions listed in Appendix~\ref{app:proofs}, it holds
\begin{equation}
||g^\ast - \hat{g}||^2  \lesssim  R^m_{\gamma, \alpha, \hat{\eta}}(\hat{g}, g^\ast) +   M_{\hat{\eta}, \eta}^m \left((1 - \gamma) + \gamma C_\alpha\right),
\end{equation}
where $||\cdot||$ is the $L^2$-norm, $R^m_{\gamma, \alpha, \hat{\eta}}(\hat{g}, g^\ast) = \mathcal{L}^m_{\gamma, \alpha, \hat{\eta}}(\hat{g}) - \mathcal{L}^m_{\gamma, \alpha, \hat{\eta}}(g^\ast)$ is an optimization-dependent term, $C_\alpha > 0$ is a constant depending on $\alpha$, and $M_{\hat{\eta}, \eta}^m$ is the (pseudo-outcome-dependent) rate term, defined via
\begin{align}
M_{\hat{\eta}, \eta}^\mathrm{PI} &=  M_{\hat{\eta}, \eta}^\mathrm{RA} \propto ||\hat{\mu}_1 - \mu_1||^2 + ||\hat{\mu}_0 - \mu_0||^2, \quad
M_{\hat{\eta}, \eta}^\mathrm{IPW} \propto  ||\hat{\pi} - \pi||^2,\\
M_{\hat{\eta}, \eta}^\mathrm{DR} & \propto ||\hat{\pi} - \pi||^2 \left(||\hat{\mu}_1 - \mu_1||^2 + ||\hat{\mu}_0 - \mu_0||^2 \right).
\end{align}
\end{theorem}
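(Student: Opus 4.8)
## Proof Proposal for Theorem~\ref{thrm:nuisance_errors} (Error rates)

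The plan is to control the distance $\|g^\ast - \hat g\|^2$ by exploiting a strong-convexity-type property of the population loss $\mathcal{L}^m_{\gamma,\alpha,\eta}$ around its minimizer $g^\ast$, and then to transfer optimality information from the estimated loss $\mathcal{L}^m_{\gamma,\alpha,\hat\eta}$ back to the ground-truth loss at the cost of a uniform perturbation term. More precisely, the quadratic part $(1-\gamma)\E[(Y^m_\eta - g(X))^2]$ is $2(1-\gamma)$-strongly convex in $g$ with respect to the $L^2$-norm, so first I would show that, modulo the sigmoid term, $\mathcal{L}^m_{\gamma,\alpha,\eta}(g) - \mathcal{L}^m_{\gamma,\alpha,\eta}(g^\ast) \gtrsim \|g - g^\ast\|^2$ up to a correction from the concave/smooth policy-value term. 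The sigmoid term $-\gamma\,\E[Y^m_\eta\,\sigma(\alpha(X)g(X))]$ is Lipschitz in $g$ with constant proportional to $\|\alpha\|_\infty$ (since $\sigma'$ is bounded by $1/4$), which is where the constant $C_\alpha$ enters; as long as $\gamma C_\alpha$ does not overwhelm the strong convexity (an assumption I expect to be among those deferred to the appendix, e.g. $\alpha$ bounded and $\gamma$ not too close to $1$, or a margin/curvature condition), the full population loss retains a quadratic growth lower bound $\mathcal{L}^m_{\gamma,\alpha,\eta}(g) - \mathcal{L}^m_{\gamma,\alpha,\eta}(g^\ast) \gtrsim \|g - g^\ast\|^2$ on $\mathcal{G}$.

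Next I would bound the difference between the ground-truth and estimated population losses uniformly over $g \in \mathcal{G}$. Writing out $\mathcal{L}^m_{\gamma,\alpha,\hat\eta}(g) - \mathcal{L}^m_{\gamma,\alpha,\eta}(g)$, the quadratic part contributes $(1-\gamma)\bigl(\E[(Y^m_{\hat\eta})^2 - (Y^m_\eta)^2] - 2\E[(Y^m_{\hat\eta} - Y^m_\eta)g(X)]\bigr)$ and the sigmoid part contributes $-\gamma\,\E[(Y^m_{\hat\eta} - Y^m_\eta)\sigma(\alpha(X)g(X))]$. Each of these is controlled by $\E[(Y^m_{\hat\eta} - Y^m_\eta)^2]$ via Cauchy–Schwarz (using boundedness of $g$ on $\mathcal{G}$ and of $\sigma$, and boundedness of the pseudo-outcomes), so the whole perturbation is $\lesssim \bigl((1-\gamma) + \gamma C_\alpha\bigr)\,\E[(Y^m_{\hat\eta} - Y^m_\eta)^2]$. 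The core causal-inference step is then to identify $\E[(Y^m_{\hat\eta} - Y^m_\eta)^2]$ with $M^m_{\hat\eta,\eta}$ for each pseudo-outcome: for $m\in\{\mathrm{PI},\mathrm{RA}\}$ the pseudo-outcome depends linearly on $\mu_1,\mu_0$, giving the plug-in rate; for IPW one expands $\tfrac{(A-\hat\pi)Y}{\hat\pi(1-\hat\pi)} - \tfrac{(A-\pi)Y}{\pi(1-\pi)}$ and uses overlap (propensities bounded away from $0$ and $1$, which justifies the constant hidden in $\propto$) to get the $\|\hat\pi - \pi\|^2$ rate; and for DR one uses the standard Neyman-orthogonality / product-form decomposition — after taking conditional expectations given $X$, the first-order error terms in $\hat\mu$ and $\hat\pi$ cancel, leaving the cross term $\|\hat\pi - \pi\|^2(\|\hat\mu_1-\mu_1\|^2 + \|\hat\mu_0-\mu_0\|^2)$, possibly after a Cauchy–Schwarz split of an $L^2\times L^2$ bilinear remainder.

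Finally I would assemble the pieces. Starting from the quadratic growth bound, $\|\hat g - g^\ast\|^2 \lesssim \mathcal{L}^m_{\gamma,\alpha,\eta}(\hat g) - \mathcal{L}^m_{\gamma,\alpha,\eta}(g^\ast)$; then insert $\mathcal{L}^m_{\gamma,\alpha,\eta} = \mathcal{L}^m_{\gamma,\alpha,\hat\eta} + (\mathcal{L}^m_{\gamma,\alpha,\eta} - \mathcal{L}^m_{\gamma,\alpha,\hat\eta})$ at both $\hat g$ and $g^\ast$, bounding the two perturbation differences by the uniform estimate above, and recognize $\mathcal{L}^m_{\gamma,\alpha,\hat\eta}(\hat g) - \mathcal{L}^m_{\gamma,\alpha,\hat\eta}(g^\ast) = R^m_{\gamma,\alpha,\hat\eta}(\hat g, g^\ast)$. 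This yields $\|\hat g - g^\ast\|^2 \lesssim R^m_{\gamma,\alpha,\hat\eta}(\hat g, g^\ast) + M^m_{\hat\eta,\eta}\bigl((1-\gamma) + \gamma C_\alpha\bigr)$, as claimed. The main obstacle I anticipate is establishing the quadratic growth lower bound in the presence of the nonconvex sigmoid term: the policy-value term $-\gamma\E[Y^m_\eta\sigma(\alpha g)]$ is neither convex nor concave in $g$ in general, so I expect the appendix assumptions to supply either a smallness condition on $\gamma\|\alpha\|_\infty$ relative to the strong-convexity modulus, or a local curvature/margin condition near $g^\ast$ that dominates the Lipschitz fluctuation of the sigmoid term — getting this trade-off to close cleanly, and tracking exactly how $C_\alpha$ depends on $\alpha$ (plausibly $\|\alpha\|_\infty$ or $\|\alpha\|_\infty^2$ through the second-order term of the sigmoid), is the delicate part.
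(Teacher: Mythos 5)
Your overall architecture (quadratic growth of the population loss, plus a bound on the nuisance-induced perturbation, assembled through the excess-risk term $R^m_{\gamma,\alpha,\hat\eta}$) is broadly compatible with the paper's proof, but the step where you control the perturbation would fail to deliver the stated rates --- in particular the doubly robust one. You propose to bound every perturbation term by $\E[(Y^m_{\hat\eta}-Y^m_\eta)^2]$ via Cauchy--Schwarz on the raw pseudo-outcome difference and then to ``identify'' that quantity with $M^m_{\hat\eta,\eta}$. For $m=\mathrm{DR}$ this identification is false: even when $\hat\pi=\pi$ exactly, the raw difference $Y^{\mathrm{DR}}_{\hat\eta}-Y^{\mathrm{DR}}_\eta$ still equals $(\hat\mu_1-\mu_1)-(\hat\mu_0-\mu_0)+\tfrac{(A-\pi)(\mu_A-\hat\mu_A)}{\pi(1-\pi)}$, so its second moment is of order $\|\hat\mu_1-\mu_1\|^2+\|\hat\mu_0-\mu_0\|^2$ while $M^{\mathrm{DR}}_{\hat\eta,\eta}=0$; the unconditional second moment is a \emph{sum} of squared errors, never a product. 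The paper instead exploits that every perturbation term has the form $\E[(Y^m_{\hat\eta}-Y^m_\eta)\,h(X)]$ with $h$ a function of $X$ alone (namely $h=\hat g-g^\ast$ or $h=\sigma'(\alpha g^\ast)\alpha(\hat g-g^\ast)$), applies the tower property \emph{first} to replace the raw difference by its conditional bias $\Delta^m(X)=\E[Y^m_{\hat\eta}-Y^m_\eta\mid X]$, and only then applies Cauchy--Schwarz and AM--GM. It is $\Delta^{\mathrm{DR}}(X)$, not the raw difference, that collapses to a product of nuisance errors after the martingale terms cancel. Your parenthetical ``after taking conditional expectations given $X$'' points at the right mechanism, but it cannot be executed after you have already passed to $\E[(Y^m_{\hat\eta}-Y^m_\eta)^2]$: by conditional Jensen that quantity dominates $\|\Delta^m\|^2$, so the inequality runs the wrong way. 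Relatedly, the $g$-independent term $\E[(Y^m_{\hat\eta})^2-(Y^m_\eta)^2]$ in your decomposition is only first order in the nuisance error when bounded in absolute value; it must be cancelled between the evaluations at $\hat g$ and $g^\ast$, which a uniform-deviation bound does not do.

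The second gap is the quadratic growth bound itself. Lipschitz control of the sigmoid term via $\sigma'\le 1/4$ cannot preserve quadratic growth around the minimizer of the combined loss: a Lipschitz perturbation of a strongly convex functional contributes a term linear in $\|g-g^\ast\|$, which dominates $\|g-g^\ast\|^2$ near $g^\ast$, so no inequality of the form $\mathcal{L}^m_{\gamma,\alpha,\eta}(g)-\mathcal{L}^m_{\gamma,\alpha,\eta}(g^\ast)\gtrsim\|g-g^\ast\|^2$ follows from Lipschitzness alone. The paper resolves this with a second-order functional Taylor expansion of the sigmoid term along the star hull $\mathrm{star}(\mathcal{G},g^\ast)$ together with an explicit curvature assumption, $\E[-Y^m_{\hat\eta}\,\sigma''(\alpha(X)\bar g(X))\,\alpha(X)^2(\hat g(X)-g^\ast(X))^2]\ge\delta\|\hat g-g^\ast\|^2$, which produces the prefactor $1-\gamma+\tfrac{\delta}{2}\gamma$; it also uses first-order optimality of $g^\ast$ for the \emph{oracle} loss (nonnegativity of the directional derivative $D_g\mathcal{L}^m_{\gamma,\alpha,\eta}(g^\ast)[\hat g-g^\ast]$) to discard the oracle score terms, rather than a uniform deviation bound. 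You correctly flag this as the delicate point and name a curvature condition as one candidate fix --- that is indeed the fix the paper adopts --- but your primary proposed mechanism does not close, so as written the argument has a genuine hole in both the curvature step and the doubly robust rate.
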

\begin{proof}
    See Appendix~\ref{app:proofs}.
\end{proof}
Theorem~\ref{thrm:nuisance_errors} shows that, as long as we are able to estimate the nuisance functions $\eta$ involved in the corresponding pseudo-outcome reasonably well, we can ensure a sufficiently good second-stage learner. Importantly, the doubly robust pseudo-outcome leads to a \emph{doubly robust nuisance error rate}: only \emph{either} the propensity score $\pi$ \emph{or} the response functions $\mu_a$ need to be estimated well for the second-stage learner to converge well.

\subsection{Learning algorithm}\label{sec:learning_alg}
We provide a concrete learning algorithm to obtain $g(x)$ and $\alpha(x)$ from finite data. Given a dataset $\mathcal{D} = \{(x_i, a_i, y_i)\}_{i=1}^n$, we can define empirical versions $\hat{\mathcal{L}}^m_{\gamma, \alpha, \hat{\eta}}(g)$ of ${\mathcal{L}}^m_{\gamma, \alpha, \hat{\eta}}(g)$ and $\hat{\mathcal{L}}^m_{\gamma, g, \hat{\eta}}(\alpha)$ of ${\mathcal{L}}^m_{\gamma, g, \hat{\eta}}(\alpha)$ by replacing expectations with empirical means.
To minimize both empirical losses, we propose to parametrize $\alpha_\phi$ and $g_\theta$ as neural networks, where $\phi$ and $\theta$ denote their respective parameters. For training, we propose a three-step iterative learning algorithm (shown in Fig.~\ref{fig:architecture}):

\begin{wrapfigure}{r}{0.4\textwidth}
\vspace{-0.4cm}
  \centering
  \includegraphics[width=1.1\linewidth]{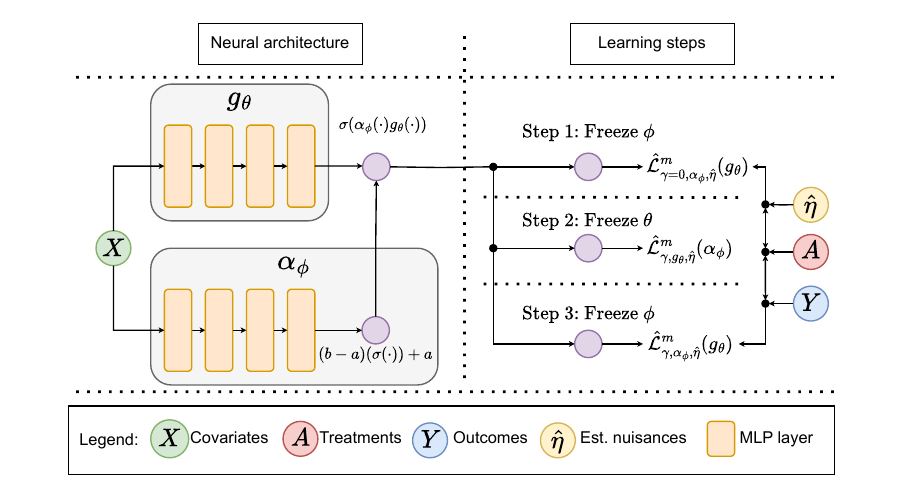}
  \caption{\textbf{Overview} of our second-stage architecture and our learning algorithm.}
  \label{fig:architecture}
 \vspace{-0.7cm}
\end{wrapfigure}

$\bullet$\,\textbf{Step 1 (initial CATE estimation):} train $g_\theta$ by minimizing $\hat{\mathcal{L}}^m_{\gamma=0, \alpha_\phi, \hat{\eta}}(g)$ over $\theta$, using randomly initialized $\alpha_\phi$ with $\phi$ frozen. This gives an initial CATE estimator. $\bullet$\,\textbf{Step 2 (region detection):} train $\alpha_\phi$ by minimizing $\hat{\mathcal{L}}^m_{\gamma, g_\theta, \hat{\eta}}(\alpha_\phi)$ over $\phi$, keeping $\theta$ frozen. The objective is for $\alpha_\phi$ to identify covariate regions where $g_\theta$ produces incorrect predictions of the sign. $\bullet$\,\textbf{Step 3 (CATE refinement):} Retrain $g_\theta$ by minimizing $\hat{\mathcal{L}}^m_{\gamma, \alpha_\phi, \hat{\eta}}(g)$ over $\theta$, with $\phi$ frozen. This step corrects $g_\theta$ in regions previously identified as having incorrect sign predictions. Fig.~\ref{fig:toy_example_1} shows experimental results for each of the three steps of our algorithm. Steps 2 and 3 can be repeated iteratively until convergence. The  pseudocode is in Appendix~\ref{app:implementation}. 

\textbf{Selecting $\gamma$.} The parameter $\gamma$ quantifies the trade-off between CATE estimation error and decision performance. Setting $\gamma=0$ corresponds to standard CATE estimation, while $\gamma=1$ corresponds to OPL (ignoring a meaningful CATE estimation completely). As such, the selection of $\gamma$ is mainly driven by domain knowledge. In practice, we recommend plotting the estimation CATE error and policy value (e.g., as in Fig.~\ref{fig:results_synthetic}) and choosing $\gamma$ accordingly, or comparing trained models for multiple values of $\gamma$ (sensitivity analysis). If the main objective is optimal decision-making, practitioners may choose $\gamma \approx 1$. The corresponding PT-CATE can be interpreted as the \emph{best CATE approximation among all functions that can be thresholded for optimal decision-making} as long as $\gamma < 1$.

\section{Experiments}\label{sec:experiments}

We now confirm the effectiveness of our proposed learning algorithm empirically. As is standard in causal inference \cite{Shalit.2017, Curth.2021, Kennedy.2023b}, we use data where we have access to ground-truth values of causal quantities. We also provide experimental results using real-world data. Additional experimental results and robustness checks are reported in Appendix~\ref{app:exp}.

\textbf{Implementation details.} We use standard feed-forward neural networks with tanh activations for $g_\theta$ and with ReLU activations for $\alpha_\phi$. We use $\rho(x) + a$ as the final activation function for $\alpha_\phi$ to ensure $\alpha_\phi(x) > a$, where $\rho(x)$ denotes the softplus function. We perform training using the Adam optimizer \cite{Kingma.2015}. Further details regarding architecture, training, and hyperparameters are in Appendix~\ref{app:implementation}. 

\textbf{Evaluation.} We evaluate a function $g$ learned by our method using two established metrics \cite{Shalit.2017, Kallus.2018}: (i)~the \emph{precision of estimating heterogeneous treatments effects (PEHE)} $\hat{\E}_n[(g(X) - \tau(X))^2]$, which quantifies the CATE estimation error, and (ii)~the \emph{policy loss} (negative policy value) given by $-\hat{\E}_n[\mathbf{1}(g(X) > 0) \tau(X)]$. For the experiments using simulated datasets, we use the known ground-truth CATE $\tau(X)$ for evaluation. For the experiments using real-world data, we evaluate by using the doubly robust pseudo-outcome $Y^\mathrm{DR}_{\hat{\eta}}$ instead, as the ground-truth CATE is not available. 

\textbf{Baselines.} Standard two-stage CATE learners (i.e., PI/ RA/ IPW/ DR-learner) correspond to our method when setting $\gamma = 0$. To ensure a fair comparison, we use the \emph{same} neural network architecture for all values of $\gamma$ in our experiment. We refrain from benchmarking with specific model architectures as we are not claiming general state-off-the art results using our specific implementation. Additional experiments using different model architectures are in Appendix~\ref{app:exp}.

\emph{Simulated data.}
\textbf{Experiments with ground-truth nuisance functions.} Fig.~\ref{fig:toy_example_2} and Fig.~\ref{fig:toy_example_1} (right) show the results of stage~2 of our PT-CATE algorithm for different values of $\gamma$ when using ground-truth nuisance functions in the first stage of Algorithm~\ref{alg:learning_algorithm}. The results show visually that our algorithm is effective in improving the decision threshold (and, thus, the policy value) as compared to the result for $\gamma = 0$, while maintaining good CATE approximations.

\begin{figure}[ht]
 \centering
\includegraphics[width=0.24\linewidth]{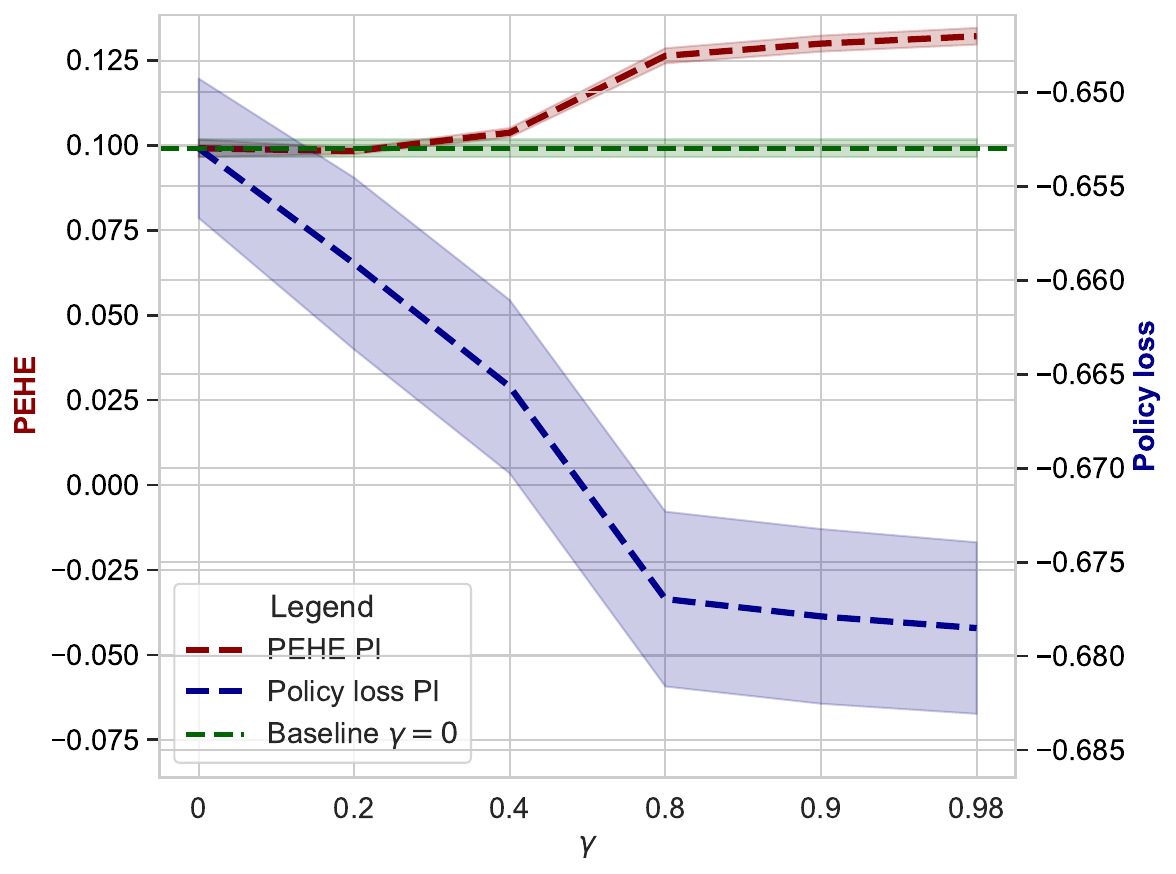}
\includegraphics[width=0.24\linewidth]{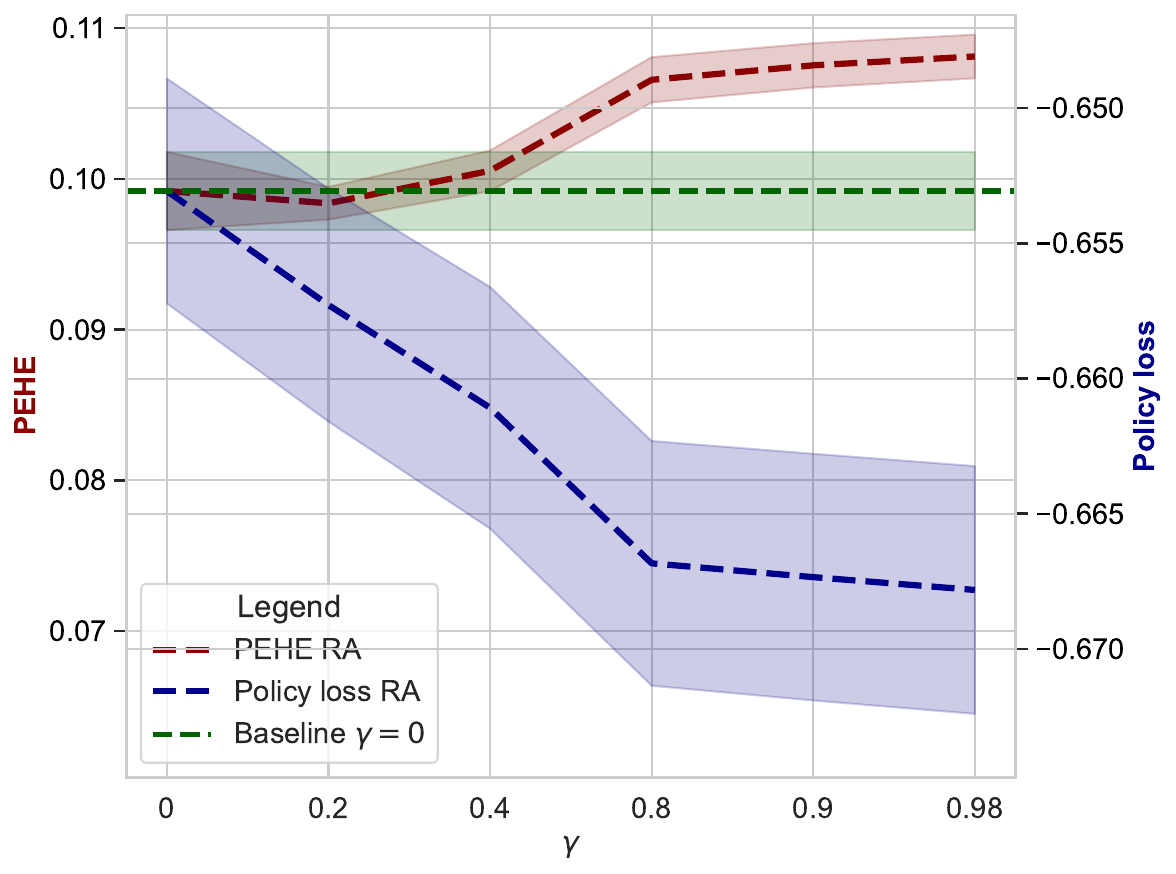}
\includegraphics[width=0.24\linewidth]{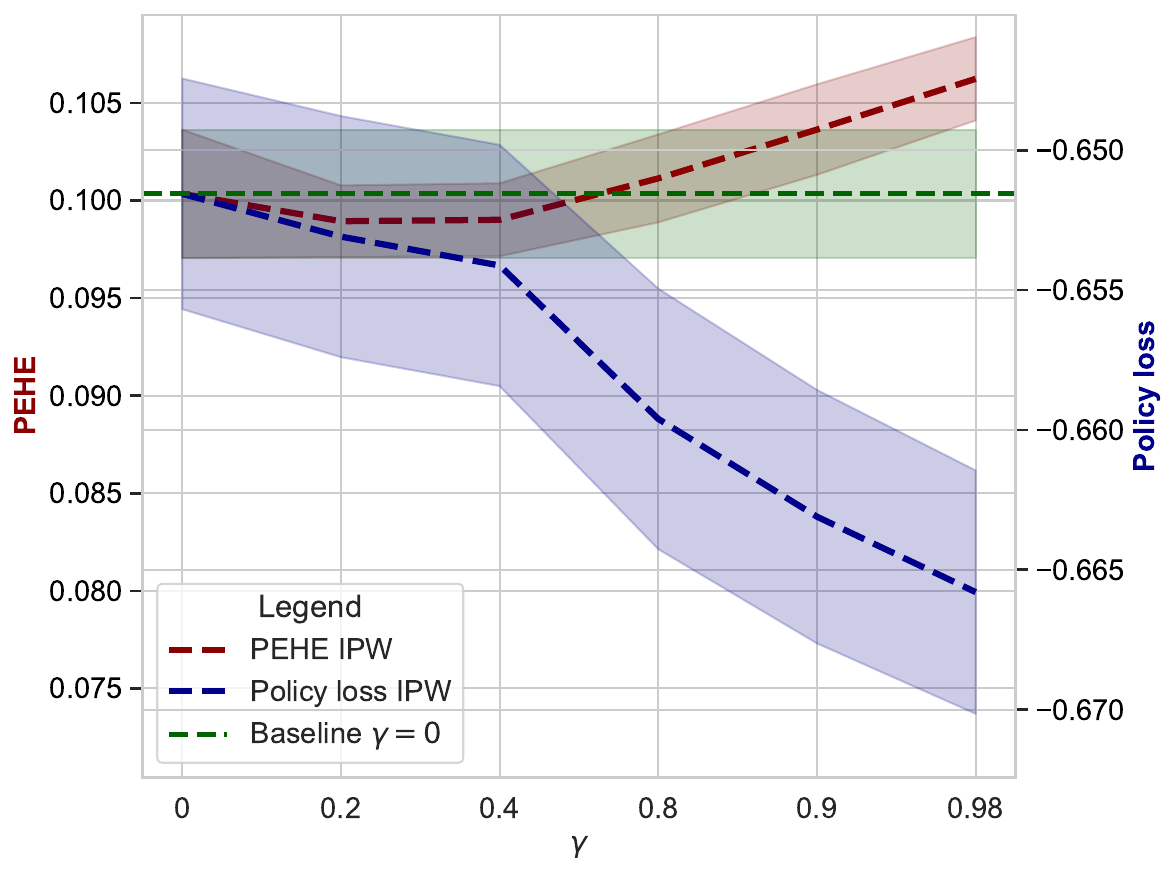}
\includegraphics[width=0.24\linewidth]{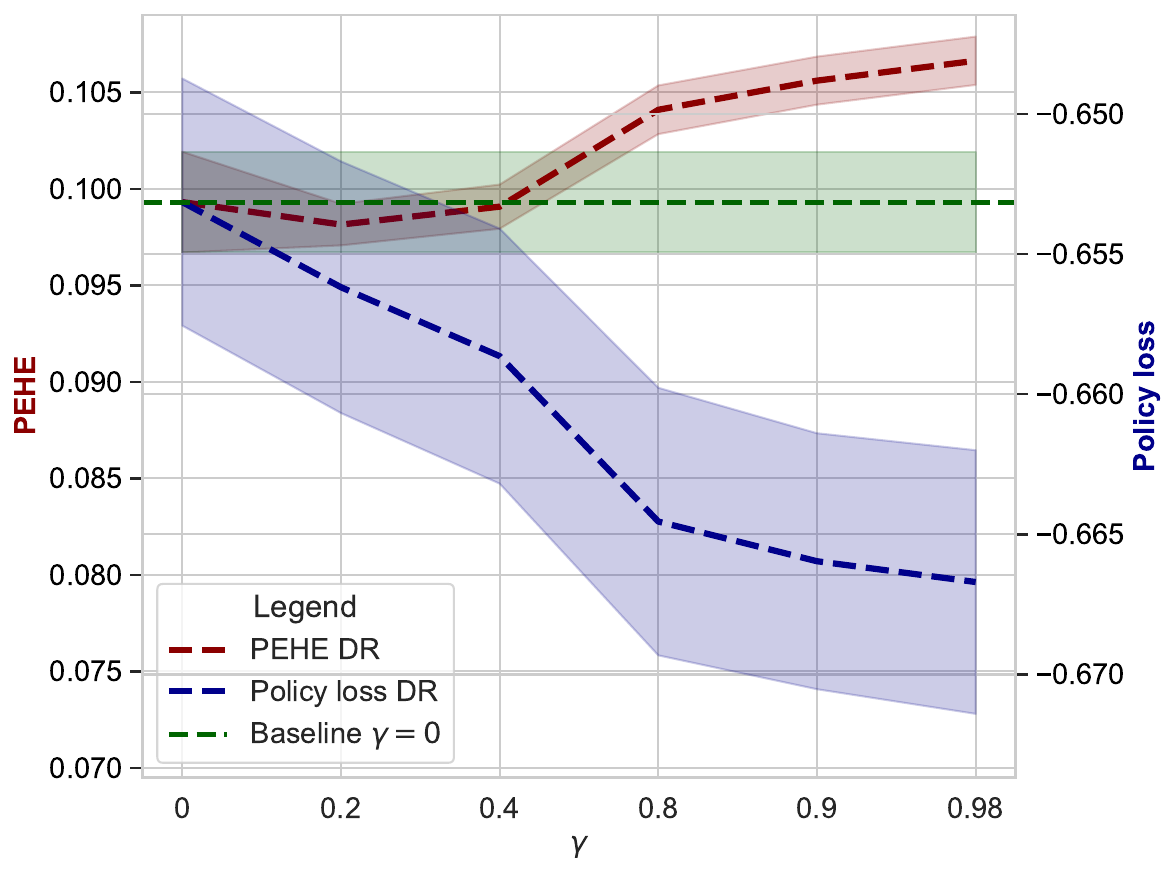}
\caption{\textbf{Experimental results for setting A.} Shown: PEHE and policy loss over $\gamma$ (lower $=$ better). Shown: mean and standard errors over $5$ runs.}
\vspace{-0.4cm}
\label{fig:results_synthetic}
\end{figure}

\begin{figure}[ht]
 \centering
\includegraphics[width=0.24\linewidth]{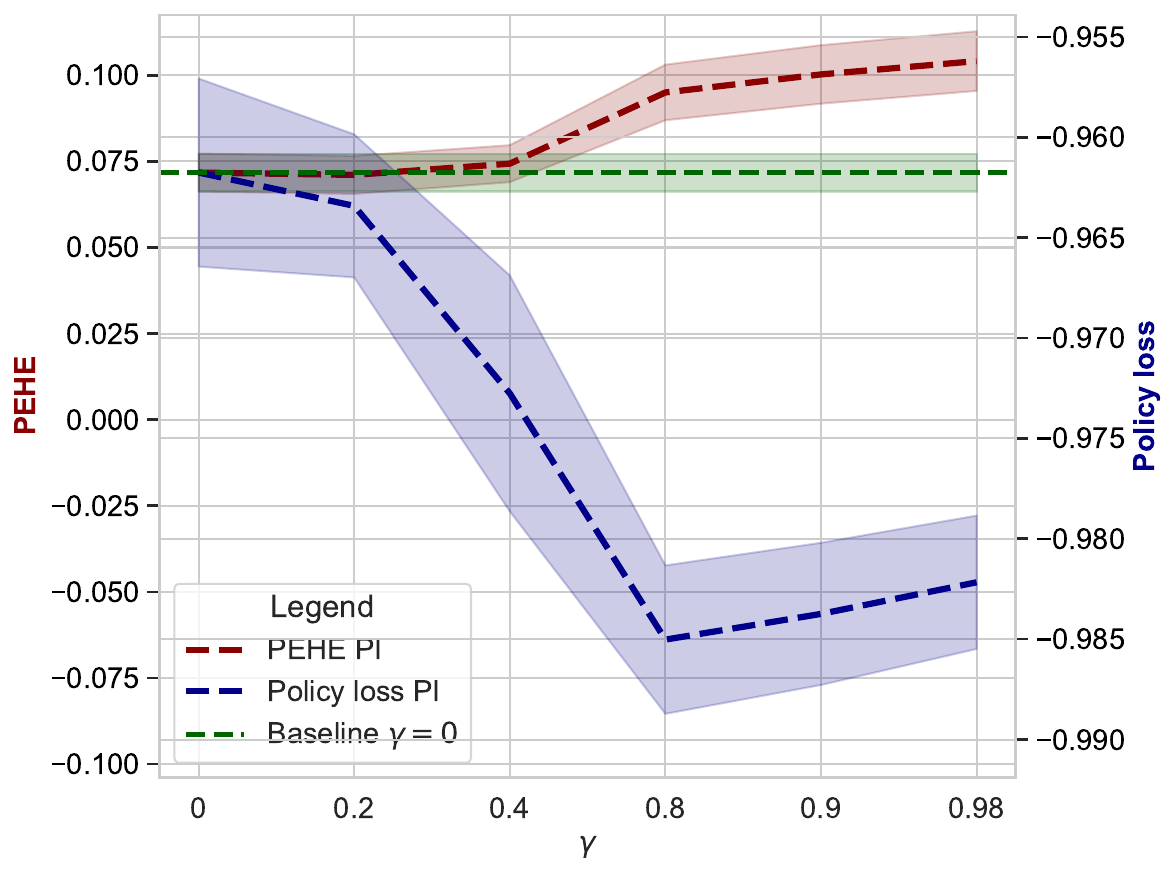}
\includegraphics[width=0.24\linewidth]{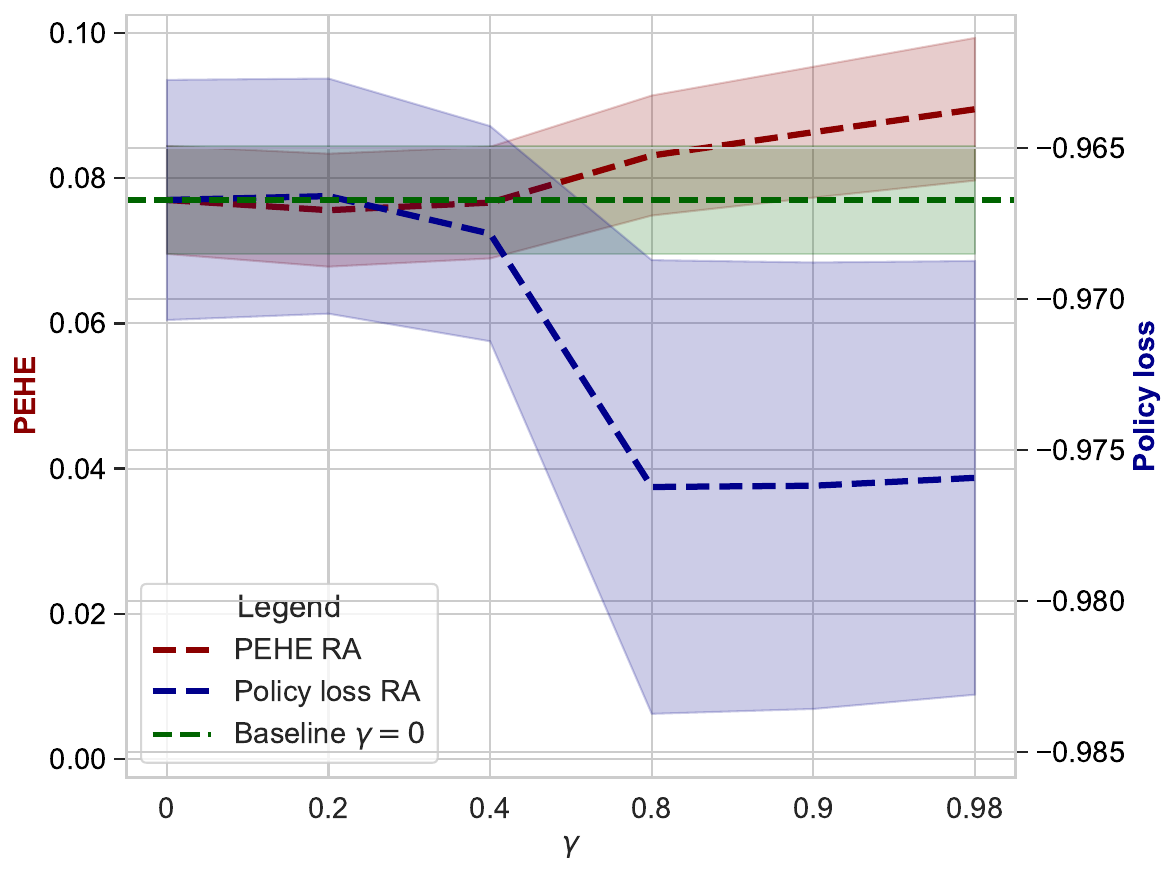}
\includegraphics[width=0.24\linewidth]{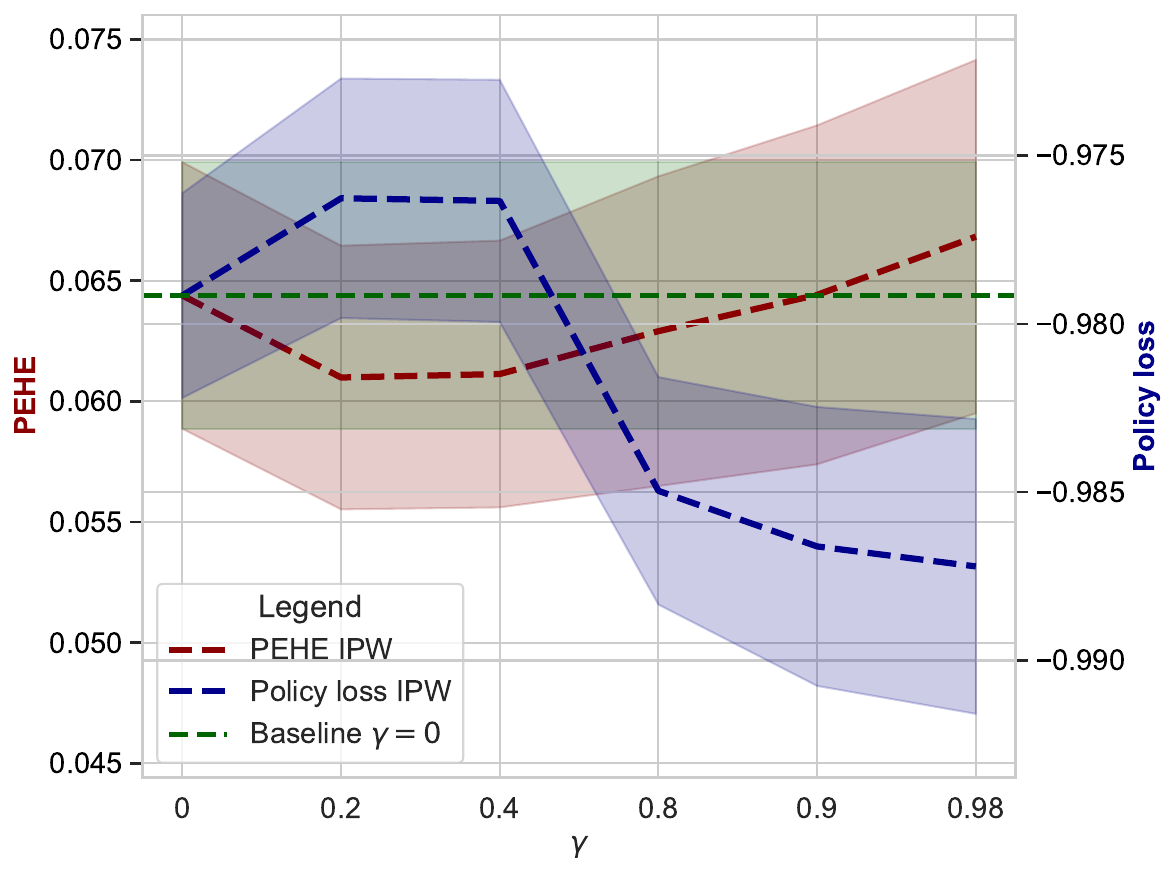}
\includegraphics[width=0.24\linewidth]{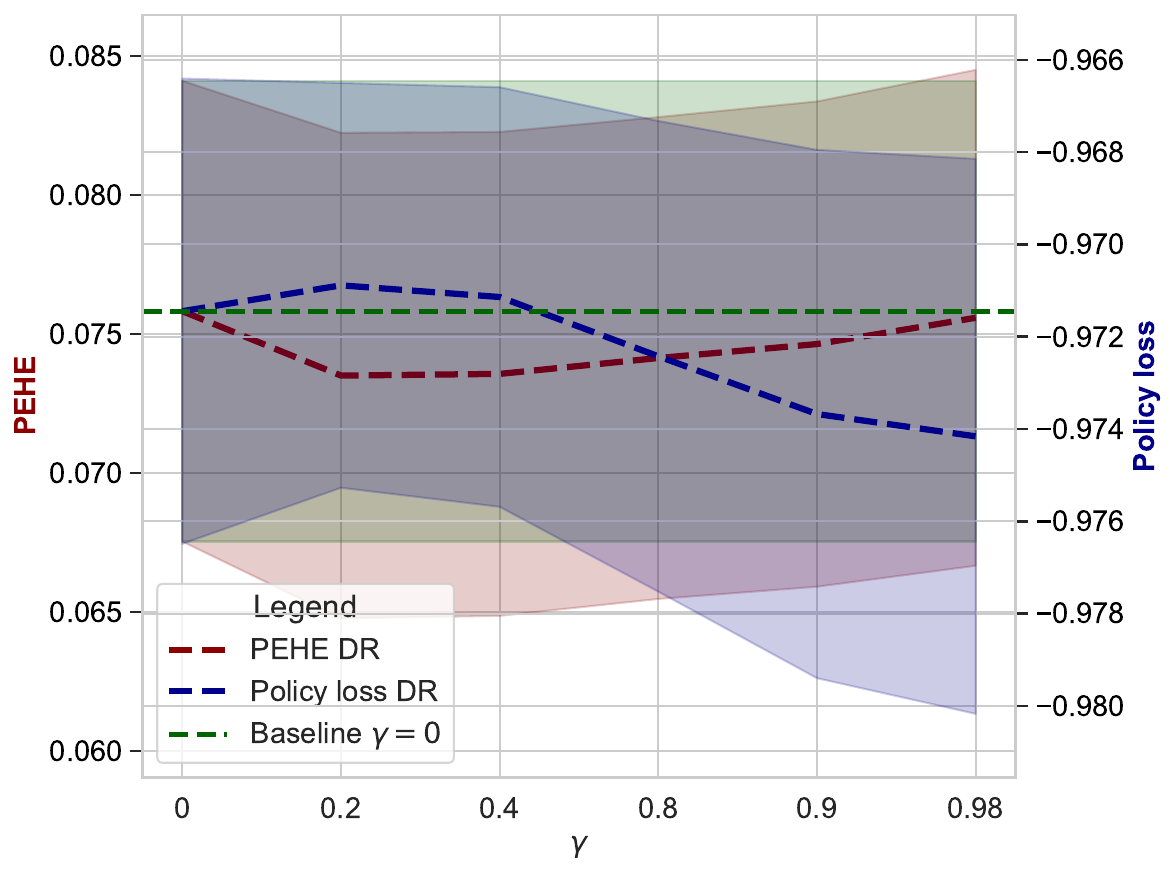}
\caption{\textbf{Experimental results for setting B.} Shown: PEHE and policy loss over $\gamma$ (lower $=$ better). Shown: mean and standard errors over $5$ runs.}
\vspace{-0.2cm}
\label{fig:results_synthetic2}
\end{figure}

\textbf{Experiments with estimated nuisance functions.} We now consider two settings to analyze the effectiveness of our algorithm when using estimated nuisance functions in stage~1. In setting~A, we consider a synthetic dataset with nonlinear CATE and aim to learn a linear $g$ (similar to Fig.~\ref{fig:toy_example_2}). In setting B,~we consider a non-linear but regularized $g$ (similar to Fig.~\ref{fig:toy_example_1}). Details regarding the datasets are in Appendix~\ref{app:sim}.

\begin{wrapfigure}{r}{0.35\textwidth}
\vspace{-0.4cm}
 \centering
\includegraphics[width=1.05\linewidth]{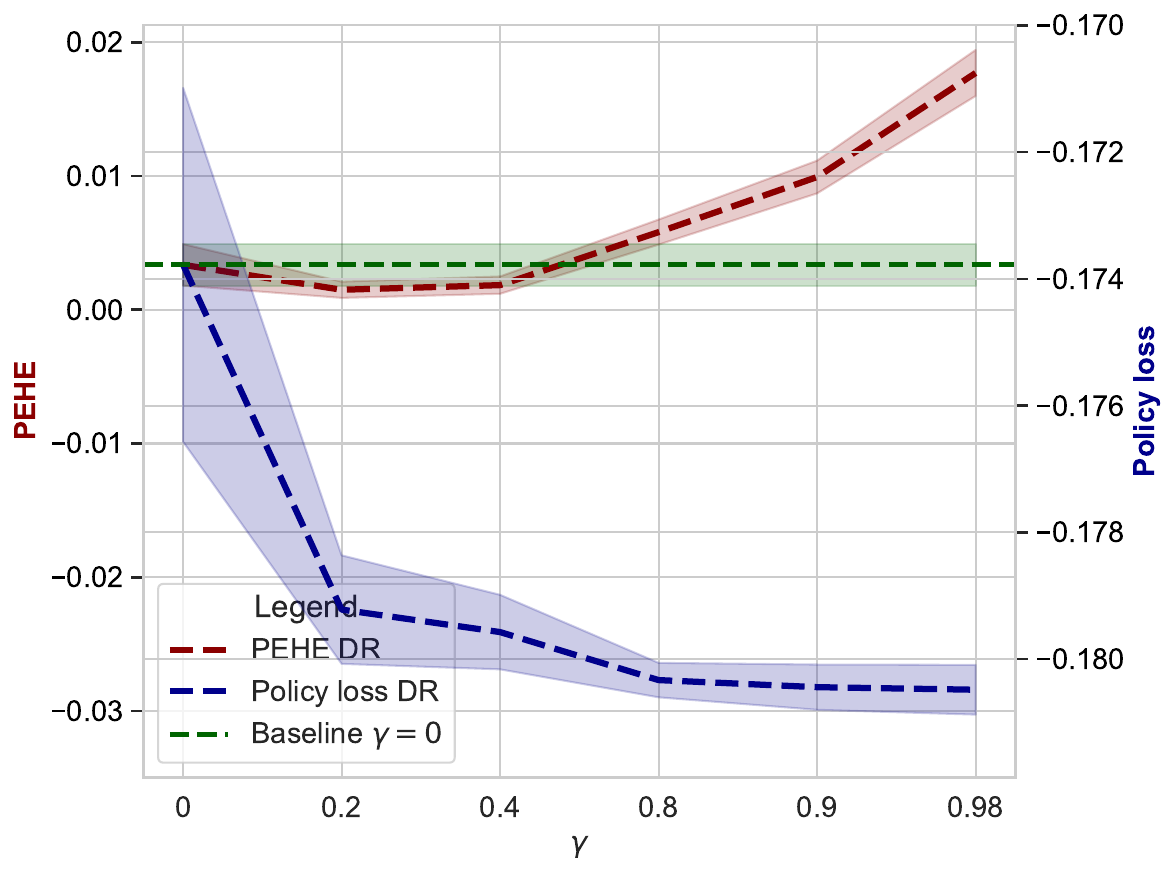}
\caption{\textbf{Experimental results for real-world data.} Shown: PEHE and policy loss over $\gamma$ (lower $=$ better). Shown: Mean and $80\%$ confidence intervals over $5$ runs.}
\vspace{-0.6cm}
\label{fig:results_real}
\end{wrapfigure}

\textbf{Results.} We report PEHE and policy loss for all four pseudo-outcomes (PI, RA, IPW, and DR) in Fig.~\ref{fig:results_synthetic} (Setting A) and Fig.~\ref{fig:results_synthetic2} (Setting B). The results demonstrate that our algorithm is effective in decreasing the policy loss compared to the baselines ($\gamma=0$) when increasing $\gamma$. The results for IPW and DR are more noisy as compared to the ones for PI and RA, which is likely due to higher variance as a result of divisions by propensity scores (a known issue for these estimation methods; see \cite{Curth.2021}). Nevertheless, our PT-CATE algorithm leads to a better average decision performance across all estimation methods while only minimally increasing the PEHE.

\emph{Real-world data.}
\textbf{Dataset.} Here, we provide additional experimental results using the \emph{Hillstrom Email Marketing dataset} of $n=64000$ customers. Details regarding the dataset and our preprocessing are in Appendix~\ref{app:real}.

\textbf{Results.} Similar to the experiments with simulated data in Fig.~\ref{fig:results_synthetic} and Fig.~\ref{fig:results_synthetic2}, we plot the (estimated) PEHE and policy loss over different values of $\gamma$ and compare it to the baseline CATE $\gamma = 0$ (here: for the DR-learner). The results are shown in Fig~\ref{fig:results_real}. The results are consistent with our experiments on synthetic data: our algorithm is effective in improving the policy loss as compared to standard CATE estimation ($\gamma = 0$).

We also compare the as compared to the behavioral policy that generated the data (i.e., using the propensity score $\pi_b$ as a policy). For this, we report the improvement over the behavioral policy for different values of $\gamma$ in Table~\ref{tab:pol_real}. As we can see, using our PT-CATE algorithm with $\gamma = 0.98$ can lead to a 24.45\% improved response probability as compared to just using a CATE-based policy ($\gamma = 0)$.

\section{Discussion}

In this paper, we showed that standard two-stage CATE estimators can be suboptimal for decision-making and propose a policy-targeted CATE (PT-CATE) to balance estimation and decision performance. Our neural algorithm improves CATE for decision-making while maintaining interpretability as a treatment effect.
\begin{wraptable}{l}{5cm}
\vspace{-0.3cm}
\caption{Improvement of our PT-CATE-based policy over the observational policy.}
\vspace{-0.3cm}
\label{tab:pol_real}
\centering
\scriptsize
\resizebox{5cm}{!}{
\begin{tabular}{lccc}
    \noalign{\smallskip}\toprule\noalign{\smallskip}
     $\gamma$ & Policy value & Improv. & Improv. (\%) \\
     \midrule
     Obs. policy & $1.450 \pm 0.000$ & --- & --- \\
     \midrule
     $\gamma = 0$    & $1.738 \pm 0.045$ & \textcolor{ForestGreen}{0.029} & \textcolor{ForestGreen}{19.827} \\
     $\gamma = 0.2$  & $1.792 \pm 0.014$ & \textcolor{ForestGreen}{0.034} & \textcolor{ForestGreen}{23.579} \\
     $\gamma = 0.4$  & $1.796 \pm 0.009$ & \textcolor{ForestGreen}{0.035} & \textcolor{ForestGreen}{23.823} \\
     $\gamma = 0.8$  & $1.803 \pm 0.004$ & \textcolor{ForestGreen}{0.035} & \textcolor{ForestGreen}{24.346} \\
     $\gamma = 0.9$  & $1.804 \pm 0.006$ & \textcolor{ForestGreen}{0.035} & \textcolor{ForestGreen}{24.423} \\
     $\gamma = 0.98$ & $1.805 \pm 0.006$ & \textcolor{ForestGreen}{0.035} & \textcolor{ForestGreen}{24.451} \\
     \bottomrule
     \multicolumn{4}{l}{Reported: policy values (mean $\pm$ std dev)$\times10$}\\
     \multicolumn{4}{l}{and average improvement over 5 seeds.}
\end{tabular}}
\vspace{-0.8cm}
\end{wraptable}

\textbf{Limitations:} If the second-stage model class is not restricted, our method will not lead to improvement over existing two-stage learners. However, it will also not introduce additional bias as the PT-CATE simplifies to standard CATE. 

\textbf{Societal risks:} As with any causal inference methods, there are risks of misuse if applied without proper understanding of underlying assumptions or in contexts with significant unmeasured confounding. Additionally, automated decision systems based on our approach could perpetuate or amplify existing biases if training data reflects historical inequities. 

\textbf{Future work:} Future directions may include extensions to other settings, such as time series and reinforcement learning (e.g., Q-learning), as well as real-world validation in healthcare and public policy. 

\textbf{Conclusion:} In sum, our method provides practitioners with a principled tool for reliable, data-driven decision-making by improving the decision performance of CATE estimators.



\clearpage

\printbibliography


\clearpage
\appendix

\section{Extended related work}\label{app:rw}

\subsection{Deep learning for CATE estimation}

In recent years, deep neural networks have gained considerable traction for estimating the conditional average treatment effect (CATE) due to scalability reasons and the ability to extract complex features from multimodal data. Notable advances include methods for learning representations that improve CATE estimation through balancing techniques \cite{Johansson.2016, Shalit.2017, Zhang.2020}, disentanglement strategies \cite{Hassanpour.2020}, and the incorporation of inductive biases \cite{Curth.2021, Curth.2021c}. 

While these approaches focus on estimating the nuisance functions $\eta$, they do not estimate CATE directly. Instead, they are compatible with two-stage meta-learners, including the algorithm proposed in this work. Importantly, existing approaches prioritize accuracy in CATE estimation rather than exploring the interplay between CATE estimation and decision-making, which remains a gap in the existing literature.

\subsection{Orthogonal learning}

Orthogonal learning (also called debiased learning) is rooted in semiparametric efficiency theory, which has become widespread in estimating heterogeneous treatment effects \cite{vanderVaart.1998, vanderLaan.2006}. These learners are designed to be robust against errors in nuisance function estimation and offer strong theoretical guarantees.

For CATE estimation, orthogonal learning has been proposed by \cite{Chernozhukov.2018, Foster.2023}. Specific instantiations are, for instance, the DR-learner \cite{vanderLaan.2006b, Kennedy.2023b} and the R-learner \cite{Nie.2021}, where the latter can be interpreted within the broader context of overlap-weighted DR estimators \cite{Morzywolek.2023}. Orthogonal learners are, by construction, two-stage meta-learners and thus are applicable to our proposed framework. In this work, we leverage the DR-learner outlined in Eq.~\eqref{eq:pseudo_outcomes}, but extensions to the R-learner/ overlap-weighted methods could be of interest for future research.

\subsection{Dynamic settings}

Both off-policy learning (OPL) and CATE estimation have a well-established history in dynamic settings, where treatments are administered, and outcomes are observed over time. In the context of CATE estimation, methods have been developed for both model-based approaches \cite{Lim.2018, Bica.2020, Melnychuk.2022} and model-agnostic techniques \cite{Frauen.2025}. For OPL, dynamic treatment regimes have been extensively studied \cite{Murphy.2001, Murphy.2003, Zhang.2018, Kallus.2022c}, alongside reinforcement learning approaches for Markov decision processes under stationarity assumptions \cite{Jiang.2016, Thomas.2016, Kallus.2019c, Kallus.2020, Kallus.2020b}. Extending our approach to dynamic settings presents an intriguing avenue for future work. Such extensions could address challenges unique to temporal data, including time-varying confounding.

\clearpage

\section{Proofs}\label{app:proofs}

\subsection{Proof of Theorem~\ref{thrm:suboptimality}}

\begin{proof}
Let $\mathcal{S}$ be the set of step functions
\begin{equation}
    \mathcal{S} = \left\{ f:[a,b] \to \mathbb{R} \,\middle|
\begin{array}{l}
\exists\, n\in \mathbb{N},\, \exists\, a=x_0 < x_1 < \cdots < x_n = b,\\[1mm]
\exists\, c_1, c_2, \dots, c_n \in \mathbb{R} \text{ such that } f(x)=\sum_{i=1}^{n} c_i\,\mathbf{1}(x \in [x_{i-1},x_i))
\end{array}
\right\} .
\end{equation}
Let $S_\mathcal{G}$ denote the set of step functions $\tau$ such that the optimal approximation $g^\tau$ gets the sign correct everywhere, i.e.,
\begin{equation}
\mathcal{S}_\mathcal{G} = \left\{\tau \in \mathcal{S} \big| \mathrm{sgn}(g_\tau(x)) = \mathrm{sgn}(\tau(x)) \text{ for all } x \text{ and } g_\tau \in \arg\min_{g \in \mathcal{G}} \E[(\tau(X) - g(X))^2] \right\} .
\end{equation}
Note that $S_\mathcal{G}$ is non-empty as any constant positive function is in $S_\mathcal{G}$.

For any $\tau\in\mathcal{S}$, we define the supremum norm as 
$\|\tau\|_\infty = \sup_{x\in [a,b]} |\tau(x)| $. We set
\begin{equation}
M := \sup_{\tau \in \mathcal{S}_\mathcal{G}} \|\tau\|_\infty.
\end{equation}
By definition of the supremum, there exists a sequence $\{\tau_n\}_{n\ge 1} \subset \mathcal{S}_\mathcal{G}$ such that 
\begin{equation}
\|\tau_n\|_\infty \to M \quad \text{as } n\to\infty.
\end{equation}
Hence, for any fixed $\epsilon>0$, one can select an index $n_\epsilon$ such that 
\begin{equation}
\|\tau_{n_\epsilon}\|_\infty > M - \epsilon.
\end{equation}
There, there exists a corresponding approximation 
\begin{equation}
g_{\tau_{n_\epsilon}} \in \arg\min_{g\in \mathcal{G}} \mathbb{E}\bigl[(\tau_{n_\epsilon}(X)-g(X))^2\bigr],
\end{equation}
satisfying 
\begin{equation}
\operatorname{sgn}\bigl(g_{\tau_{n_\epsilon}}(x)\bigr) = \operatorname{sgn}\bigl(\tau_{n_\epsilon}\bigr) \quad \text{for all } x\in [a,b],
\end{equation}
which implies that the corresponding thresholded policy $\pi_{g_{\tau_{n_\epsilon}}}$ is optimal, i.e.,
\begin{equation}\label{eq:app_optimality_epsilon}
\pi_{g_{\tau_{n_\epsilon}}} \in \arg\max_{\pi \in \Pi_\mathcal{G}} V_{\tau_{n_\epsilon}}(\pi).
\end{equation}

Because $\tau_{n_\epsilon} \in \mathcal{S}_\mathcal{G}$ is a step function, there exists an interval $I_{\tau_{n_\epsilon}} \subseteq [a,b]$ of positive measure so that
\begin{equation}
 |\tau_{n_\epsilon}(x)| =\|\tau_{n_\epsilon}^\ast\|_\infty > M - \epsilon \quad \text{for all } x \in I_{\tau_{n_\epsilon}} 
\end{equation}
on which $|\tau^\ast(x)|$ is nearly $\|\tau^\ast\|_\infty$.

Let $\epsilon>0$ be fixed, and let $\delta > \epsilon$. We define the CATE $\tau^\ast$ as 
\begin{equation}
\tau^\ast(x)=
\begin{cases}
\tau_{n_\epsilon}(x) + \delta\, \operatorname{sgn}\bigl(\tau_{n_\epsilon}(x)\bigr) , & \text{if } x\in I,\\[1mm]
\tau_{n_\epsilon}(x) , & \text{if } x\notin I.
\end{cases}
\end{equation}
Then, by definition of $\tau^\ast$, we have 
\begin{equation}
\|\tau^\ast\|_\infty > M ,
\end{equation}
which implies by definition of $M$ that $\tau^\ast$ satisfies 
\begin{equation}
\tau^\ast \notin \mathcal{S}_\mathcal{G}.
\end{equation}
That is, the optimal approximation of $g_{\tau^\ast}$ with functions in $\mathcal{G}$ must fail to preserve the sign of $\tau^\ast$ on a subset of positive measure $\mathcal{E}$, i.e.,
\begin{equation}
\operatorname{sgn}\bigl(g_{\tau^\ast}(x)\bigr) \neq \operatorname{sgn}\bigl(\tau^\ast(x)\bigr) \quad \text{for all } x\in \mathcal{E}.
\end{equation}

We now compare the plug-in policy induced by $g_{\tau^\ast}$, i.e., 
\begin{equation}
\pi_{g_{\tau^\ast}}(x)= \mathbf{1}\left(g_{\tau^\ast}(x)>0\right),
\end{equation}
against the policy obtained by thresholding an approximation of $\tau_{n_\epsilon}(x)$, i.e., 
\begin{equation}
\pi_{g_{\tau_{n_\epsilon}}}(x)=\mathbf{1}\left(g_{\tau_{n_\epsilon}}(x)>0\right) .
\end{equation}
On the set $\mathcal{E}$, the decisions made by $\pi_{g_{\tau_\epsilon}}$ and $\pi^\ast_{\tau_{n_\epsilon}}$ differ in a way such that
\begin{equation}
V_{\tau^\ast}\bigl(\pi_{g_{\tau^\ast}}\bigr) < V_{\tau^\ast}\bigl(\pi_{g_{\tau_{n_\epsilon}}}\bigr).
\end{equation}
Furthermore, $\tau_{n_\epsilon}$ and $\tau^\ast$ have the same sign by definition, which implies together with Eq.~\eqref{eq:app_optimality_epsilon} that
\begin{equation}
\pi_{g_{\tau_{n_\epsilon}}} \in \arg\max_{\pi \in \Pi_\mathcal{G}} V_{\tau^\ast}(\pi) ,
\end{equation}
because $\tau_{n_\epsilon} \in \mathcal{S}_\mathcal{G}$ and thus $g_{\tau_{n_\epsilon}} \in \Pi_\mathcal{G}$. Hence,
\begin{equation}
V_{\tau^\ast}\bigl(\pi_{g_{\tau^\ast}}\bigr) < \max_{\pi \in \Pi_\mathcal{G}} V_{\tau^\ast}(\pi),
\end{equation}
which completes the proof.
\end{proof}

\subsection{Proof of Theorem~\ref{thrm:consistency}}

\begin{proof}
One can show that, for all pseudo-outcomes $Y_{\eta}^{m}$, it holds that 
\begin{equation}
   \E\left[Y_{\eta}^{m} \mid X \right] = \tau(X)
\end{equation}
(see e.g., \cite{Curth.2021} for a proof).
Hence, we can apply the tower property and write
\begin{align}
\mathcal{L}^m_{\gamma, \alpha, {\eta}}(g)
&= (1-\gamma)\,
  \mathbb{E}\bigl[\bigl(Y_{{\eta}}^m - g(X)\bigr)^2\bigr] -  \gamma\,\mathbb{E}\Bigl[Y_\eta^{m} \sigma\left(\alpha(X) g(X)\right)\Bigr] \\
&= (1-\gamma)\,
  \mathbb{E}\bigl[\bigl(Y_{{\eta}}^m - \tau(X) + \tau(X) - g(X)\bigr)^2\bigr] -  \gamma\, \E\left[\mathbb{E}\Bigl[Y_\eta^{m} \sigma\left(\alpha(X) g(X)\right) \big| X \Bigr]\right] \\
&\propto(1-\gamma)\,
  \mathbb{E}\bigl[\bigl(\tau(X) - g(X)\bigr)^2 + 2 \bigl(\tau(X) - g(X)\bigr) \bigl(Y_{{\eta}}^m - \tau(X)\bigr)\bigr] \\
& \quad -  \gamma\, \E\left[\mathbb{E}\Bigl[\tau(X) \sigma\left(\alpha(X) g(X)\right) \big| X \Bigr]\right] \nonumber \\
&= (1-\gamma)\,
  \E\left[\mathbb{E}\bigl[\bigl(\tau(X) - g(X)\bigr)^2 + 2 \bigl(\tau(X) - g(X)\bigr) \bigl(Y_{{\eta}}^m - \tau(X)\bigr)\bigr] \big| X \right] \\
& \quad -  \gamma\, \mathbb{E}\Bigl[\tau(X) \sigma\left(\alpha(X) g(X)\right)\Bigr] \nonumber \\
&= (1-\gamma)\,
  \mathbb{E}\bigl[\bigl(\tau(X) - g(X)\bigr)^2 \bigr] -  \gamma\, \mathbb{E}\Bigl[\tau(X) \sigma\left(\alpha(X) g(X)\right)\Bigr] \\
  &= \mathcal{L}^m_{\gamma, \alpha}(g).
\end{align}
\end{proof}

\subsection{Theoretical result with nuisance errors (Theorem~\ref{thrm:nuisance_errors})}
In the following, we provide a new, slightly stronger theoretical result than in Theorem~\ref{thrm:nuisance_errors} but which guarantees that minimizing our proposed loss results in a reasonable PT-CATE estimator, even when the nuisance functions are estimated with errors. Importantly, we upper bound of the PT-CATE error on the nuisance errors of the respective adjustment method (pseudo-outcome). For the DR pseudo-outcome, \textbf{we establish a doubly robust convergence rate}.

\begin{theorem}
Let $g^\ast = \arg\min_{g \in \mathcal{G}}\mathcal{L}^m_{\gamma, \alpha, \eta}(g)$ and $\hat{g} = \arg\min_{g \in \mathcal{G}}\mathcal{L}^m_{\gamma, \alpha, \hat{\eta} }(g)$ be the minimizers of the PT-CATE loss with ground-truth and estimated nuisances for a fixed indicator approximation $\alpha$ and $\gamma \in [0, 1]$. We assume the following regularity condition: there exists a constant $\delta > 0$, so that, for all $\Bar{g} \in \mathrm{star}(\mathcal{G}, g^\ast) = \{tg^\ast + (1 - t)g | g \in \mathcal{G} \}$, it holds that
\begin{equation}
\frac{\E\left[-Y_{\hat{\eta}}^{m} \sigma^{\prime \prime}\left(\alpha(X)\Bar{g}(X)\right) \alpha(X)^2(\hat{g}(X) - g^\ast(X))^2\right]}{||g^\ast - \hat{g}||^2} \geq \delta,
\end{equation}
where $||g^\ast - \hat{g}||^2 = \E[(g^\ast(X) - \hat{g}(X))^2]$ denotes the squared $L_2$-norm and $\sigma^{\prime \prime}(\cdot)$ denotes the second derivative of the sigmoid function. Furthermore, assume that the propensity estimator and ground-truth response functions are bounded via $p \leq \hat{\pi}(x) \leq 1 - p$ and $|\mu_a(x)| \leq c$ for constants $p, c > 0$ and for all $x \in \mathcal{X}$.

Then, for all $\rho_1, \rho_2 > 0$ so that $(1 - \gamma) \rho_1 + \frac{\gamma}{2} \rho_2 < 1 - \gamma + \frac{\delta}{2} \gamma$, it holds that
\begin{equation}
||g^\ast - \hat{g}||^2  \leq  \frac{R^m_{\gamma, \alpha, \hat{\eta}}(\hat{g}, g^\ast) +   M_{\hat{\eta}, \eta}^m \left( \frac{(1 - \gamma)}{\rho_1} + \gamma \frac{C_\alpha}{2 \rho_2}\right)}{1 - \rho_1 + \gamma\left(\frac{\delta}{2} - 1 + \rho_1 - \frac{\rho_2}{2} \right)},
\end{equation}
where $R^m_{\gamma, \alpha, \hat{\eta}}(\hat{g}, g^\ast) = \mathcal{L}^m_{\gamma, \alpha, \hat{\eta}}(\hat{g}) - \mathcal{L}^m_{\gamma, \alpha, \hat{\eta}}(g^\ast)$ is an optimization-dependent term, $C_\alpha > 0$ is a constant depending on $\alpha$, and $M_{\hat{\eta}, \eta}^m$ is the (pseudo-outcome-dependent) rate term, defined via
\begin{align}
M_{\hat{\eta}, \eta}^\mathrm{PI} &=  M_{\hat{\eta}, \eta}^\mathrm{RA} = 2||\hat{\mu}_1 - \mu_1||^2 + 2||\hat{\mu}_0 - \mu_0||^2 \\
M_{\hat{\eta}, \eta}^\mathrm{IPW} &= \frac{c^2}{p^2}  ||\hat{\pi} - \pi||^2\\
M_{\hat{\eta}, \eta}^\mathrm{DR} &= \frac{2}{p^2}   ||\hat{\pi} - \pi||^2 \left(||\hat{\mu}_1 - \mu_1||^2 + ||\hat{\mu}_0 - \mu_0||^2 \right).
\end{align}
\end{theorem}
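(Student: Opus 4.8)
The plan is to obtain a ``basic inequality'' from the optimality of $\hat{g}$ for the estimated-nuisance loss, then bound the discrepancy between the estimated-nuisance and ground-truth-nuisance losses in terms of pseudo-outcome errors, and finally invoke the curvature condition to convert the loss gap into an $L^2$-distance bound. Concretely, I would first decompose
\begin{align}
\mathcal{L}^m_{\gamma,\alpha,\hat\eta}(\hat g) - \mathcal{L}^m_{\gamma,\alpha,\hat\eta}(g^\ast)
&= \underbrace{\mathcal{L}^m_{\gamma,\alpha,\hat\eta}(\hat g) - \mathcal{L}^m_{\gamma,\alpha,\hat\eta}(g^\ast)}_{= R^m_{\gamma,\alpha,\hat\eta}(\hat g, g^\ast)}
\end{align}
and separately expand $\mathcal{L}^m_{\gamma,\alpha,\eta}(\hat g) - \mathcal{L}^m_{\gamma,\alpha,\eta}(g^\ast)$ into its quadratic part $(1-\gamma)\,\E[(\tau-\hat g)^2 - (\tau-g^\ast)^2]$ and its sigmoid part $-\gamma\,\E[\tau(\sigma(\alpha\hat g) - \sigma(\alpha g^\ast))]$. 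Since $\E[Y^m_\eta\mid X] = \tau(X)$ (as used in Theorem~\ref{thrm:consistency}), the ground-truth-nuisance loss can be written in $\tau$-terms; the key is to control the difference between the $\hat\eta$-version and the $\eta$-version, which reduces to controlling $\E[(Y^m_{\hat\eta} - Y^m_\eta)(\,\cdot\,)]$ against the two ingredients $\|\hat g - g^\ast\|^2$ and the pseudo-outcome error $\E[(Y^m_{\hat\eta}-Y^m_\eta)^2]$ via Cauchy--Schwarz together with the elementary inequality $2ab \le \rho a^2 + \rho^{-1}b^2$ (this is where the free parameters $\rho_1,\rho_2$ enter).

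Next I would verify that $\E[(Y^m_{\hat\eta}-Y^m_\eta)^2] \le M^m_{\hat\eta,\eta}$ for each pseudo-outcome: for PI/RA this is the direct expansion $(\hat\mu_1-\mu_1) - (\hat\mu_0-\mu_0)$ bounded by $2\|\hat\mu_1-\mu_1\|^2 + 2\|\hat\mu_0-\mu_0\|^2$; for IPW it follows from $|Y|\le c$ (from $|\mu_a|\le c$, modulo handling the noise term) and $\pi,\hat\pi \ge p$, giving the $\tfrac{c^2}{p^2}\|\hat\pi-\pi\|^2$ form after bounding the derivative of $z\mapsto 1/(z(1-z))$; and for DR this is the standard product-structure computation showing the leading error term is a product $\|\hat\pi-\pi\|\cdot\|\hat\mu-\mu\|$, hence the doubly robust rate $\tfrac{2}{p^2}\|\hat\pi-\pi\|^2(\|\hat\mu_1-\mu_1\|^2+\|\hat\mu_0-\mu_0\|^2)$. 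For the sigmoid term I would use a second-order Taylor expansion of $\sigma$ around $\alpha g^\ast$: the first-order term vanishes in the combination by first-order optimality of $g^\ast$ for $\mathcal{L}^m_{\gamma,\alpha,\eta}$ (or is absorbed), and the second-order term is exactly what the regularity condition lower-bounds by $\delta\|g^\ast-\hat g\|^2$; the constant $C_\alpha$ collects $\sup_x \alpha(x)$ and the Lipschitz/boundedness constants of $\sigma,\sigma',\sigma''$.

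Assembling the pieces: the quadratic part contributes $(1-\gamma)\|\hat g - g^\ast\|^2$ (by the parallelogram-type identity $\E[(\tau-\hat g)^2 - (\tau - g^\ast)^2] \ge \|\hat g - g^\ast\|^2$ after using optimality of $g^\ast$, which kills the cross term) and the sigmoid part contributes $\gamma(\tfrac{\delta}{2} - 1)\|\hat g - g^\ast\|^2$ after the Taylor analysis, while the nuisance-perturbation terms are bounded by $\rho_1(1-\gamma)\|\hat g-g^\ast\|^2 + \tfrac{\gamma\rho_2}{2}\|\hat g - g^\ast\|^2$ plus the $M^m_{\hat\eta,\eta}$ remainders divided by $\rho_1,\rho_2$. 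Collecting the $\|\hat g-g^\ast\|^2$ coefficients on the left gives precisely the denominator $1 - \rho_1 + \gamma(\tfrac{\delta}{2} - 1 + \rho_1 - \tfrac{\rho_2}{2})$, which is positive by the stated condition on $\rho_1,\rho_2$, and rearranging yields the claimed bound. The main obstacle I anticipate is the bookkeeping in the sigmoid term: carefully showing that the first-order contribution either cancels or is harmless, and matching the exact numerical constants ($\tfrac12$'s and the role of $C_\alpha$) so that the final inequality comes out with the stated denominator rather than a messier one; the PI/RA/IPW pseudo-outcome bounds are routine, and the DR product bound, while slightly more involved, follows the well-known orthogonality computation.
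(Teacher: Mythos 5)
Your overall architecture matches the paper's: a basic inequality from the optimality of $\hat g$ for $\mathcal{L}^m_{\gamma,\alpha,\hat\eta}$, a functional Taylor expansion of the sigmoid term whose second-order remainder at a point of $\mathrm{star}(\mathcal{G},g^\ast)$ is controlled by the stated curvature condition (yielding the $\tfrac{\delta}{2}\gamma$ contribution), cancellation of the oracle cross terms via the first-order optimality of $g^\ast$ for $\mathcal{L}^m_{\gamma,\alpha,\eta}$ (correctly, only their $\gamma$-weighted combination is signed, as you note), and Cauchy--Schwarz plus $2ab\le\rho a^2+\rho^{-1}b^2$ to split the nuisance-perturbation terms into a $\|\hat g-g^\ast\|^2$ piece and an $M^m_{\hat\eta,\eta}$ piece; collecting coefficients then gives exactly the paper's denominator.

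The genuine gap is in how you bound the perturbation term. You propose to control $\E[(Y^m_{\hat\eta}-Y^m_\eta)(\hat g-g^\ast)]$ by Cauchy--Schwarz against the unconditional second moment and then to verify $\E[(Y^m_{\hat\eta}-Y^m_\eta)^2]\le M^m_{\hat\eta,\eta}$. This step fails for the DR and IPW pseudo-outcomes. The raw difference $Y^{\mathrm{DR}}_{\hat\eta}-Y^{\mathrm{DR}}_\eta$ contains first-order contributions from each nuisance separately, so its mean square is of order $\|\hat\mu_a-\mu_a\|^2+\|\hat\pi-\pi\|^2$ rather than their product; no computation on the unconditional second moment can recover the doubly robust rate $M^{\mathrm{DR}}_{\hat\eta,\eta}$. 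Likewise $Y^{\mathrm{IPW}}_{\hat\eta}-Y^{\mathrm{IPW}}_\eta$ is proportional to the outcome $Y$ itself, which is not assumed bounded --- only $|\mu_a|\le c$ is. The needed cancellations occur only after conditioning on $X$: since $\hat g-g^\ast$ depends on $X$ alone, the tower property gives $\E[(Y^m_{\hat\eta}-Y^m_\eta)(\hat g-g^\ast)]=\E[\Delta^m(X)(\hat g(X)-g^\ast(X))]$ with $\Delta^m(X)=\E[Y^m_{\hat\eta}-Y^m_\eta\mid X]$, and it is $\Delta^{\mathrm{DR}}(X)$ that collapses to products of the form $(\pi-\hat\pi)(\hat\mu_a-\mu_a)/\hat\pi$ (using $\E[A\mid X]=\pi(X)$ and $\E[Y\mid X,A]=\mu_A(X)$), while $\Delta^{\mathrm{IPW}}(X)$ involves $\mu_a$ rather than $Y$ and is bounded via $c/p$. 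Cauchy--Schwarz must therefore be applied to $\|\Delta^m\|$, not to the root of the unconditional second moment; this is the one missing idea, and with it the rest of your plan goes through as written.
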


\begin{proof}
Recall that 
\begin{align}
\mathcal{L}^m_{\gamma, \alpha, \eta}(g)  &= (1 - \gamma) \, \E[(Y_\eta^{m} - g(X))^2] - \gamma\,\mathbb{E}\Bigl[Y_\eta^{m} \sigma\left(\alpha(X) g(X)\right)\Bigr] \\ &= (1 - \gamma) \mathcal{L}^m_{\eta, \textrm{MSE}}(g) - \gamma \mathcal{L}^m_{\eta, \alpha}(g) .
\end{align}

We can write 
\begin{align}
\mathcal{L}^m_{\hat{\eta}, \textrm{MSE}}(\hat{g}) &= \E[(Y_{\hat{\eta}}^{m} - g^\ast(X) + g^\ast(X) - \hat{g}(X))^2] \\
&= \mathcal{L}^m_{\hat{\eta}, \textrm{MSE}}(g^\ast) + ||g^\ast - \hat{g}||^2 - 2 \E\left[(Y_{\hat{\eta}}^{m} - g^\ast(X)) (\hat{g}(X) - g^\ast(X))\right] .
\end{align}

For $\mathcal{L}^m_{\hat{\eta}, \alpha}(\hat{g})$, we can do a functional Taylor expansion, i.e., there exists a $\Bar{g} \in \mathrm{star}(\mathcal{G}, g^\ast)$ with
\begin{align}
 \mathcal{L}^m_{\hat{\eta}, \alpha}(\hat{g}) &= \mathcal{L}^m_{\hat{\eta}, \alpha}(g^\ast) + D_g \mathcal{L}^m_{\hat{\eta}, \alpha}(g^\ast)[\hat{g} - g^\ast] + \frac{1}{2} D_g D_g  \mathcal{L}^m_{\hat{\eta}, \alpha}(\Bar{g})[\hat{g} - g^\ast, \hat{g} - g^\ast],
\end{align}
where $D_g$ denotes the functional derivative.

For the first-order derivative, we obtain
\begin{align}
D_g \mathcal{L}^m_{\hat{\eta}, \alpha}(g^\ast)[\hat{g} - g^\ast] &= \frac{d}{dt} \E\left[Y_{\hat{\eta}}^{m} \sigma\left(\alpha(X) (g^\ast(X) + t(\hat{g}(X) - g^\ast(X)))\right)\right]\Big|_{t=0}\\
&= \E\left[Y_{\hat{\eta}}^{m} \sigma^\prime\left(\alpha(X) g^\ast(X)\right)\alpha(X)(\hat{g}(X) - g^\ast(X))\right],
\end{align}
where $\sigma^\prime(\cdot)$ denotes the derivative of the sigmoid function.

For the second-order derivative, we obtain
\begin{align}
& \quad D_g D_g  \mathcal{L}^m_{\hat{\eta}, \alpha}(\Bar{g})[\hat{g} - g^\ast, \hat{g} - g^\ast] \\
&=   \frac{d^2}{dt d\nu} \E\left[Y_{\hat{\eta}}^{m} \sigma\left(\alpha(X) (\Bar{g}(X) + t(\hat{g}(X) - g^\ast(X)) + \nu(\hat{g}(X) - g^\ast(X)))\right)\right]\Big|_{t=\nu=0}\\
&=  \frac{d}{dt} \E\left[Y_{\hat{\eta}}^{m} \sigma^\prime\left(\alpha(X) (\Bar{g}(X) + t(\hat{g}(X) - g^\ast(X)))\right) \alpha(X)(\hat{g}(X) - g^\ast(X))\right]\Big|_{t=0} \\
&= \E\left[Y_{\hat{\eta}}^{m} \sigma^{\prime \prime}\left(\alpha(X)\Bar{g}(X)\right) \alpha(X)^2(\hat{g}(X) - g^\ast(X))^2\right] \\
& \leq -\delta ||g^\ast - \hat{g}||^2,
\end{align}
where the last inequality follows from the regularity assumption.

Putting everything together, we obtain
that
\begin{align}
 \mathcal{L}^m_{\gamma, \alpha, \hat{\eta}}(\hat{g}) & \geq (1 - \gamma) \left(\mathcal{L}^m_{\hat{\eta}, \textrm{MSE}}(g^\ast) +   ||g^\ast - \hat{g}||^2 - 2 \E\left[(Y_{\hat{\eta}}^{m} - g^\ast(X)) (\hat{g}(X) - g^\ast(X))\right]\right) \\
 & \quad - \gamma \left(\mathcal{L}^m_{\hat{\eta}, \alpha}(g^\ast) + \E\left[Y_{\hat{\eta}}^{m} \sigma^\prime\left(\alpha(X) g^\ast(X)\right)\alpha(X)(\hat{g}(X) - g^\ast(X))\right] - \frac{\delta}{2} ||g^\ast - \hat{g}||^2\right)
\end{align}
or equivalently
\begin{align}
(1 - \gamma + \frac{\delta}{2} \gamma) ||g^\ast - \hat{g}||^2 & \leq R^m_{\gamma, \alpha, \hat{\eta}}(\hat{g}, g^\ast) + 2 (1 - \gamma) \E\left[(Y_{\hat{\eta}}^{m} - g^\ast(X)) (\hat{g}(X) - g^\ast(X))\right] \\
& \quad + \gamma \E\left[Y_{\hat{\eta}}^{m} \sigma^\prime\left(\alpha(X) g^\ast(X)\right)\alpha(X)(\hat{g}(X) - g^\ast(X))\right] ,
 \end{align}
where $R^m_{\gamma, \alpha, \hat{\eta}}(\hat{g}, g^\ast) = \mathcal{L}^m_{\gamma, \alpha, \hat{\eta}}(\hat{g}) - \mathcal{L}^m_{\gamma, \alpha, \hat{\eta}}(g^\ast)$.

\begin{align}
    \E\left[(Y_{\hat{\eta}}^{m} - g^\ast(X)) (\hat{g}(X) - g^\ast(X))\right] &= \E\left[(Y_{\hat{\eta}}^{m} - Y_{\eta}^{m}) (\hat{g}(X) - g^\ast(X))\right] \\ & \quad + \E\left[(Y_{\eta}^{m} - g^\ast(X)) (\hat{g}(X) - g^\ast(X))\right] \\
    &= \E\left[(\Delta^m(X) (\hat{g}(X) - g^\ast(X))\right]
    \\ & \quad + \E\left[(\tau(X) - g^\ast(X)) (\hat{g}(X) - g^\ast(X))\right] \\
\end{align}
with $\Delta^m(X) = \E[Y_{\hat{\eta}}^{m} - Y_{\eta}^{m}| X]$. Similarly,
\begin{align}
& \quad \E\left[Y_{\hat{\eta}}^{m} \sigma^\prime\left(\alpha(X) g^\ast(X)\right)\alpha(X)(\hat{g}(X) - g^\ast(X))\right] \\ &=  \E\left[\Delta^m(X) \sigma^\prime\left(\alpha(X) g^\ast(X)\right)\alpha(X)(\hat{g}(X) - g^\ast(X))\right] \\
 & \quad+  \E\left[\tau(X) \sigma^\prime\left(\alpha(X) g^\ast(X)\right)\alpha(X)(\hat{g}(X) - g^\ast(X))\right].
\end{align}

Putting together, we obtain
\begin{align}
(1 - \gamma + \frac{\delta}{2} \gamma) ||g^\ast - \hat{g}||^2 & \leq R^m_{\gamma, \alpha, \hat{\eta}}(\hat{g}, g^\ast) + 2 (1 - \gamma) \E\left[(\Delta^m(X) (\hat{g}(X) - g^\ast(X))\right]  \\
& \quad  + 2 (1 - \gamma) \E\left[(\tau(X) - g^\ast(X)) (\hat{g}(X) - g^\ast(X))\right] \\
& + \gamma\E\left[\Delta^m(X) \sigma^\prime\left(\alpha(X) g^\ast(X)\right)\alpha(X)(\hat{g}(X) - g^\ast(X))\right] \\
& + \gamma  \E\left[\tau(X) \sigma^\prime\left(\alpha(X) g^\ast(X)\right)\alpha(X)(\hat{g}(X) - g^\ast(X))\right].
\end{align}

Note that
\begin{align}
   &2(1 - \gamma)\E\left[(\tau(X) - g^\ast(X)) (\hat{g}(X) - g^\ast(X))\right] \\ & \quad + \gamma \E\left[\tau(X) \sigma^\prime\left(\alpha(X) g^\ast(X)\right)\alpha(X)(\hat{g}(X) - g^\ast(X))\right] \\
   = & - (1 - \gamma) D_g \mathcal{L}^m_{{\eta}, \textrm{MSE}}(g^\ast)[\hat{g} - g^\ast] - \gamma D_g \mathcal{L}^m_{\eta, \alpha}(g^\ast)[\hat{g} - g^\ast] \\
   = & - D_g \mathcal{L}^m_{\gamma, \alpha, \eta}(g^\ast)[\hat{g} - g^\ast] \\
   \leq &0
\end{align}
because $g^\ast$ is a minimizer of the oracle nuisance loss $\mathcal{L}^m_{\gamma, \alpha, \eta}$. Hence,
\begin{align}
(1 - \gamma + \frac{\delta}{2} \gamma) ||g^\ast - \hat{g}||^2 & \leq R^m_{\gamma, \alpha, \hat{\eta}}(\hat{g}, g^\ast) + 2 (1 - \gamma) \E\left[(\Delta^m(X) (\hat{g}(X) - g^\ast(X))\right]  \\
& + \gamma\E\left[\Delta^m(X) \sigma^\prime\left(\alpha(X) g^\ast(X)\right)\alpha(X)(\hat{g}(X) - g^\ast(X))\right].
\end{align}

For the different pseudo outcomes, we can write
\begin{align}
 \Delta^\mathrm{PI}(X) &= \hat{\mu}_1(X) - \mu_1(X) + \hat{\mu}_0(X) - \mu_0(X) \\
 \Delta^\mathrm{RA}(X) &= \pi(X) (\mu_0(X) - \hat{\mu}_0(X)) + (1 - \pi(X)) (\hat{\mu}_1(X) - \mu_1(X)) \\
\Delta^\mathrm{IPW}(X) &= \frac{\mu_1(X)}{\hat{\pi}(X)} (\pi(X) - \hat{\pi}(X)) - \frac{\mu_0(X)}{1 - \hat{\pi}(X)} (\hat{\pi}(X) - \pi(X)) \\
\Delta^\mathrm{DR}(X) &= \frac{1}{\hat{\pi}(X)} (\pi(X) - \hat{\pi}(X)) (\hat{\mu}_1(X) - \mu_1(X)) - \frac{1}{1 - \hat{\pi}(X)} (\hat{\pi}(X) - \pi(X)) (\mu_0(X) - \hat{\mu}_0(X))
\end{align}

By applying the Cauchy-Schwarz inequality, we obtain
\begin{align}
 \E\left[(\Delta^\mathrm{PI}(X) (\hat{g}(X) - g^\ast(X))\right]&\leq ||\hat{g} - g^\ast|| \left(  ||\hat{\mu}_1 - \mu_1|| + ||\hat{\mu}_0 - \mu_0|| \right) \\
 \E\left[(\Delta^\mathrm{RA}(X) (\hat{g}(X) - g^\ast(X))\right] &\leq ||\hat{g} - g^\ast|| \left(  ||\hat{\mu}_1 - \mu_1|| + ||\hat{\mu}_0 - \mu_0|| \right)\\
\E\left[(\Delta^\mathrm{IPW}(X) (\hat{g}(X) - g^\ast(X))\right] &\leq ||\hat{g} - g^\ast||  {\frac{c}{p}} ||\hat{\pi} - \pi||\\
\E\left[(\Delta^\mathrm{DR}(X) (\hat{g}(X) - g^\ast(X))\right] &\leq ||\hat{g} - g^\ast|| \left(  {\frac{1}{p}} ||\hat{\pi} - \pi|| \left(||\hat{\mu}_1 - \mu_1|| + ||\hat{\mu}_0 - \mu_0|| \right) \right). 
\end{align}

By applying AM-GM inequality and the fact that $(a+b)^2 \leq 2(a^2 + b^2)$, it holds for any $\rho > 0$ that
\begin{align}
 2\E\left[(\Delta^\mathrm{PI}(X) (\hat{g}(X) - g^\ast(X))\right]&\leq \rho_1||\hat{g} - g^\ast||^2 + \frac{2}{\rho_1} \left(||\hat{\mu}_1 - \mu_1||^2 + ||\hat{\mu}_0 - \mu_0||^2 \right)  \\
 2\E\left[(\Delta^\mathrm{RA}(X) (\hat{g}(X) - g^\ast(X))\right] &\leq \rho_1||\hat{g} - g^\ast||^2 + \frac{2}{\rho_1} \left(||\hat{\mu}_1 - \mu_1||^2 + ||\hat{\mu}_0 - \mu_0||^2 \right)\\
2\E\left[(\Delta^\mathrm{IPW}(X) (\hat{g}(X) - g^\ast(X))\right] &\leq \rho_1||\hat{g} - g^\ast||^2 + \frac{c^2}{\rho_1 p^2}  ||\hat{\pi} - \pi||^2\\
2\E\left[(\Delta^\mathrm{DR}(X) (\hat{g}(X) - g^\ast(X))\right] &\leq \rho_1||\hat{g} - g^\ast||^2 + \frac{2}{\rho_1 p^2}   ||\hat{\pi} - \pi||^2 \left(||\hat{\mu}_1 - \mu_1||^2 + ||\hat{\mu}_0 - \mu_0||^2 \right).
\end{align}
We can write this in generalized form via
\begin{equation}
2\E\left[(\Delta^\mathrm{m}(X) (\hat{g}(X) - g^\ast(X))\right] \leq \rho_1||\hat{g} - g^\ast||^2 + \frac{1}{\rho_1} M_{\hat{\eta}, \eta}^m.
\end{equation}

Using the same arguments and the fact that we can upperbound $\sigma^\prime\left(\alpha(X) g^\ast(X)\right)^2\alpha(X)^2 \leq C_\alpha$ for some constant $C_\alpha > 0$ , we obtain
\begin{align}
 \E\left[\Delta^m(X) \sigma^\prime\left(\alpha(X) g^\ast(X)\right)\alpha(X)(\hat{g}(X) - g^\ast(X))\right]&\leq \frac{\rho_2}{2}||\hat{g} - g^\ast||^2 + \frac{C_\alpha}{2 \rho_2} M_{\hat{\eta}, \eta}^m. 
\end{align}

Hence, it holds that
\begin{align}
(1 - \gamma + \frac{\delta}{2} \gamma) ||g^\ast - \hat{g}||^2 & \leq R^m_{\gamma, \alpha, \hat{\eta}}(\hat{g}, g^\ast) +  (1 - \gamma) \left(\rho_1||\hat{g} - g^\ast||^2 + \frac{1}{\rho_1} M_{\hat{\eta}, \eta}^m  \right)  \\
& \quad + \gamma \left( \frac{\rho_2}{2}||\hat{g} - g^\ast||^2 + \frac{C_\alpha}{2 \rho_2} M_{\hat{\eta}, \eta}^m \right),
\end{align}
or, equivalently, for $\rho_1, \rho_2$ so that $(1 - \gamma) \rho_1 + \frac{\gamma}{2} \rho_2 < 1 - \gamma + \frac{\delta}{2} \gamma$, we have
\begin{align}
||g^\ast - \hat{g}||^2 & \leq \frac{R^m_{\gamma, \alpha, \hat{\eta}}(\hat{g}, g^\ast) + (1 - \gamma)  \frac{1}{\rho_1} M_{\hat{\eta}, \eta}^m + \gamma \frac{C_\alpha}{2 \rho_2} M_{\hat{\eta}, \eta}^m}{1 - \gamma + \frac{\delta}{2} \gamma - (1 - \gamma) \rho_1 - \frac{\gamma}{2} \rho_2} \\
&= \frac{R^m_{\gamma, \alpha, \hat{\eta}}(\hat{g}, g^\ast) +   M_{\hat{\eta}, \eta}^m \left( \frac{(1 - \gamma)}{\rho_1} + \gamma \frac{C_\alpha}{2 \rho_2}\right)}{1 - \rho_1 + \gamma\left(\frac{\delta}{2} - 1 + \rho_1 - \frac{\rho_2}{2} \right)}.
\end{align}

\end{proof}

\clearpage

\section{Implementation details}\label{app:implementation}

\textbf{Estimation of nuisance functions} We estimate all nuisance functions $\mu_1$, $\mu_0$, and $\pi_b$ with standard feed-forward neural networks using 4 layers with tanh activations. The response functions $\mu_a$ are regression functions, which we fit by minimizing the MSE loss on the filtered datasets where we condition on $A=a$. Estimating the propensity score $\pi_b$ is a classification task so that we apply a sigmoid output activation function and minimize the binary cross entropy loss. For the synthetic experiments, we mimic randomized controlled trials (RCTs) and use the ground-truth propensity score which we assume to be known.

\textbf{Second-stage model:} For our second-stage model (Fig.~\ref{fig:architecture}), we model each of $g_\theta$ and $\alpha_\phi$ as feed-forward neural networks with 4 layers with tanh activations for $g_\theta$ and ReLU activations for $\alpha_\phi$. For the experiments with linear $g_\theta$ (i.e., Fig.~\ref{fig:toy_example_1} and Fig.~\ref{fig:results_synthetic}), we set $g_\theta$ to a single linear layer. For the experiments with regularized $g_\theta$ (i.e., Fig.~\ref{fig:toy_example_2} and Fig.~\ref{fig:results_synthetic2}), we choose a custom regularization parameter for each pseudo-outcome type that yields a misspecified initial CATE estimate. For the synthetic experiments in Fig~\ref{fig:results_synthetic} and Fig~\ref{fig:results_synthetic2}, we normalize $g_\theta$ by applying a tanh output activation in step 2 of our learning algorithm (Algorithm~\ref{alg:learning_algorithm}) as we observed that this can stabilize the optimization of $\alpha_\phi$. We also applied a weighting scheme in step 3 via $1/\alpha_\phi(X)$ to further encourage sharp indicator approximation of regions where the initial CATE is correct. We ran our algorithm for $K=1$ iteration.

\textbf{Hyperparameters.} To ensure a fair comparison, we use the same hyperparameters for each second-stage learner across different $\gamma$ and random seeds. For reproducibility purposes, we report the hyperparameters used (e.g., dimensions, learning rate) for all experiments and models (including nuisance functions) as \texttt{.yaml} files.\footnote{Code is available at \url{https://anonymous.4open.science/r/CATEForPolicy-C67E}.}

\textbf{Runtime.} For the second stage models, training took approximately two minutes using $n=2000$ samples and a standard computer with AMD Ryzen 7 Pro CPU and 32GB of RAM.

\textbf{Full learning algorithm.} The full learning algorithm is reported in Algorithm~\ref{alg:learning_algorithm} below.

\begin{algorithm}[H]
\caption{Retargeted CATE estimation (PT-CATE)}
\label{alg:learning_algorithm}
\begin{algorithmic}[1]
\small
\STATE \textbf{Input:} Training data \(\{(x_i,a_i,y_i)\}_{i=1}^n\), pseudo-outcome type \(m\), trade-off \(\gamma\in[0,1]\), learning rates \(\eta_g,\eta_\alpha\), epochs \(E_1,E_2,E_3\), iterations \(K\).
\STATE \textbf{Stage 1:} Estimate nuisance functions \(\hat{\eta}=(\hat{\mu}_1,\hat{\mu}_0,\hat{\pi}_b)\); compute pseudo-outcomes \(\{y^m_{\hat{\eta},i}\}\).
\STATE \textbf{Stage 2:} Initialize parameters \(\theta\) (for \(g\)) and \(\phi\) (for \(\alpha\)).
\FOR{\(epoch=1,\ldots,E_1\)} 
  \STATE \(\theta\leftarrow \theta-\eta_g\,\nabla_\theta\hat{\mathcal{L}}^m_{0,\alpha_\phi, \hat{\eta}}(g_\theta)\) \COMMENT{Step 1}
\ENDFOR
\FOR{\(iter=1,\ldots,K\)}
  \FOR{\(epoch=1,\ldots,E_2\)}
    \STATE \(\phi\leftarrow \phi-\eta_\alpha\,\nabla_\phi\hat{\mathcal{L}}^m_{\gamma,g_\theta, \hat{\eta}}(\alpha_\phi)\) \COMMENT{Step 2}
  \ENDFOR
  \FOR{\(epoch=1,\ldots,E_3\)}
    \STATE \(\theta\leftarrow \theta-\eta_g\,\nabla_\theta\hat{\mathcal{L}}^m_{\gamma,\alpha_\phi, \hat{\eta}}(g_\theta)\) \COMMENT{Step 3}
  \ENDFOR
\ENDFOR
\STATE \textbf{Output:} \(g_\theta\) and \(\alpha_\phi\).
\end{algorithmic}
\end{algorithm}

\clearpage

\section{Details regarding simulated data}\label{app:sim}

\textbf{Data-generating process.} Our general data-generating process for simulating datasets is as follows: we start by simulating initial confounders $X \sim\mathcal{U}[0, 1]$ from a uniform distribution. Then, we simulate binary treatments via
\begin{equation}\label{eq:app_sim_datacomp}
A \mid X \sim  \textrm{Bernoulli}\left(\pi_b(X)\right) 
\end{equation}
for some propensity score $\pi_b(X)$. Finally, we simulate continuous outcomes via
\begin{equation}
Y \mid X, A \sim \mathcal{N}(\mu_A(X), \varepsilon),
\end{equation}
where $\mu_A(X)$ denotes the response function and $\varepsilon = 0.01$ the noise level.

$\bullet$\,\textbf{Dataset for Fig.~\ref{fig:toy_example_2}.} Here, we emulate an RCT and set the propensity score to $\pi_b(X) = 0.5$. We set the response function to $\mu_a(x) = a (2\sigma(10x) - 0.5)$, where $\sigma(\cdot)$ denotes the sigmoid function. We sample a training dataset of size $n_\text{train}=1000$ and a test dataset of size $n_\text{test}=3000$.

$\bullet$\,\textbf{Dataset for Fig.~\ref{fig:toy_example_1}.} Here, we again emulate an RCT and set the propensity score to $\pi_b(X) = 0.5$. We set the response function to $\mu_a(x) = a (\mathbf{1}(x < -0.25)(0.6\sin(8(x + 0.25)) + 0.3) + \mathbf{1}(-0.25 < x < 0.25)(2\sigma(10(x+2))-0.5) + \mathbf{1}(x > 0.25)(0.5\sin(10(x-0.25)+1.5))$. We sample a training dataset of size $n_\text{train}=1000$ and a test dataset of size $n_\text{test}=3000$.

$\bullet$\,\textbf{Dataset for Fig.~\ref{fig:results_synthetic}.} Here, we use the same propensity and response functions as in Fig.~\ref{fig:toy_example_2}. We sample a training dataset of size $n_\text{train}=2200$ and a test dataset of size $n_\text{test}=3000$.

$\bullet$\,\textbf{Dataset for Fig.~\ref{fig:results_synthetic2}.} Here, we set the propensity score to $\pi_b(X) = \sigma(0.1 x)$. We then define the response function as $\mu_a(x) = a (\mathbf{1}(x < -0.25)(0.6\sin(8(x + 0.25)) + 0.3) + \mathbf{1}(-0.25 < x < 0.25)(2\sigma(10(x+2))-0.5) + \mathbf{1}(x > 0.25)(0.5\sin(10(x-0.25)+1.5))$. We sample a training dataset of size $n_\text{train}=2200$ and a test dataset of size $n_\text{test}=3000$.

\clearpage

\section{Details regarding real-world data}\label{app:real}

The data is taken from \url{https://causeinfer.readthedocs.io/en/latest/data/hillstrom.html}. The dataset consists of $n=64000$ customers who purchased a product within the last 12 months and who were involved in an email experiment: group 1 randomly received an email advertising merchandise for men, group 2 for women, and group 3 did not receive an email (control). We study the effect of receiving a men's merchandise email ($A=1$) versus receiving no email at all ($A=0$). Covariates $X$ include various customer features such as purchasing history. Finally, we chose $Y$ an indicator of whether people responded to the email (by clicking on the link to the website) as our outcome $Y$. We split the data into a training dataset with $50\%$ of the data, a validation set with $20\%$, and a test set with $30\%$ of the data. All details regarding our data preprocessing are provided within our codebase.\footnote{Code is available at \url{https://anonymous.4open.science/r/CATEForPolicy-C67E}.}

\clearpage
\section{Additional experiments}\label{app:exp}

\subsection{Motivational experiments with estimated nuisance functions}

\begin{figure}[ht]
  \centering
\includegraphics[width=0.5\linewidth]{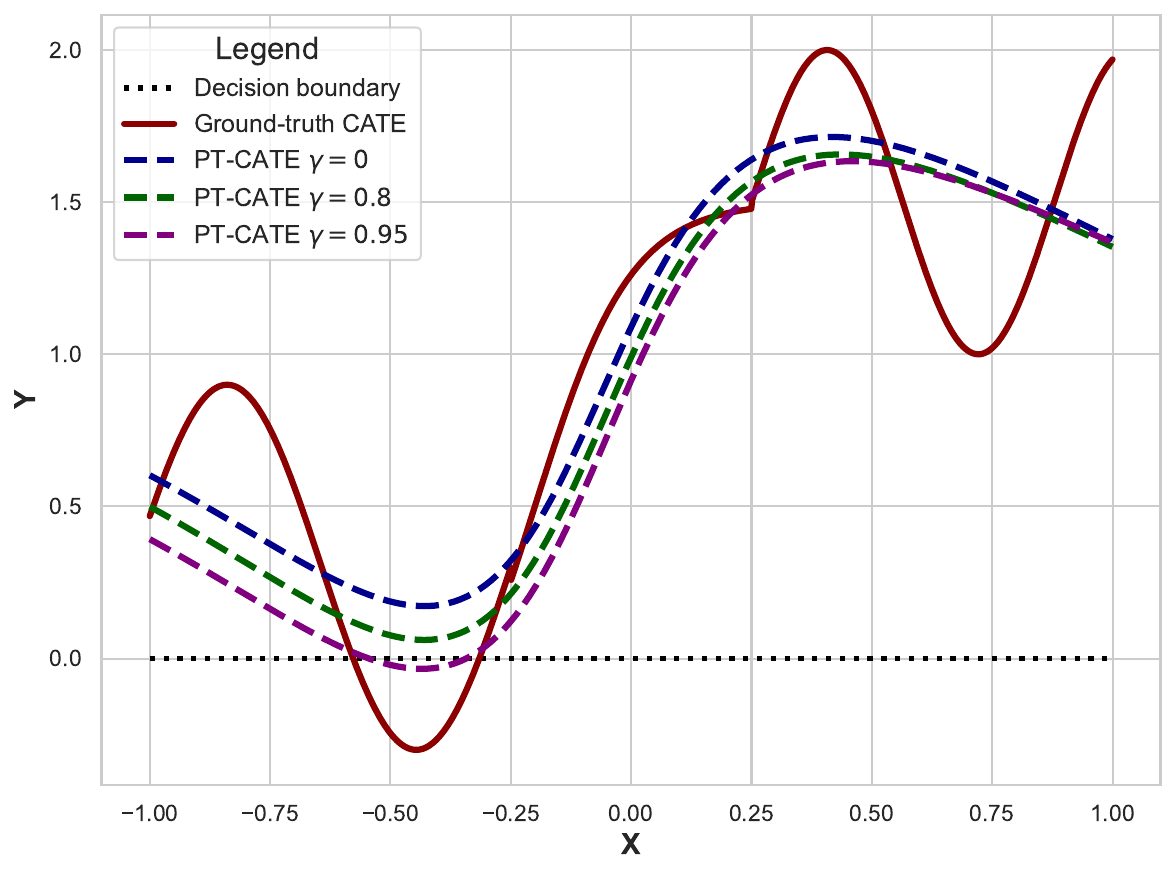}
  \caption{\textbf{Results from Figure 1 of the main paper but with using estimated nuisance functions and doubly robust pseudo-outcomes.} The dotted lines show regularized two-stage CATE estimators. The \textcolor{NavyBlue}{blue} line corresponds to standard two-stage CATE estimation, while the \textcolor{ForestGreen}{green} and \textcolor{violet}{violet} lines are generated by our method for different values of $\gamma$. \textbf{The results remain robust.}}
  \label{fig:toy_example_2_dr}
\end{figure}

\vspace{1cm}

\begin{figure}[ht]
  \centering
  \includegraphics[width=0.5\linewidth]{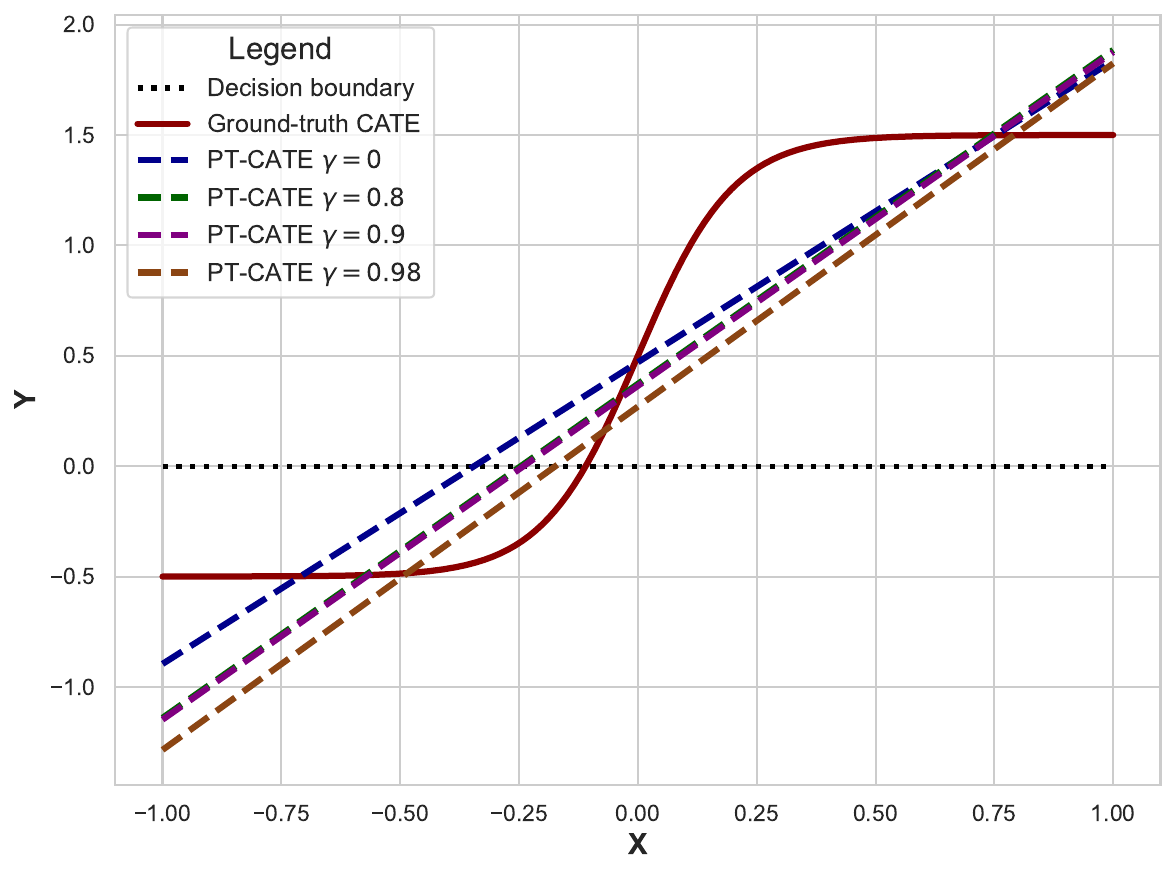}
  \caption{\textbf{Results from Figure 2 of the main paper but with using estimated nuisance functions and doubly robust pseudo-outcomes.} The dotted lines show regularized two-stage CATE estimators. The \textcolor{NavyBlue}{blue} line corresponds to standard two-stage CATE estimation, while the other ones show the retargeted CATE estimators using our proposed loss with different values for $\gamma$. \textbf{The results remain robust.}}
  \label{fig:toy_example_1_dr}
\end{figure}

\clearpage

\subsection{Experiments with sample splitting}

\begin{figure}[ht]
 \centering
  \resizebox{0.60\linewidth}{!}{%
    \begin{minipage}{\linewidth}
      \centering
      \includegraphics[width=0.48\linewidth]{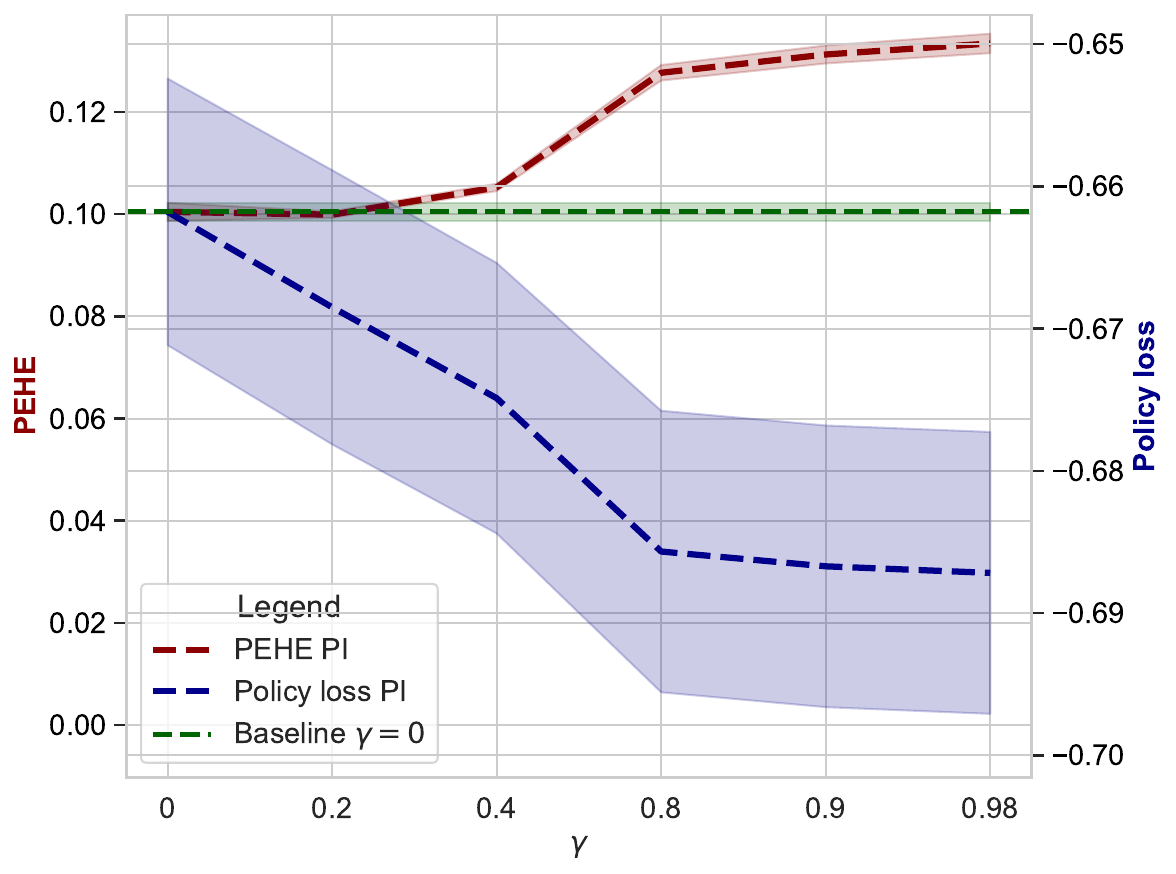}
      \includegraphics[width=0.48\linewidth]{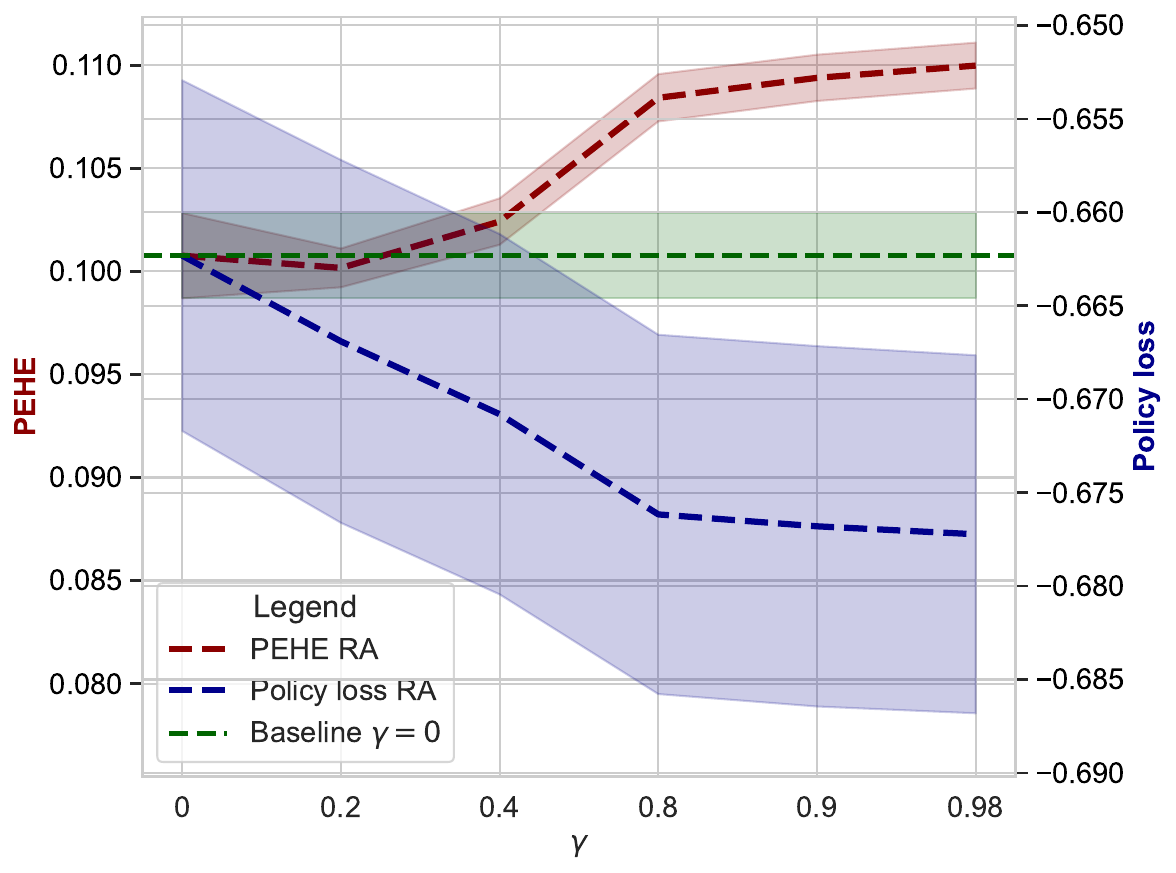}
      \includegraphics[width=0.48\linewidth]{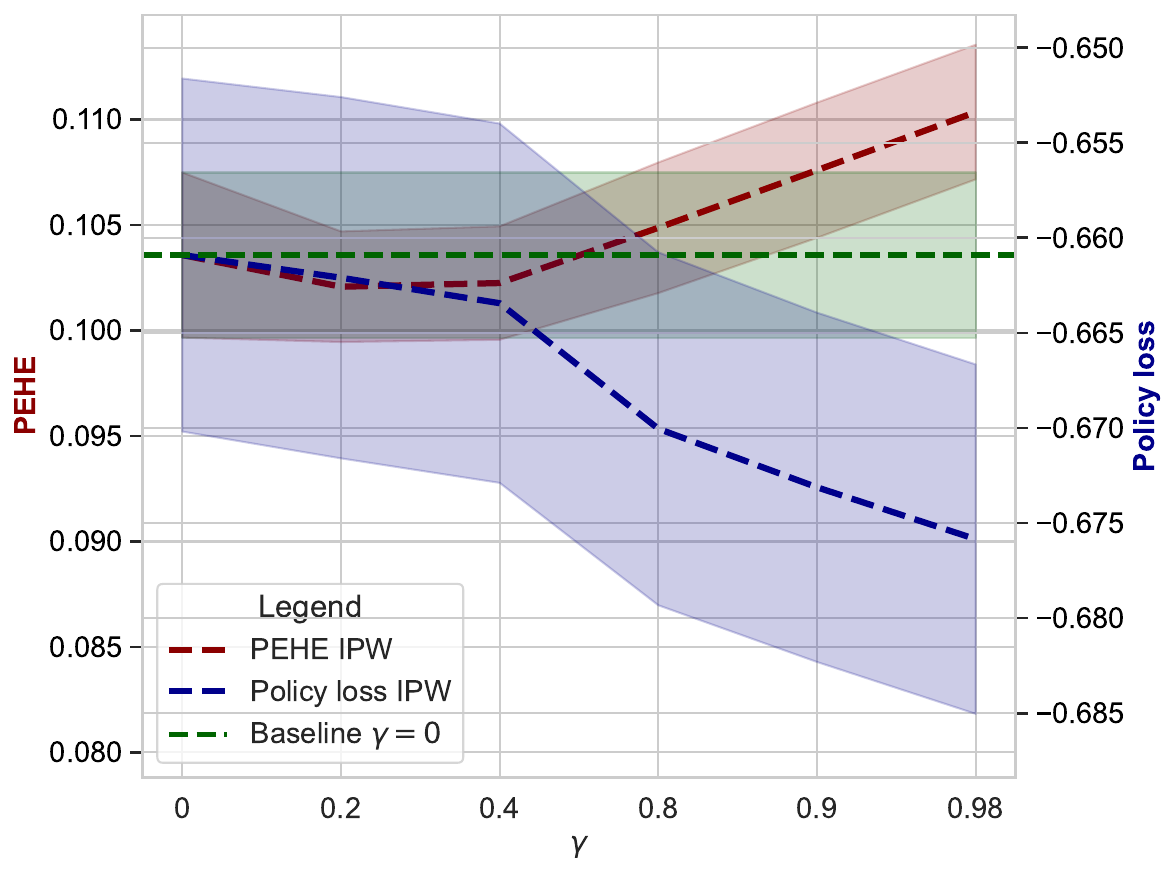}
      \includegraphics[width=0.48\linewidth]{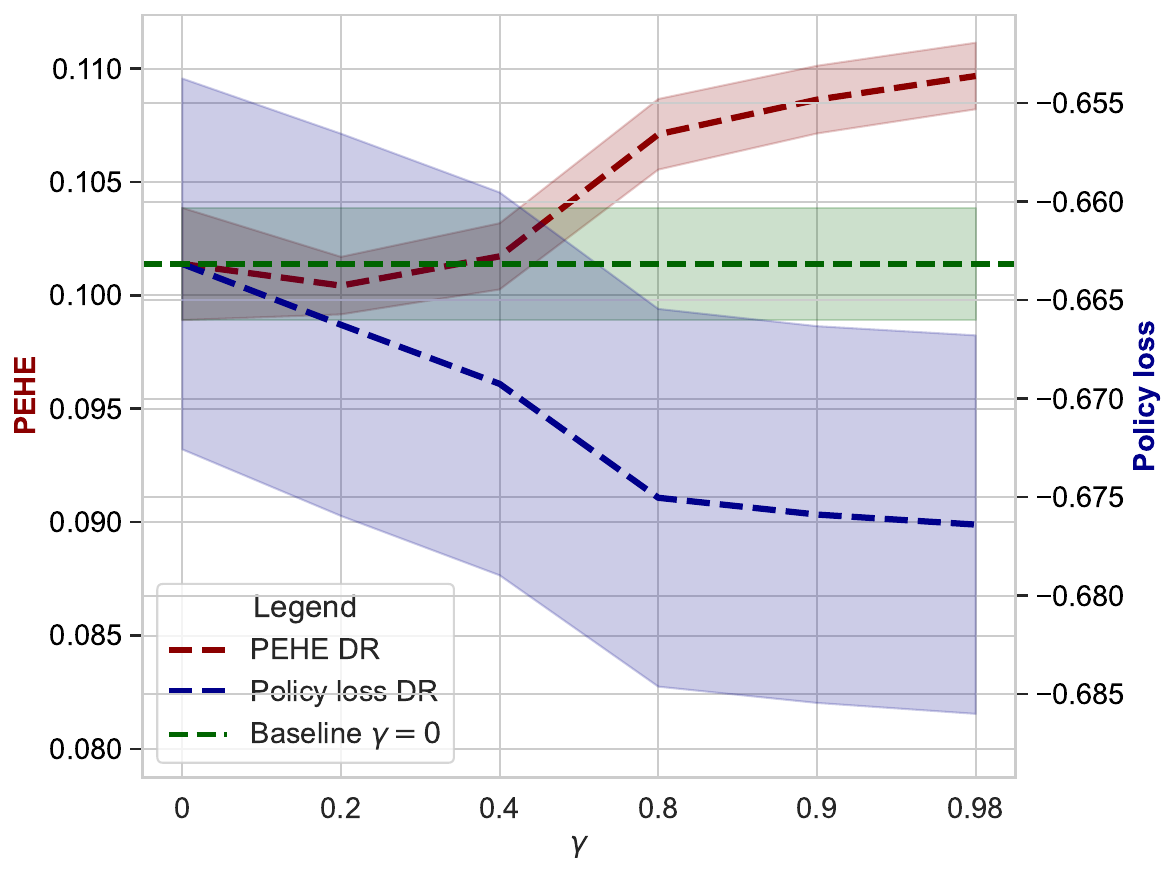}
    \end{minipage}
  }
\vspace{-0.3cm}
\caption{\textbf{Experimental results for setting A with sample splitting.} We re-ran our experiments from Fig. 4 of the main paper but use sample splitting. Shown: PEHE and policy loss over $\gamma$ (lower $=$ better) with mean and standard errors over $5$ runs. Importantly, \textbf{the results are consistent with the results of our main paper.}}
\vspace{-0.2cm}
\label{fig:results_synthetic_sample_splitting}
\end{figure}
\vspace{0.5cm}

\begin{figure}[ht]
 \centering
  \resizebox{0.60\linewidth}{!}{%
    \begin{minipage}{\linewidth}
      \centering
      \includegraphics[width=0.48\linewidth]{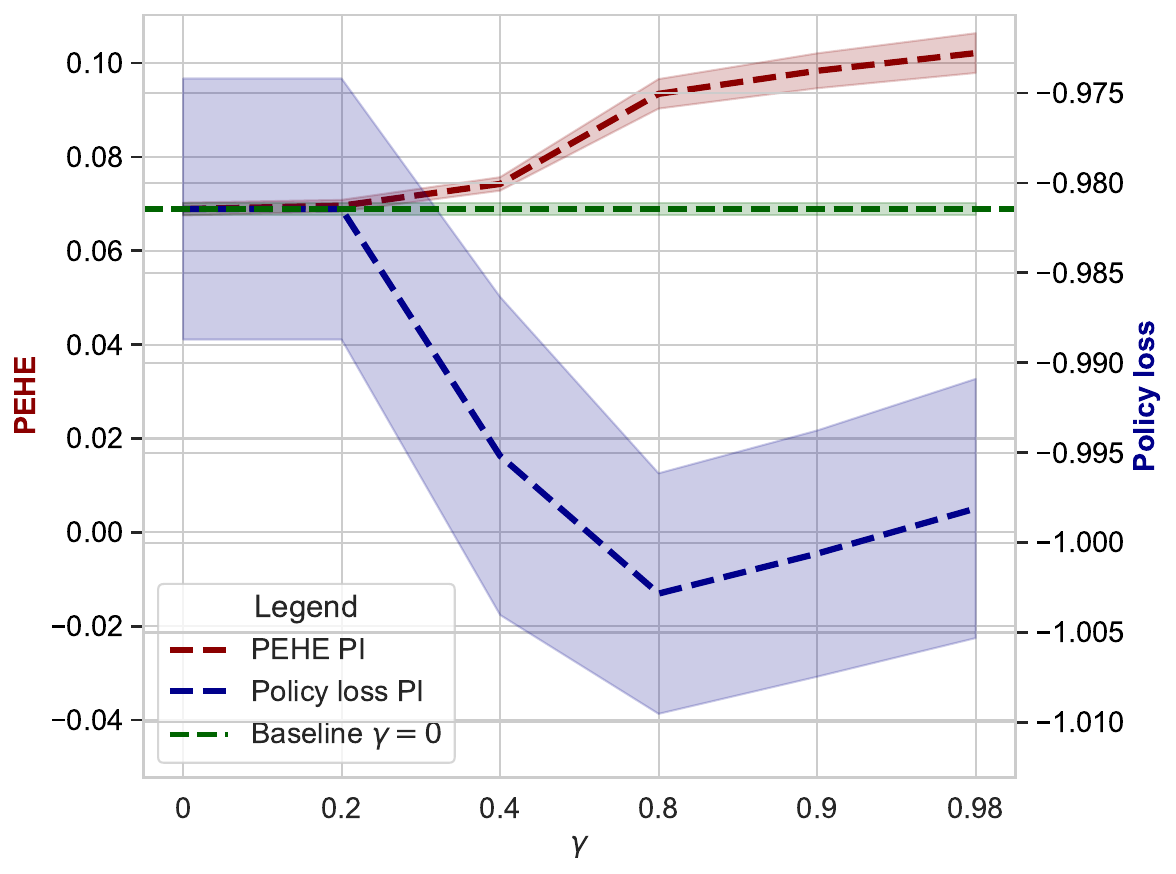}
      \includegraphics[width=0.48\linewidth]{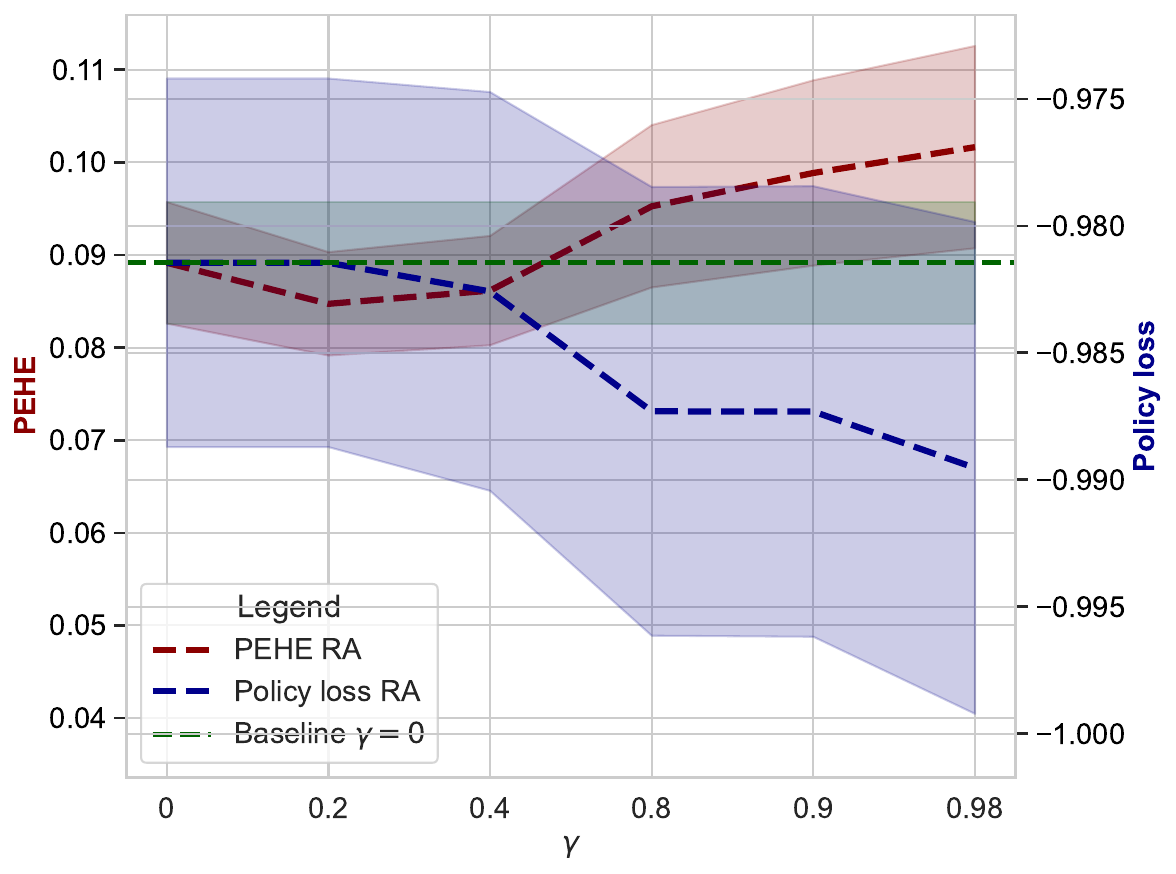}
      \includegraphics[width=0.48\linewidth]{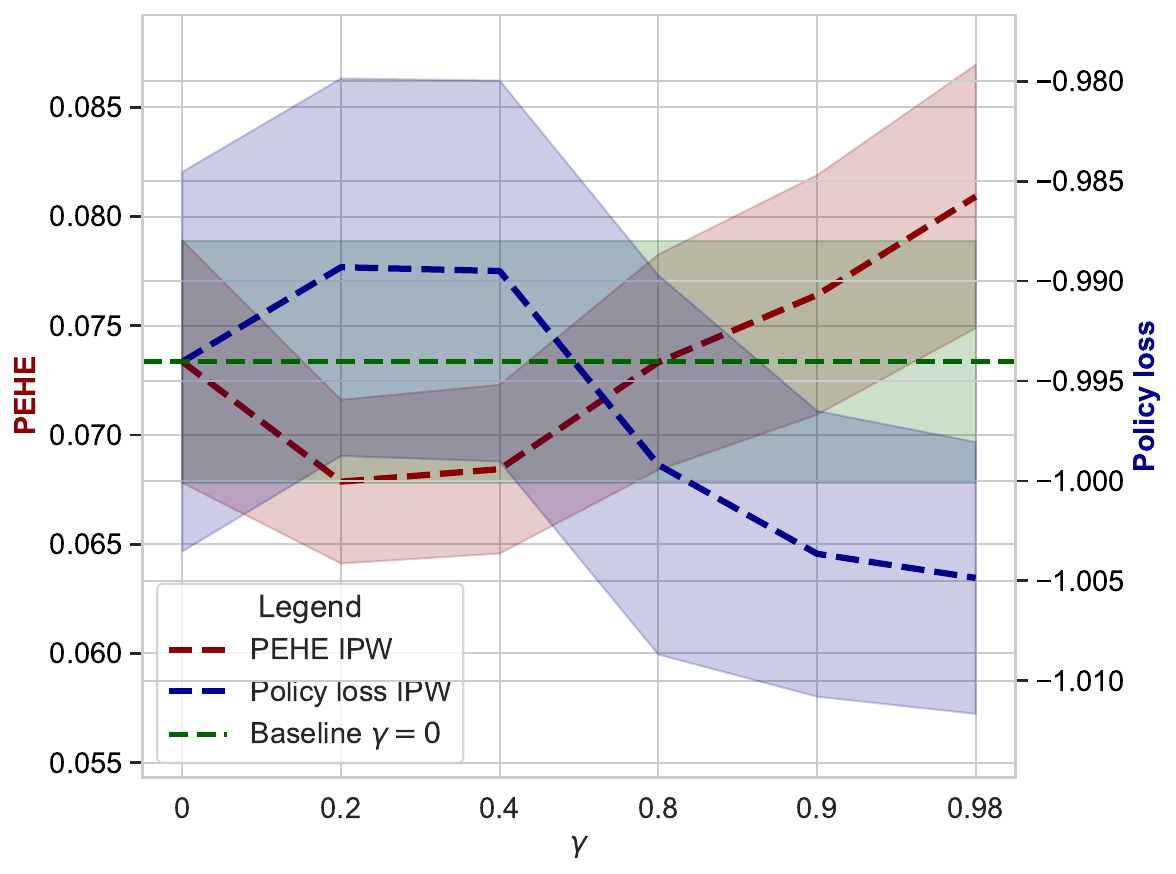}
      \includegraphics[width=0.48\linewidth]{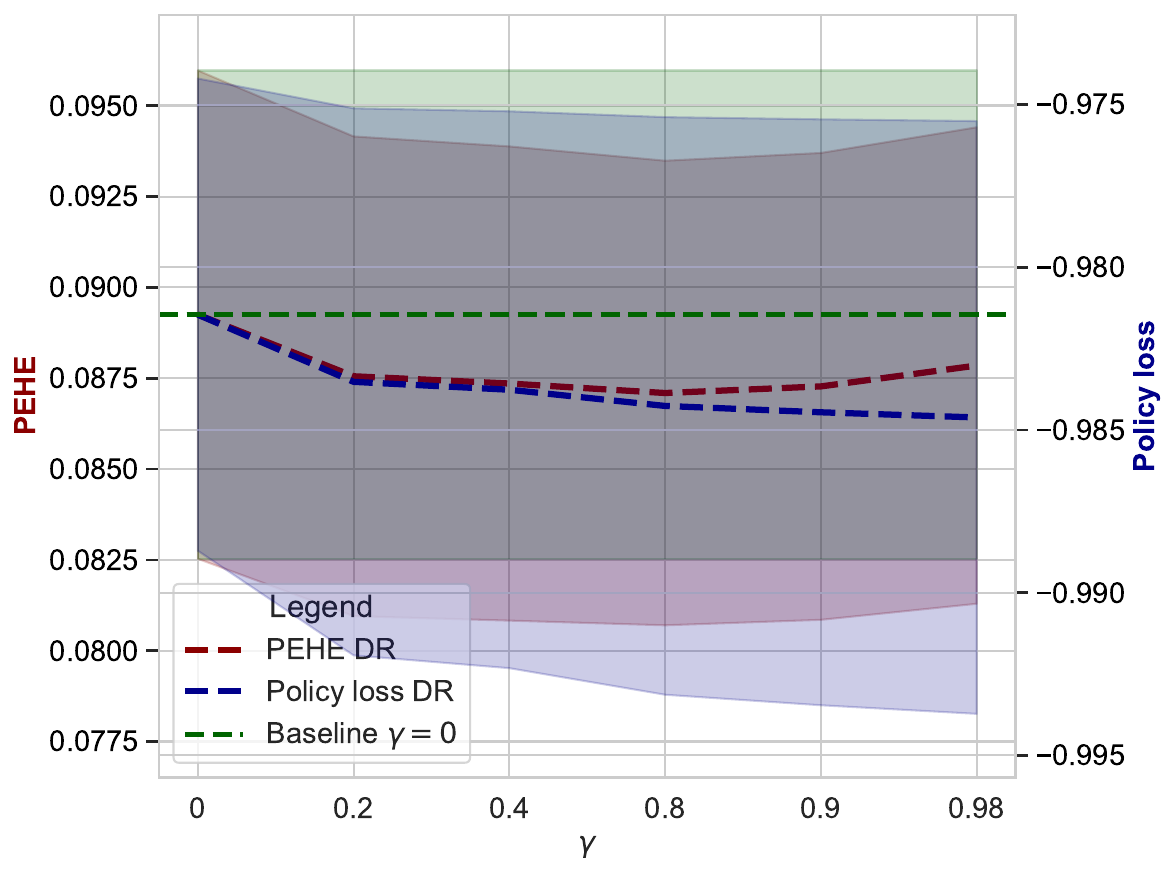}
    \end{minipage}
  }
\vspace{-0.3cm}
\caption{\textbf{Experimental results for setting B with sample splitting.} We re-ran our experiments from Fig. 5 of the main paper but use sample splitting. Shown: PEHE and policy loss over $\gamma$ (lower $=$ better) with mean and standard errors over $5$ runs. \textbf{The results are consistent with the results of our main paper.}}
\vspace{-0.2cm}
\label{fig:results_synthetic2_sample_splitting}
\end{figure}

\clearpage

\subsection{Experiments with different nuisance model baseline}

\begin{figure}[ht]
 \centering
  \resizebox{0.60\linewidth}{!}{%
    \begin{minipage}{\linewidth}
      \centering
      \includegraphics[width=0.48\linewidth]{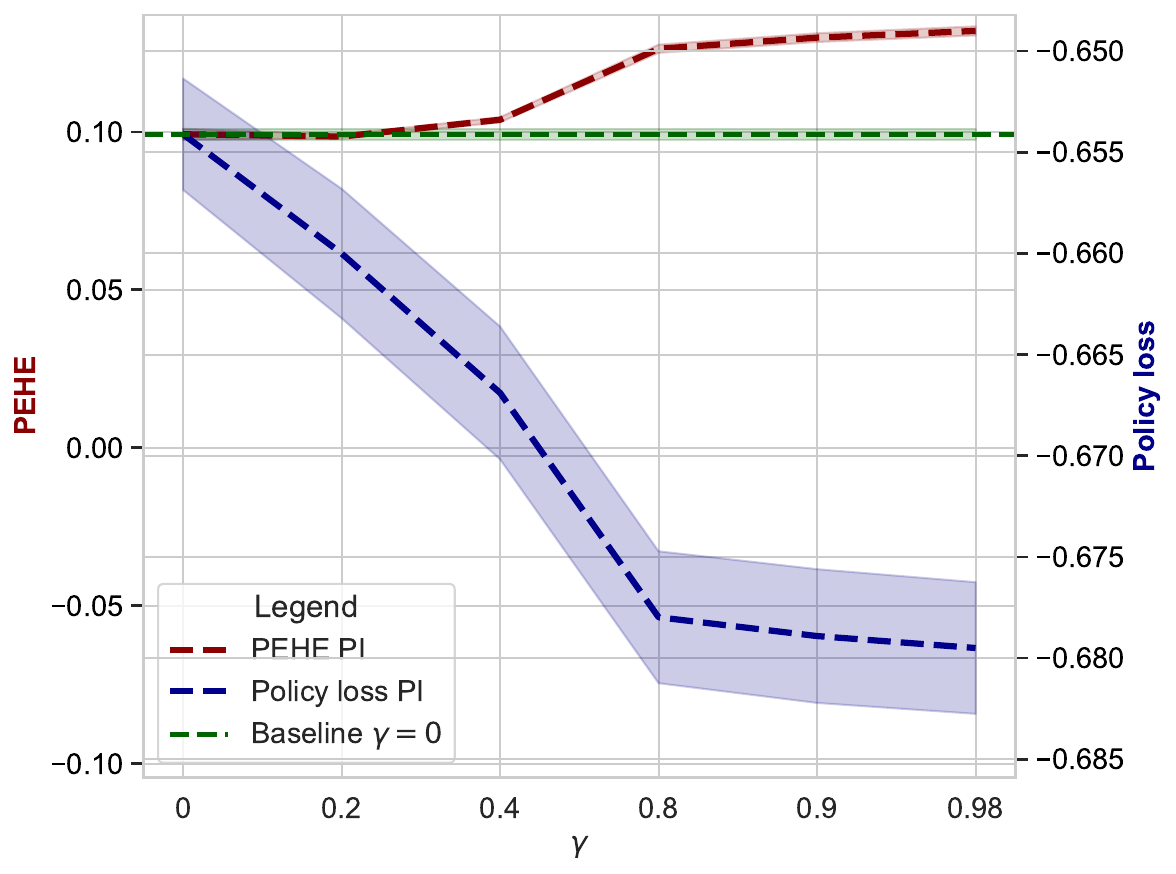}
      \includegraphics[width=0.48\linewidth]{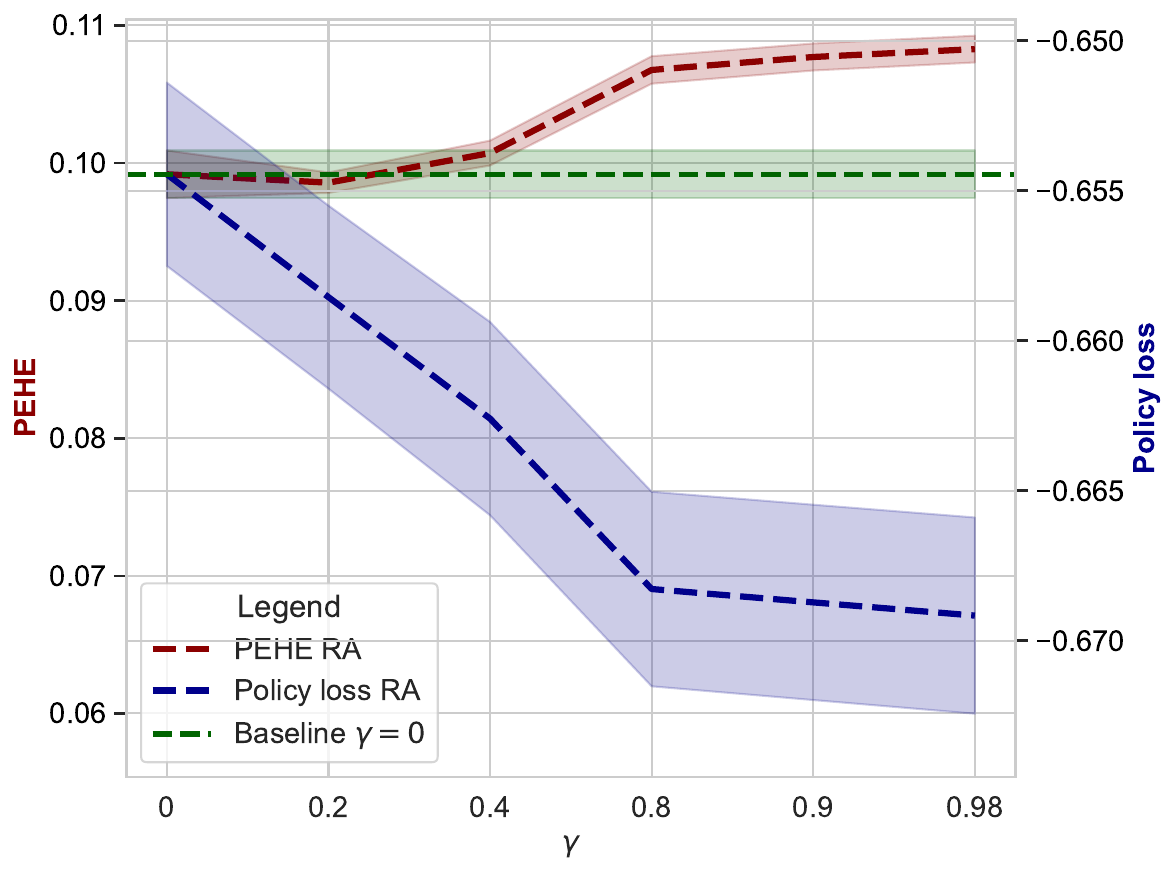}
      \includegraphics[width=0.48\linewidth]{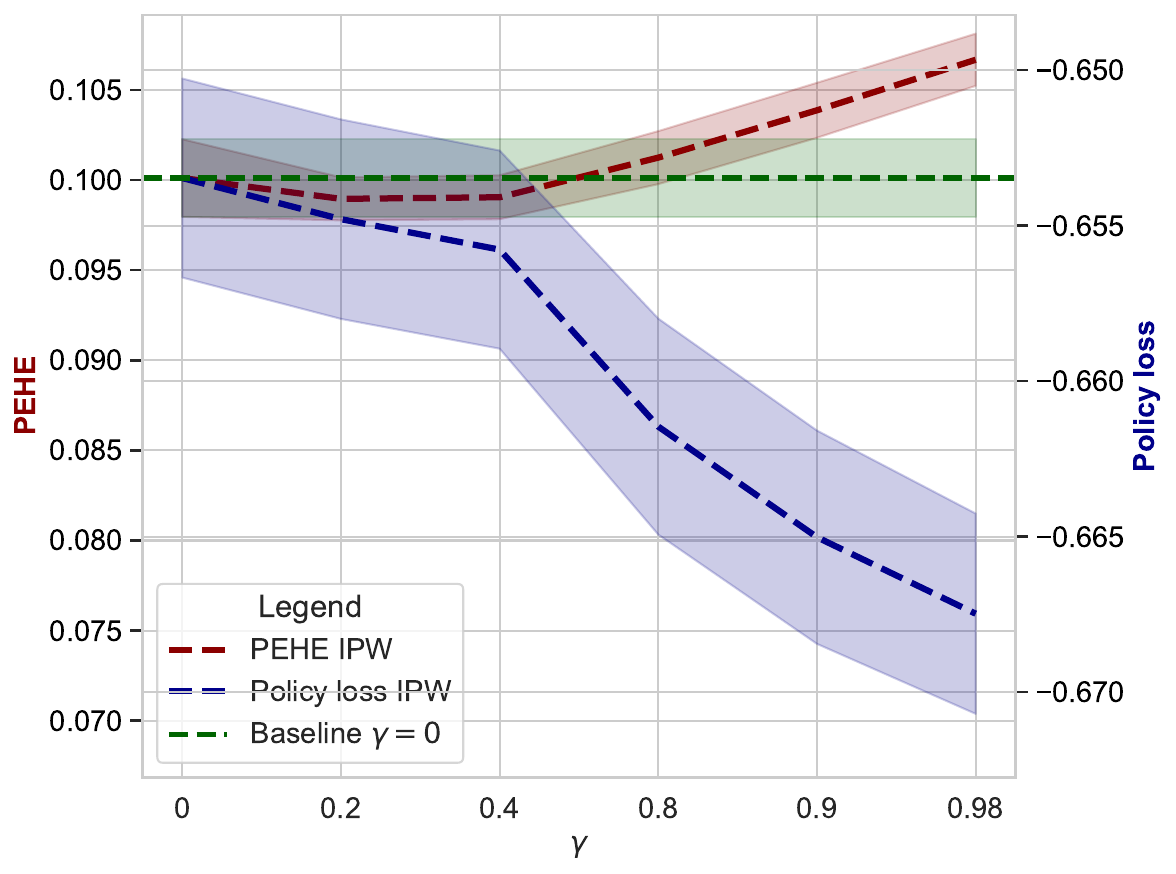}
      \includegraphics[width=0.48\linewidth]{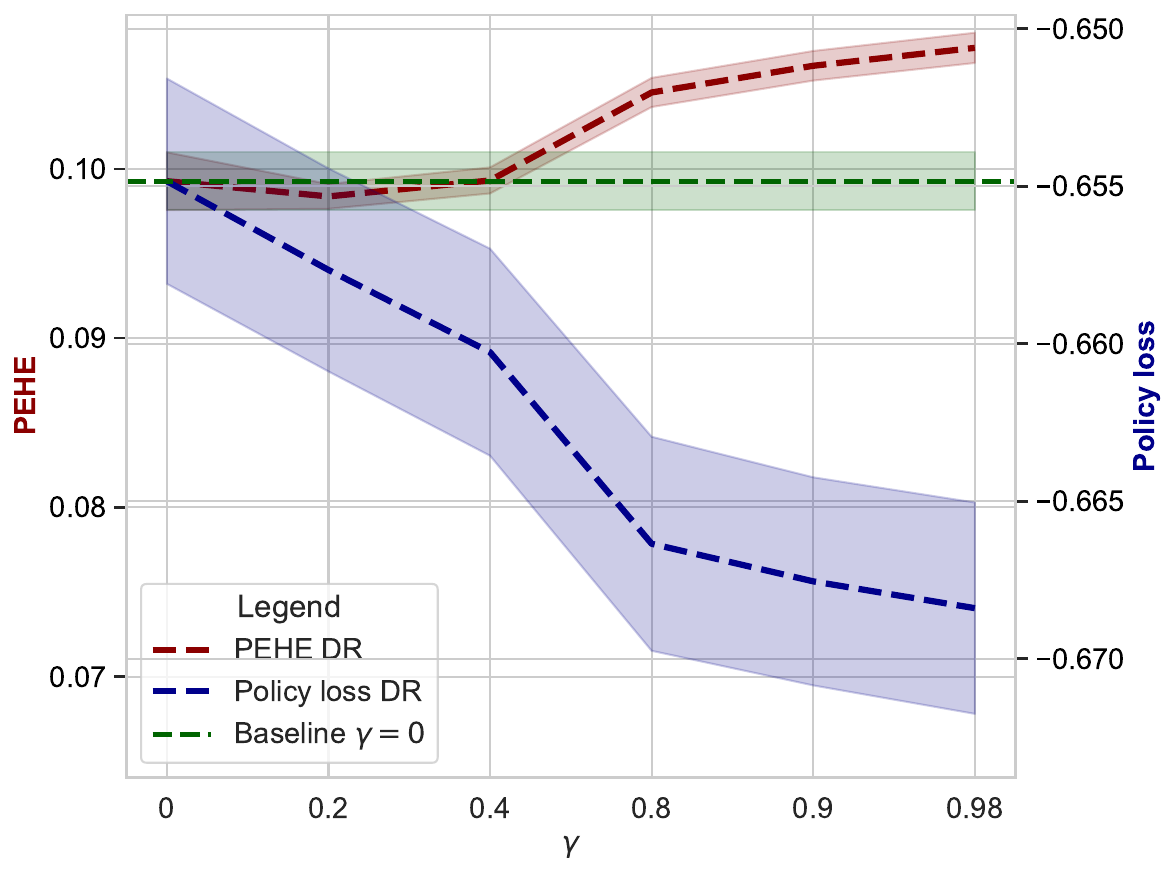}
    \end{minipage}
  }
\vspace{-0.2cm}
\caption{\textbf{Experimental results for setting A with TARNet.} We re-run our experiments from Fig. 4 of the main paper but use now TARNet (Shalit et al. 2017) for estimating the nuisance functions. Shown: PEHE and policy loss over $\gamma$ (lower $=$ better) with mean and standard errors over $5$ runs. \textbf{The results are consistent with the results of our main paper.}}
\vspace{-0.2cm}
\label{fig:results_synthetic_tarnet}
\end{figure}

\vspace{0.5cm}

\begin{figure}[ht]
 \centering
  \resizebox{0.60\linewidth}{!}{%
    \begin{minipage}{\linewidth}
      \centering
      \includegraphics[width=0.48\linewidth]{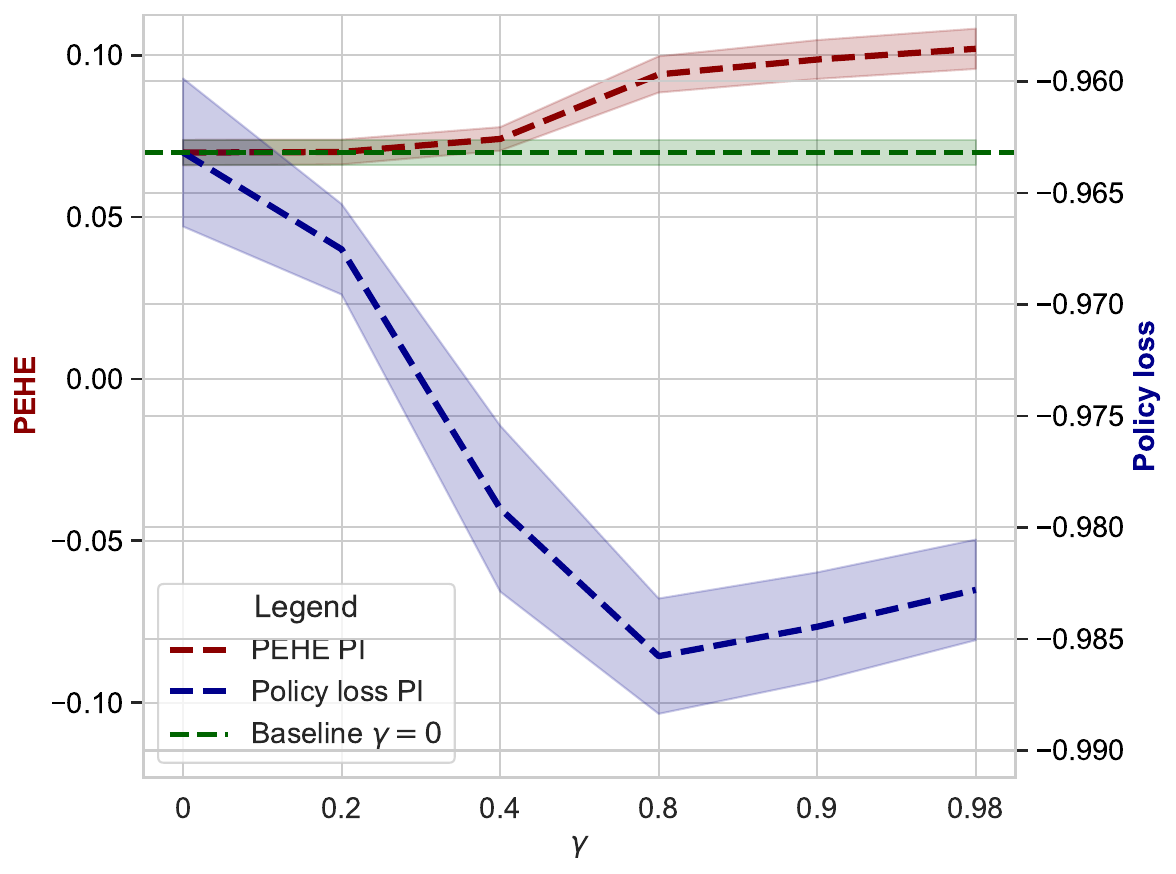}
      \includegraphics[width=0.48\linewidth]{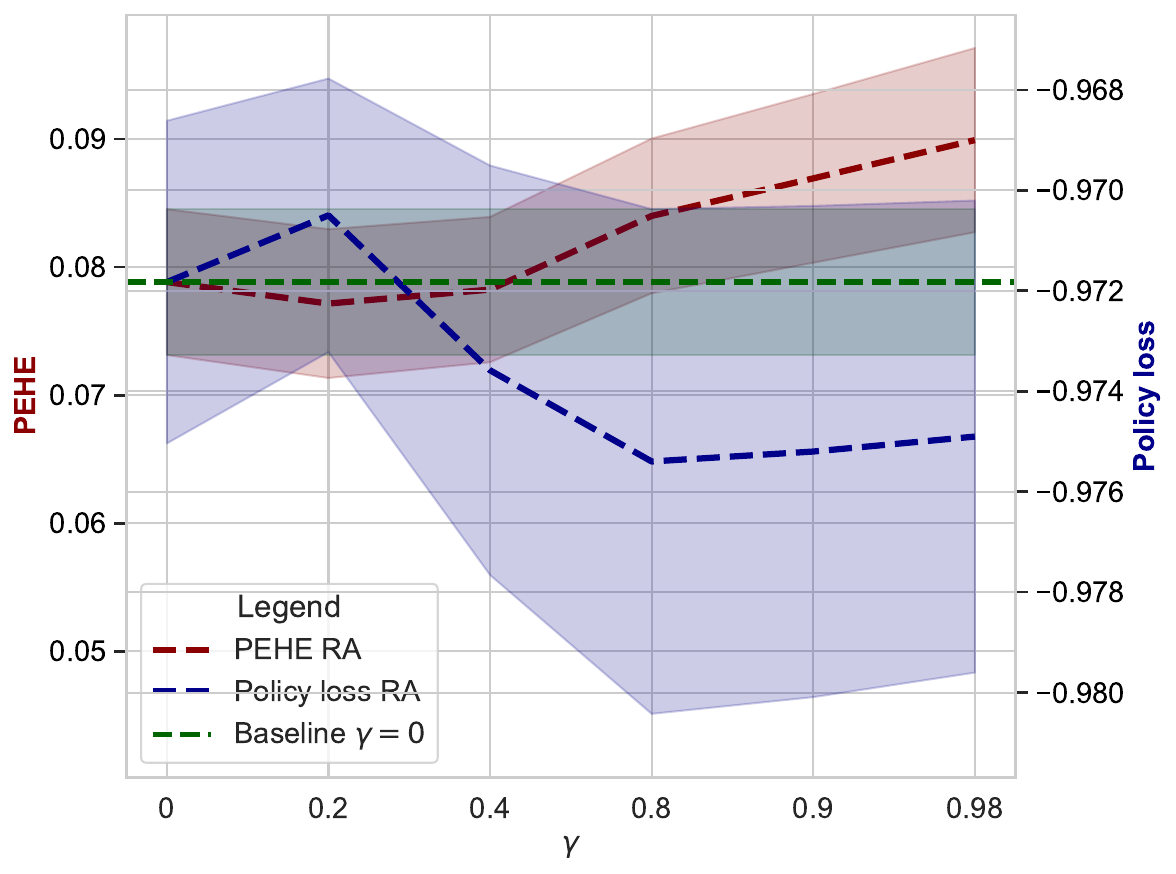}
      \includegraphics[width=0.48\linewidth]{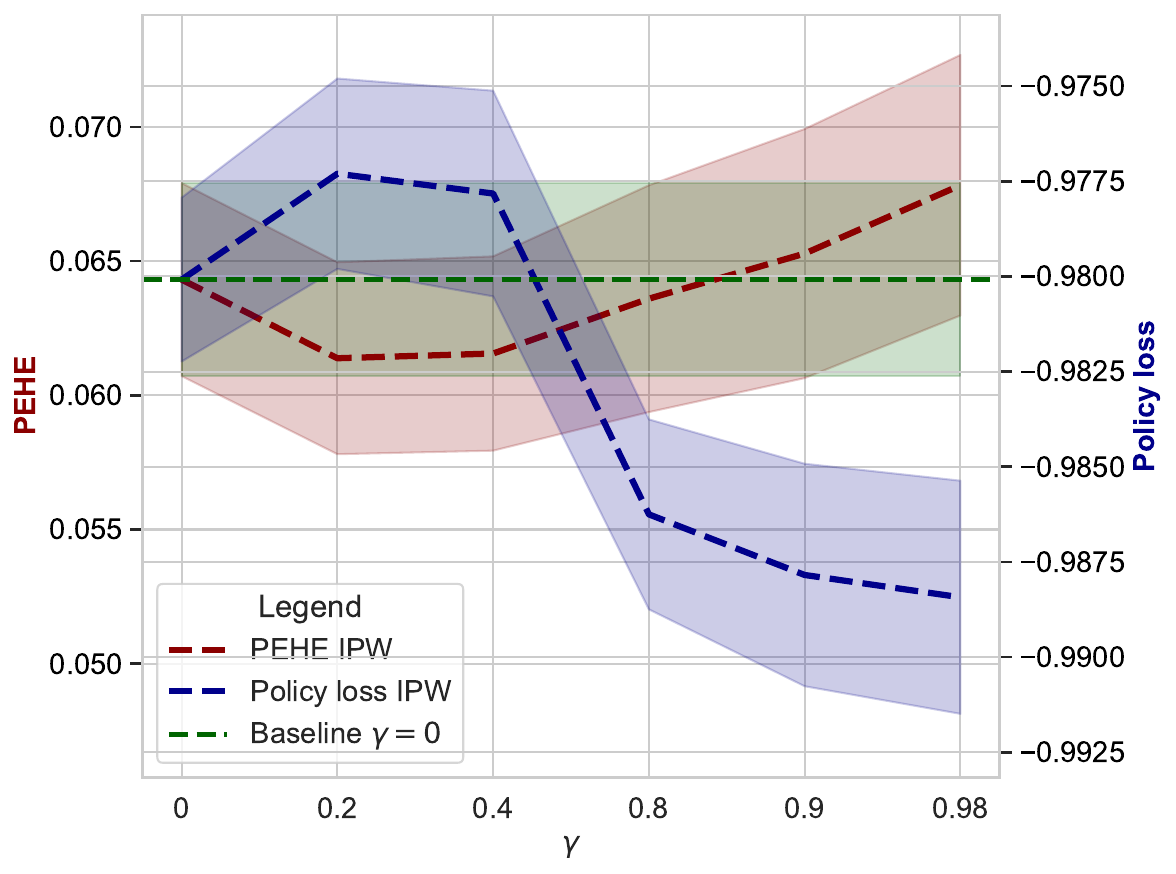}
      \includegraphics[width=0.48\linewidth]{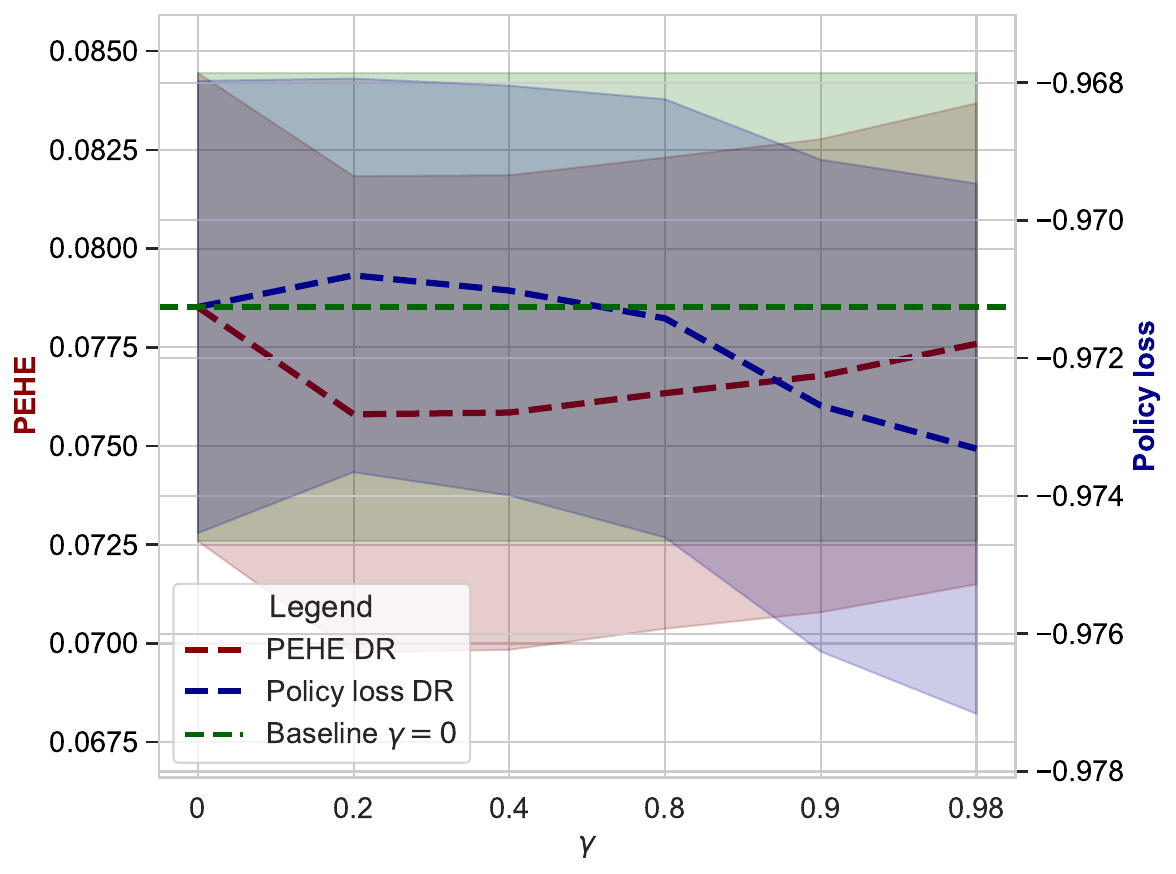}
    \end{minipage}
  }
\vspace{-0.2cm}
\caption{\textbf{Experimental results for setting B with TARNet.} We re-run our experiments from Fig. 5 of the main paper but use now TARNet (Shalit et al. 2017) for estimating the nuisance functions. Shown: PEHE and policy loss over $\gamma$ (lower $=$ better) with mean and standard errors over $5$ runs. \textbf{The results are consistent with the results of our main paper.} }
\vspace{-0.2cm}
\label{fig:results_synthetic2_tarnet}
\end{figure}

\clearpage


\end{document}